\documentclass[11pt]{article}
\usepackage[T1]{fontenc}
\usepackage{graphicx, subcaption}
\setlength{\parskip}{1em}

\usepackage{fullpage}
\usepackage{amssymb}
\usepackage{amsmath}
\usepackage{amsthm}
\usepackage{multirow}
\usepackage{enumerate}
\usepackage{graphicx}
\usepackage{mathrsfs}
\usepackage[utf8]{inputenc}
\usepackage{cite}
\usepackage{hyperref}
\usepackage{cleveref}
\usepackage{mathtools}
\usepackage{amsfonts}
\usepackage[T1]{fontenc}
\usepackage{mathpazo}
\usepackage{bm}
\usepackage{isomath}
\usepackage{verbatim}
\usepackage{changepage}
\usepackage[usenames,dvipsnames,svgnames,table]{xcolor}


\definecolor{blueviolet}{RGB}{60,50,200}
\definecolor{oliveg}{RGB}{40,200,30}
\hypersetup{colorlinks=true,       
    linkcolor=blueviolet,
    citecolor=oliveg,
}

\usepackage{tikz}
\usetikzlibrary{calc}

\usepackage{algorithmicx,algorithm}
\usepackage{algpseudocode}


\usepackage{color}

\usepackage{todonotes}


\theoremstyle{definition}

\theoremstyle{plain}
\newtheorem{lemma}{Lemma}[section]
\newtheorem{theorem}{Theorem}

\newtheorem{observation}{Observation}[section]

\makeatletter
\newcommand\footnoteref[1]{\protected@xdef\@thefnmark{\ref{#1}}\@footnotemark}
\makeatother


\newcommand{\MAB}{\textsc{Mab} }
\newcommand{\mab}{\textsc{mab}}
\newcommand{\PBA}{\texttt{PB-ALG} }
\newcommand{\pba}{\texttt{PB-ALG}}
\newcommand{\OBSALG}{\texttt{OBS-ALG} }
\newcommand{\obsalg}{\texttt{OBS-ALG}}
\newcommand{\GammaPBA}{\texttt{$\gamma$-NB-ALG} }
\newcommand{\Gammapba}{\texttt{$\gamma$-NB-ALG}}

\newcommand{\CRMPB}{\texttt{CRM-NB-ALG} }
\newcommand{\crmpb}{\texttt{CRM-NB-ALG}}
\newcommand{\R}{\mathbb{R}}
\newcommand{\N}{\mathbb{N}}
\newcommand{\UCB}{\texttt{UCB} }

\newcommand{\CUCB}{\texttt{C-UCB} }
\newcommand{\cucb}{\texttt{C-UCB}}
\newcommand{\CUCBTwo}{\texttt{C-UCB-2} }
\newcommand{\cucbtwo}{\texttt{C-UCB-2}}

	\ifdefined\DEBUG
	    \newcommand{\vineet}[1]{\textcolor{Red}{#1}}
	    \newcommand{\vishakha}[1]{\textcolor{OliveGreen}{#1}}
	    
	    \def\rem#1{{\marginpar{\raggedright\scriptsize #1}}}
	    \newcommand{\vin}[1]{\rem{\textcolor{Red}{$\bullet$ #1}}}
	    \newcommand{\vis}[1]{\rem{\textcolor{OliveGreen}{$\bullet$ #1}}}
	    \newcommand{\gau}[1]{\rem{\textcolor{NavyBlue}{$\bullet$ #1}}}
	\else
	    \newcommand{\vineet}[1]{#1}
	    \newcommand{\vishakha}[1]{#1}
	    
        \newcommand{\vin}[1]{}
	    \newcommand{\vis}[1]{}
	    \newcommand{\gau}[1]{}
	 
	\fi

\begin{document}
\title{Budgeted and Non-budgeted Causal Bandits}
\author{Vineet Nair\\
\footnotesize{Technion Israel Institute of Technology}\\
\normalsize{\tt vineet@cs.technion.ac.il}
\and {Vishakha Patil\footnote{These authors have made equal contribution and their names are alphabetically ordered.}}\\
\footnotesize{Indian Institute of Science}\\
\normalsize{\tt patilv@iisc.ac.in}
\and {Gaurav Sinha\footnotemark[\value{footnote}]}\\
\footnotesize{Adobe Research}\\
\normalsize{\tt gasinha@adobe.com}}
\maketitle
\date{}

\maketitle

\begin{abstract}

Learning good interventions in a causal graph can be modelled as a stochastic multi-armed bandit problem with side-information. First, we study this problem when interventions are more expensive than observations and a budget is specified. If there are no backdoor paths from an \emph{intervenable} node to the reward node then 
we propose an algorithm to minimize simple regret that optimally trades-off observations and interventions based on the cost of intervention. 
We also propose an algorithm that accounts for the cost of interventions, utilizes causal side-information, and minimizes the expected cumulative regret without exceeding the budget. Our cumulative-regret minimization algorithm performs better than standard algorithms that do not take side-information into account. Finally, we study the problem of learning best interventions without budget constraint in general graphs and give an algorithm that achieves constant expected cumulative regret in terms of the instance parameters when the parent distribution of the reward variable for each intervention is known. Our results are experimentally validated and compared to the best-known bounds in the current literature.

\end{abstract}
\section{Introduction}
\label{sec:Introduction}
Causal Bayesian Networks (CBN) \cite{PEARL2009} have become the popular choice to model causal relationships in many real-world systems such as online advertising, gene interaction networks, brain functional connectivity, etc. The underlying directed acyclic graph (DAG) of a CBN is called its \emph{causal graph}. The nodes of this graph are labeled by random variables\footnote{The joint distribution of these random variables factorizes over the graph.} representing the underlying system, and edges between these variables capture direct causal relationships.
Once the causal graph is known, any external manipulations on the system that forcibly fixes some target variables can be modeled via an operation called \emph{intervention}. An intervention simulates the effect of such a manipulation of the target variables on other system variables by disconnecting the target variables from their parents\footnote{A process known as \emph{causal surgery}.} and setting them to the desired value.   

Two key questions in causal learning are: 1) learning the causal graph itself, and 2) finding the intervention that optimizes some variable of interest (often called \emph{reward} variable) assuming that the causal graph is known. In this work, we focus on the second question by modeling the causal learning problem as an extension of the Stochastic Multi-armed Bandit Problem (\mab) \cite{ROBBINS1952}. The \MAB problem is a popular model used to capture decision-making in uncertain environments where a decision-maker is faced with $k$ choices (called \emph{arms}) and at each time step the decision-maker has to choose one out of the $k$ arms (\emph{pull an arm}). The arm that is pulled gives a reward drawn from an underlying distribution which is unknown to the decision-maker beforehand.

We study the \MAB problem with dependencies between the arms modelled via a causal graph. This model, called \emph{causal bandits}, 
was studied in the recent works of \cite{BareinboimFP15,LattimoreLR16,SenSDS17,SenSKDS17,LeeB18,YabeHSIKFK18,LeeB19,LU2020}, where the interventions are modelled
as the arms of the bandit and the influence of the arms on the reward is assumed to conform to a known causal graph. In addition to the possible interventions allowed, the set of arms also contains the empty intervention called the observational arm, where the algorithm does not perform any intervention on the causal graph. The goal of a causal bandit algorithm is to learn the intervention that maximizes the reward.
\begin{figure}[ht!]
    \centering
    \includegraphics[scale=0.44]{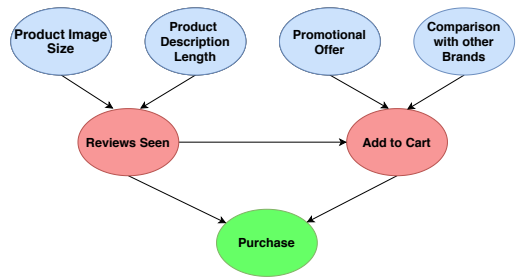}
    \caption{Causal Graph - Product Marketing}
    \label{fig:exampleGraph}
\end{figure}

We explain the causal bandit problem with a motivating example from the marketing domain for which a simple causal graph is shown in Figure \ref{fig:exampleGraph}. An e-commerce company sells a product online and makes a profit whenever a customer purchases their product. This corresponds to the green (reward) node labeled \emph{Purchase} in the causal graph. On every new customer visit, the product webpage is rendered using some values of the blue (intervenable) nodes, chosen from an underlying distribution.
For example, a customer might see a large image, small description, no promotions, and comparison with a competing brand. The red nodes capture actions taken by the customer before they make any purchase decision. For example, based on the rendered product page, a customer might want to get more information from the reviews before deciding to add the product to their shopping cart. Note that, while the blue nodes are actionable and can be manipulated to increase the chances of purchase, the red nodes are not directly manipulable and can only be passively observed for any given values of the blue nodes. For example, the company might take an action by always offering a promotion, seeing which customers might decide to skip going through the reviews and directly add the product to their shopping cart.
The objective here is to learn the intervention (on blue nodes) that maximizes the chances of the product being purchased.

However, in many situations, interventions are costly \cite{KocaogluDV17,KocaogluSB17,LindgrenKDV18,AddankiKMM20}. Consider the marketing example above where observational data from this graph can be collected via independent customer visits whereas to get an interventional sample, one needs to render a specific page configuration that would require additional expenditure. But recent works suggest that in many scenarios the effect of interventions can be efficiently estimated using observational samples  \cite{TianP02,BhattacharyyaGKMV20,PEARL2009}. Hence, in the causal bandit framework, for a fixed budget, there is a trade-off between the more economical observational arm and the high-cost interventional arm. This is because the observational arm, though less rewarding, aids in the exploration of the possibly high rewarding interventional arms. This motivates the study of observation/intervention trade-off in the budgeted bandit setting.

\subsection{Our Contributions}\label{subsec: contributions}
We study the problem of finding the best intervention in a causal graph in two settings: with and without budget constraints. Further, we study these problems with two objectives that are common in the \MAB literature: simple regret minimization and cumulative regret minimization. 

\noindent \textbf{Budgeted Setting}: In Sections \ref{sec: simple regret} and \ref{section: cum regret for parallel bandits} we consider a class of causal graphs that we call \emph{no-backdoor graphs} (see Section \ref{sec: simple regret} for the definition).
A special instance of the no-backdoor graph class is the parallel graph model defined in \cite{LattimoreLR16}: $\mathcal{G}$ consists of $M+1$ nodes, $\mathcal{X} = \{Y,X_1, \ldots , X_M\}$, and the only edges in $\mathcal{G}$ are from each $X_i$ to $Y$. For this, \cite{LattimoreLR16} propose an algorithm called the \emph{parallel bandit} algorithm (\PBA henceforth). We observe that \PBA in fact works for the more general class of no-backdoor graphs. 

We study the causal bandit problem for no-backdoor graphs in the budgeted bandit setting \cite{TRAN2012}, where a budget $B$ is specified and the ratio of the cost of the intervention to the cost of observation is $\gamma \geq 1$. \vishakha{The goal of an algorithm is to find the best intervention such that the total cost of arm pulls does not exceed $B$.}
In Section \ref{sec: simple regret}, we first study this problem with the goal of minimizing simple regret. 
Note that \PBA does not take into account the cost of interventions and is only optimal in the non-budgeted setting.
We show that when $\gamma$ is higher than a threshold (unknown to the algorithm), the simple algorithm \OBSALG that plays the observational arm every time achieves better simple regret in terms of $B$ than \pba. 
Next, we propose \GammaPBA (Algorithm \ref{algorithm: gamma parallel bandits}) which determines this unknown threshold online and successfully manages to trade-off interventions with observations for a specified budget. 


In Section \ref{section: cum regret for parallel bandits}, we study the cumulative regret minimization (CRM) problem in the above setting and give the \CRMPB algorithm. \CRMPB is based on the $\mathtt{Fractional-KUBE}$ algorithm ($\mathtt{F-KUBE}$ henceforth) given in \cite{TRAN2012} for budgeted bandits with no side-information. \CRMPB achieves constant regret if the observational arm is the optimal arm and otherwise achieves logarithmic regret which is better than that of $\mathtt{F-KUBE}$ in terms of instance-specific constants. 

\noindent \textbf{Non-Budgeted Setting}: In Section \ref{section: cumulative regret for general graphs}, we study the problem of minimizing the cumulative regret for general causal graphs in the non-budgeted setting. We assume that the distribution of parents of the reward variable for each intervention is known to the algorithm. This assumption though limiting in the practical setting is also made in the recent work of \cite{LU2020} (which studies the same problem) as well as in the work of \cite{LattimoreLR16}. 
\cite{LU2020} proposed an algorithm called \CUCB which has a worst-case regret guarantee of $O(\sqrt{k^nT)}$ where $k$ is the number of distinct values that each of the $n$ parents of the reward variable can take. For the same problem, we propose \CUCBTwo (Algorithm \ref{algorithm: modfified causal UCB}) and show it has constant expected cumulative regret in terms of instance parameters which is a significant improvement.
\section{Model and Notations}\label{sec: model and notation}
A CBN is a directed acyclic graph $\mathcal{G}$ whose nodes are labelled by random variables $\mathcal{X} = \{X_1, \ldots , X_n\}$, and a joint probability distribution $\mathbb{P}$ over $\mathcal{X}$ that factorizes over $\mathcal{G}$. For each $i\in[n]$, the 
range of $X_i$ is a finite subset of $\R$
. A node $X_j$ is called a parent of node $X_i$ if there is an edge from $X_j$ to $X_i$ in $\mathcal{G}$, i.e., changes in $X_j$ directly affect $X_i$. The set of parents of $X_i$ is denoted as $Pa(X_i)$. An intervention of size $m$ corresponds to $\mathbf{X} \subset \mathcal{X}$ such that $|\mathbf{X}| = m$, where the variables in $\mathbf{X}$ are 
set to $\mathbf{x} = (x_1, \ldots, x_m)$, and this intervention is denoted as $do(\mathbf{X} = \mathbf{x})$. For each $X_i \in \mathbf{X}$, the intervention also removes all the edges from $Pa(X_i)$ to $X_i$, and the resulting graph defines a probability distribution $\mathbb{P}(\mathbf{X}^c| do(\mathbf{X}=\mathbf{x}))$ over $\mathbf{X}^{c} = \mathcal{X}\setminus \mathbf{X}$. The empty intervention, also called \emph{observation}, corresponding to $\mathbf{X}=\phi$ is denoted as $do()$. A causal bandit algorithm is given as input a causal graph $\mathcal{G}$, the set of allowed interventions $\mathcal{A}$ (which corresponds to the set of arms), and a designated reward variable $Y\in \mathcal{X}$ where $Y\in \{0,1\}$. The distribution $\mathbb{P}$ is unknown to the algorithm. 

An algorithm for this problem is a sequential decision-making process that at each time $t$ performs an intervention $a_t \in \mathcal{A}$ and observes reward $Y_t \in \{0,1\}$. 
For each intervention $a \in \mathcal{A}$, where $a = do(\mathbf{X} = \mathbf{x})$, 
the expected reward of $a$ is denoted $\mu_{a} = E[Y \mid do(\mathbf{X} = \mathbf{x})]$. 
We study a budgeted as well as a non-budgeted variant of this problem. 
\subsection{Budgeted Causal Bandits}
In the budgeted variant of our problem, we associate a cost $\gamma > 1$ with each arm pull of a non-empty intervention $a\in \mathcal{A}\setminus \{do()\}$, whereas the cost of pulling the observation arm $do()$ is one. Hence, $\gamma$ is the ratio of the cost of a non-empty intervention to the cost of the observational arm.
The algorithm, in addition to $\mathcal{G}$ and $\mathcal{A}$, is specified a budget $B \in \mathbb{R}_+$. This model is similar to the budget limited bandit model considered in \cite{LongCCRJ10,TRAN2012} where each arm has an associated cost per pull. \vineet{Different cost models such as the linear cost model \cite{LindgrenKDV18} and identity cost model \cite{AddankiKMM20} have been studied in the causal discovery literature. The identity cost model is equivalent to considering a uniform cost $\gamma$ across all interventions. In this work, we are interested in the trade-off between observations and interventions which is more perceptible in the identity cost model. Moreover, the algorithm presented for the budgeted setting in this paper can be extended with a bit of effort to different cost models.}
We study this problem from the perspective of two objectives: minimization of simple regret and cumulative regret, both being well-studied in the bandit community. 

\textbf{Simple regret}: Let $\mathtt{ALG}$ be an algorithm for the above problem that outputs arm $a_B$ when the budget given is $B$. 
Then the simple regret of $\mathtt{ALG}$ with budget $B$, denoted $r(B)$, is 
\begin{equation}
    \label{eq:simple regret}
    r_{\mathtt{ALG}}(B) = \text{max}_{a\in \mathcal{A}}\mu_{a} - \mu_{a_B}
\end{equation}
An algorithm whose objective is to minimize the simple regret is a pure-exploration algorithm and its goal is to identify the best arm without having to restrict the number of times a sub-optimal arm may be played using the budget $B$. In many applications, we may require that a sub-optimal arm should not be pulled too many times right from the start. This motivates the definition of cumulative regret.

\textbf{Cumulative regret}: Let $G_\mathtt{ALG}(B)$ be the expected reward accumulated by algorithm $\mathtt{ALG}$ with budget $B$, and let $G_B = \max_{\mathtt{ALG}} G_\mathtt{ALG}(B)$. Then, the cumulative regret of an algorithm $\mathtt{ALG}$ with budget $B$, denoted $R_{\mathtt{ALG}}(B)$, is 
\begin{equation}
    \label{eq:cumulative regret}
   R_{\mathtt{ALG}}(B) = G_B - G_\mathtt{ALG}(B)
\end{equation}

An algorithm for cumulative regret minimization has to carefully trade-off between exploration vs. exploitation. Hence, an algorithm with good simple regret guarantees may not have good cumulative regret guarantees and vice-versa. 

\subsection{Non-budgeted Causal Bandits}\label{subsec: non-budegeted bandits}
In the non-budgeted variant of the problem, the cost associated with every intervention is the same, i.e., we can assume $\gamma = 1$ for all $a\in\mathcal{A}$. 
In Section \ref{section: cumulative regret for general graphs}, we study this problem with the objective of minimizing the expected cumulative regret when the time horizon $T$ is unknown but finite. The regret notion is defined as in Equation \ref{eq:cumulative regret} with $B=T$. Observe that $G_T = T\cdot \max_{a\in \mathcal{A}} \mu_a$. The cumulative regret of an algorithm $\mathtt{ALG}$ after $T$ rounds, denoted $R_{\mathtt{ALG}}(T)$, is then defined as
\begin{equation}
    \label{eq:cumulative regret without cost}
    R_{\mathtt{ALG}}(T) = T\cdot\max_{a\in \mathcal{A}} \mu_a - \sum_{t\in [T]} \mu_{a_t}~.
\end{equation}
The goal of any algorithm for such a setting is to minimize the expected cumulative regret $E[R_{\mathtt{ALG}}(T)]$ where the expectation is taken over the randomness in the rewards as well as in the algorithm.

\section{Budgeted \mab: No-backdoor Graphs}\label{sec: simple regret}

In this section and Section \ref{section: cum regret for parallel bandits}, we assume that the interventions are of size $1$. Formally, let $\{X_1, \ldots,$  $X_M\}$ $\subseteq \mathcal{X}$ be the set of intervenable nodes such that $X_i \in \{0,1\}$ for all $i \in [M]$. An intervention in this setting is defined as explicitly setting the value of a single node $X_i$ as either $0$ or $1$. When not intervened upon, $X_i \sim \text{Bernoulli}(p_i)$. Hence, we have $2M+1$ interventions in total: $2M$ interventions correspond to setting each of the $M$ variables $X_i$ to either $0$ or $1$, denoted $do(X_i = 0)$ and $do(X_i = 1)$ respectively, and the last intervention corresponds to the empty intervention, $do()$. Moreover, we assume that there are no backdoor paths from $X_i$ to the reward variable. This implies $E[Y \mid do(X_i=x)] = E[Y\mid X_i=x] = \mu_{i,x}$ (see Section 3.3.1 \cite{Pearl00}). We call a causal graph $\mathcal{G}$ satisfying this property as a \emph{no-backdoor graph} ( $\mathtt{NB}$-graph).

For ease of notation, we denote the intervention $do(X_i = x)$ by $a_{i,x}$ where $i\in[M]$ and $x\in \{0,1\}$, and the empty intervention as $a_0$. The set of interventions is then $\mathcal{A} = \{a_{i,x} \mid i\in [M], x\in \{0,1\}\} \uplus \{a_0\}$. The expected reward for the intervention $a_{i,x}$ and $a_0$ are $\mu_{i,x} = E[Y\mid X_i = x]$ and $\mu_0 = E[Y]$ respectively. Throughout Sections \ref{sec: simple regret} and \ref{section: cum regret for parallel bandits}, $i\in [M]$ and $x\in \{0,1\}$. Also, we use $a$ to denote an intervention in $\mathcal{A}$ when we do not differentiate between $a_{i,x}$ and $a_0$.
We study the budgeted causal bandit problem for no-backdoor graphs. As stated in Section \ref{sec: model and notation}, an algorithm for this problem is given as input the graph $\mathcal{G}$, the set of intervenable nodes $\{X_1, \ldots, X_M\}$, a budget $B$, and $\gamma$ which is the cost for pulling an arm $a_{i,x}$. The algorithm does not know $p_i$ for any $i$. Note that if $\gamma \geq B$ then trivially the algorithm can only make observations.

As stated above, $X_i \sim \text{Bernoulli}(p_i)$. 
Let $p_{i,1} = \mathbb{P}(X_i = 1) = p_i$ and $p_{i,0} = 1- p_{i,1}$. We assume $p = \min_{(i,x)} \{p_{i,x}\} >0$, \vineet{which is reasonable in situations where the best arm is observable, and if the best arm is not observable then observational samples are not useful}. Also let $\mathbf{p} = (p_1 \ldots p_M)$. \pba, the algorithm by \cite{LattimoreLR16} minimizes the expected simple regret for the parallel graph model in the non-budgeted setting. We observe that \PBA works for any no-backdoor graph model. \vineet{In particular, \PBA when applied to the budget setting plays the observational arm $a_0$ for the first $\frac{B}{(1+\gamma)}$ rounds and in the remaining $\frac{B}{(1+\gamma)}$ rounds plays the interventional arms that were observed fewer times during the arm pulls of $a_0$ (because the probability of observing them when $a_0$ is pulled is low).} 
The simple regret guarantee of \PBA depends on the quantity $m(\mathbf{p})$ which captures the number of $X_i$'s such that $\text{min}(p_i,1-p_i) \ll 1/2$, \vineet{(i.e. the number of arms that would be observed fewer number of times when the arm $a_0$ is pulled in the initial rounds)}. 
Formally, for $\tau \in [2,M]$ let $I_{\tau} = \big\{i \mid \min_x \{p_{i,x}\} <\frac{1}{\tau}\big\}$. Then $m(\mathbf{p}) = \min\{\tau \mid |I_\tau| \leq \tau\}$ and the simple regret of the \PBA algorithm is $O\Big(\sqrt{\frac{\gamma m(\mathbf{p})}{B}\log (\frac{MB}{\gamma m})}\Big)$. 

In Section \ref{subsec: obs algorithm}, we show that for $\gamma = \Omega(\frac{1}{p\cdot m(\mathbf{p})})$ the simple algorithm that plays the observation arm for $B$ rounds achieves better expected simple regret than \pba. Since $p_i$ for all $i$ is unknown, the threshold $\frac{1}{p\cdot m(\mathbf{p})}$ is a priori unknown to an algorithm. Hence, in Section \ref{subsec: obs intervention tradeoff} we propose an algorithm that estimates this threshold online, and trades-off between interventions and observations dependent on $\gamma$ and the threshold to minimize the expected simple regret. 

\subsection{The observational algorithm}\label{subsec: obs algorithm}
Here, we analyze the simple-regret of the observational algorithm (\obsalg) which plays the arm $a_0$ for all the rounds, and at the end of $B$ rounds outputs the arm $a\in \mathcal{A}$ with the highest empirical mean estimate. The empirical estimate of $\mu_{i,x}$ is computed as the average of the rewards accrued in those rounds where $X_i$ was sampled as $x$. Theorem \ref{theorem: simple regret observational algorithm} shows the dependence of the expected simple regret of \OBSALG on $p$.  
\begin{theorem}\label{theorem: simple regret observational algorithm}
The expected simple regret of \OBSALG with budget $B$ is $O\left(\sqrt{\frac{1}{pB}\log (pMB)}\right)$.
\end{theorem}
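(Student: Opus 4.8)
The plan is the standard fixed-budget pure-exploration analysis applied to \obsalg: fix an accuracy $\epsilon$, pay $2\epsilon$ on the ``good event'' that every arm's empirical mean is $\epsilon$-accurate, and pay the trivial bound $1$ (valid since $Y\in\{0,1\}$, so $\mu_a\in[0,1]$ and hence $r(B)\le 1$) on its complement, which I will show has probability $O(\epsilon)$. The one genuine wrinkle is that for an interventional arm $a_{i,x}$ only a random number $N_{i,x}\sim\mathrm{Bin}(B,p_{i,x})$ of the $B$ observations are informative (namely those rounds in which $X_i$ happened to be sampled to $x$), and $N_{i,x}$ can be as small as $0$; absorbing this randomness is precisely what the $\log(pMB)$ slack in the target bound is for.

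Set $\epsilon=\sqrt{c_1\log(pMB)/(pB)}$ for a constant $c_1$. We may assume $pB\ge c_0\log(pMB)$ for a large constant $c_0\ge 4c_1$ (so that $\epsilon^2\le 1/4$) and $pMB\ge 2$, since otherwise $\sqrt{\tfrac1{pB}\log(pMB)}=\Omega(1)$ and the claim follows from $r(B)\le 1$. Let $\mathcal E$ be the event that $|\hat\mu_a-\mu_a|\le\epsilon$ for every $a\in\mathcal A$, where $\hat\mu_a$ is the empirical mean maintained by \obsalg\ (with $\hat\mu_{i,x}$ set arbitrarily when $N_{i,x}=0$, a case subsumed below). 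On $\mathcal E$, if $a^\star=\arg\max_a\mu_a$ and $a_B=\arg\max_a\hat\mu_a$ is the output arm, then $\mu_{a_B}\ge\hat\mu_{a_B}-\epsilon\ge\hat\mu_{a^\star}-\epsilon\ge\mu_{a^\star}-2\epsilon$, i.e.\ $r(B)\le 2\epsilon$. Since $r(B)\in[0,1]$ always, $E[r(B)]\le 2\epsilon+\Pr[\mathcal E^c]$, so it remains to prove $\Pr[\mathcal E^c]=O(\epsilon)$.

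For the observation arm, $\hat\mu_0$ is the mean of $B$ i.i.d.\ $\{0,1\}$ rewards of mean $\mu_0$, so Hoeffding gives $\Pr[|\hat\mu_0-\mu_0|>\epsilon]\le 2e^{-2B\epsilon^2}\le 2(pMB)^{-2c_1}$. For an interventional arm $a_{i,x}$, condition on $N_{i,x}$: the rewards of the rounds with $X_i=x$ are i.i.d.\ with mean $E[Y\mid X_i=x]=\mu_{i,x}$ (which is the true interventional mean by the no-backdoor property), so $\Pr[|\hat\mu_{i,x}-\mu_{i,x}|>\epsilon\mid N_{i,x}=n]\le 2e^{-2n\epsilon^2}$, also trivially when $n=0$. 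Averaging over $N_{i,x}\sim\mathrm{Bin}(B,p_{i,x})$ through the binomial moment generating function,
\[
\Pr\bigl[|\hat\mu_{i,x}-\mu_{i,x}|>\epsilon\bigr]\ \le\ 2\,E\bigl[e^{-2\epsilon^2 N_{i,x}}\bigr]\ =\ 2\bigl(1-p_{i,x}(1-e^{-2\epsilon^2})\bigr)^{B}\ \le\ 2e^{-p_{i,x}\epsilon^2 B}\ \le\ 2(pMB)^{-c_1},
\]
using $1-e^{-2\epsilon^2}\ge\epsilon^2$ for $\epsilon^2\le 1/4$ and $p_{i,x}\epsilon^2 B=(p_{i,x}/p)\,c_1\log(pMB)\ge c_1\log(pMB)$. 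A union bound over the $2M+1$ arms gives $\Pr[\mathcal E^c]\le (2M+1)\cdot 2(pMB)^{-c_1}+2(pMB)^{-2c_1}=O\bigl(M^{1-c_1}(pB)^{-c_1}\bigr)$, which for $c_1=2$ is $O\bigl((pB)^{-2}\bigr)=O(\epsilon)$ (using $pB\ge 1$ and $pMB\ge 2$). Hence $E[r(B)]=O(\epsilon)=O\bigl(\sqrt{\tfrac1{pB}\log(pMB)}\bigr)$, as claimed.

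The step I expect to be the crux is this control of $N_{i,x}$ inside the union bound: $M$ may be as large as $e^{\Theta(pB)}$ while the target is still meaningful, so a crude ``$N_{i,x}\ge pB/2$ with high probability (Chernoff), then Hoeffding'' split requires one to check that the Chernoff failure term $e^{-\Theta(pB)}$ still beats the $2M$ coming from the union bound. The binomial-MGF identity above circumvents this by fusing the ``too few informative rounds'' and ``empirical mean inaccurate'' events into a single $e^{-\Theta(p\epsilon^2 B)}=(pMB)^{-\Theta(c_1)}$ bound that comfortably dominates $2M$ once $c_1\ge 2$; the remainder is bookkeeping with the constants $c_0,c_1$ and the reduction to the non-trivial regime.
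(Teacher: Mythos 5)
Your proof is correct, and its outer skeleton coincides with the paper's: define a good event on which every empirical mean is $\varepsilon$-accurate, argue regret at most $2\varepsilon$ there and at most $1$ otherwise, and union bound over the $2M+1$ arms. Where you genuinely diverge is at the crux you yourself identify, namely the random number $N_{i,x}$ of informative rounds for an interventional arm. The paper (Lemma \ref{lemma: theorem1 bounds at the end of B rounds}, parts 2--3) uses the two-stage split you call crude: a Chernoff bound showing $\widehat{p}_{i,x}B > pB/2$ except with probability $2e^{-\varepsilon^2 pB}$, followed by a conditional Hoeffding bound and the law of total probability. Your worry that the count-deviation term might not beat the factor $M$ from the union bound is resolved there by deliberately weakening the deviation level to $\varepsilon B\sqrt{p/2}$ (valid once $\varepsilon \le \sqrt{p/2}$, i.e.\ $B \ge L$ with $L$ an instance-dependent constant), so that the count-failure probability is itself $e^{-\varepsilon^2 pB} \approx (pMB)^{-\Theta(1)}$ and comfortably survives the union bound. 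Your binomial-MGF computation fuses the ``too few samples'' and ``inaccurate mean'' events into the single bound $2e^{-p_{i,x}\varepsilon^2 B}$, which buys a cleaner argument: no separate count event, no conditioning on an inequality about $\widehat{p}_{i,x}$, and the only largeness assumption is the transparent reduction $pB \ge c_0\log(pMB)$ (outside of which the target bound is anyway $\Omega(1)$ up to the degenerate $pMB\approx 1$ corner, which the paper also glosses over via $B\ge L$). What the paper's route buys in exchange is that it stays entirely within Hoeffding and establishes the ``sufficiently many observational samples'' event explicitly, a conditioning pattern it reuses verbatim in later proofs (e.g.\ Lemma \ref{lemma: theorem 3 bounding mu's}). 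Minor cosmetic points only: your union bound counts the observational arm twice (harmless), and the $n=0$ case is handled exactly as you say.
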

The proof of Theorem \ref{theorem: simple regret observational algorithm} is in Section \ref{secappendix: proof of obs simple regret}. Theorem \ref{theorem: simple regret observational algorithm} is proved by crucially leveraging the fact that the arm $a_0$ aides in the exploration of all the other $2M$ arms, which is the side-information available in $\mathtt{NB}$-graphs. Observe that the guarantee of observational algorithm is better than that of \PBA in \cite{LattimoreLR16} if $\gamma = \Omega(\frac{1}{p\cdot m(\mathbf{p})})$.

\subsection{Observation-Intervention Trade-off}\label{subsec: obs intervention tradeoff}
\Gammapba~ (Algorithm \ref{algorithm: gamma parallel bandits}) trades-off between observations and interventions depending on the value of $\gamma$ to minimize the expected simple regret. The idea behind \Gammapba~ is that if $\gamma$ is larger than the threshold $\frac{1}{p\cdot m(\mathbf{p})}$ then performing only observations gives a better regret (as stated at the end of Section \ref{subsec: obs algorithm}), whereas if $\gamma$ is less than this threshold then the algorithm follows the strategy of \PBA by playing the interventions in set $A$ (see step 11 of \GammaPBA) for an equal number of times in the remaining rounds. 
At steps 13-14, the empirical estimates of only the arms in $A$ are updated. Since $\mathbf{p}$ and $p$ are not known a priory, the algorithm has to estimate the threshold online as done in Step 6 of \Gammapba. Note that at step 6, $\widehat{\mathbf{p}} = (\widehat{p}_{1} \ldots \widehat{p}_M)$, where $\widehat{p}_i = \widehat{p}_{i,1}$, and $m(\widehat{\mathbf{p}})$ is defined similar to $m(\mathbf{p})$.  

\begin{algorithm}[ht!]
\caption{\Gammapba} \label{algorithm: gamma parallel bandits}
\begin{algorithmic}
\State INPUT: $\mathcal{G}$, $B$, and $\gamma$.
\end{algorithmic}
\begin{algorithmic}[1]
\State Play arm $a_0$ for the first $B/2$ rounds.
\State For each $a\in \mathcal{A}$, compute \\
\hspace{0.5cm}$\widehat{\mu}_{i,x} = \frac{\sum_{t=1}^{B/2}Y_t\cdot \mathbb{1}\{X_i = x\}}{\sum_{t=1}^{B/2}\mathbb{1}\{X_i = x\}}$,\hspace{0.3cm}$\widehat{\mu}_0 = \frac{2\sum_{t=1}^{B/2} Y_t}{B}$
\State For each $(i,x)$, compute\\
\hspace{0.5cm} $\widehat{p}_{i,x} = \frac{2}{B}\sum_{t=1}^{B/2}\mathbb{1}\{X_i = x\}$, and $\widehat{p} = \min_{i,x}\widehat{p}_{i,x}$
\If{$\widehat{p} \cdot m(\widehat{\mathbf{p}}) \geq \frac{1}{\gamma}$}
    \State Play arm $a_0$ for the remaining $B/2$ rounds.
    \State For each $a\in \mathcal{A}$, compute \\
    \hspace{0.7cm}$\widehat{\mu}_{i,x} = \frac{\sum_{t=1}^{B}Y_t\cdot \mathbb{1}\{X_i = x\}}{\sum_{t=1}^{B}\mathbb{1}\{X_i = x\}}$,\hspace{0.3cm}$\widehat{\mu}_0 = \frac{\sum_{t=1}^T Y_t}{B}$
\Else
    \State Compute $A = \{a_{i,x} \mid \widehat{p}_{i,x} < \frac{1}{m(\widehat{\mathbf{p}})} \}$.
    \State Play each arm $a_{i,x} \in A$, for $\frac{B}{2\gamma |A|}$ rounds. 
    \State For each $a_{i,x} \in A$, set \\ \hspace{1.5cm}$\widehat{\mu}_{i,x} = \frac{2\gamma |A|}{B}\sum_{t=\frac{B}{2}+1}^{\frac{B}{2}+ \frac{B}{2\gamma}} Y_t\cdot \mathbb{1}\{a_t = a_{i,x}\}$.
\EndIf
\State Output $\arg\max_{a\in \mathcal{A}} \widehat{\mu}_a$.
\end{algorithmic}
\end{algorithm}
In Theorem \ref{theorem: simple regret of gammapba}, we bound the expected simple regret of \GammaPBA which depends upon $\gamma$ and the value of the threshold.
\begin{theorem} \label{theorem: simple regret of gammapba}
If $\gamma \geq \frac{1}{p\cdot m(\mathbf{p})}$ then the expected simple regret of \GammaPBA is $O\left(\sqrt{\frac{1}{pB}\log (pMB)}\right)$, and if $\gamma \leq \frac{1}{p\cdot m(\mathbf{p})}$ then it is $O\left(\sqrt{\frac{\gamma\cdot m(\mathbf{p})}{B}\log \frac{MB}{\gamma \cdot m(\mathbf{p})}}\right)$.
\end{theorem}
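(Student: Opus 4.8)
The plan is to condition on a high-probability event on which the empirical quantities computed in Steps~2--4 of \Gammapba~(Algorithm~\ref{algorithm: gamma parallel bandits}) are accurate, and then to argue that \emph{whichever} branch the algorithm enters (the observational branch at Step~6 or the interventional branch at Steps~9--11) its simple regret is bounded by the claimed quantity. The crucial point will be that the data-driven test at Step~5 can only steer the algorithm into the ``wrong'' branch when $\gamma\cdot p\cdot m(\mathbf{p})=\Theta(1)$, and in that narrow regime the two bounds in the theorem coincide up to constant factors.

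First I would set up concentration. After the first $B/2$ observational pulls, $\sum_{t=1}^{B/2}\mathbb{1}\{X_i=x\}$ is a sum of $\mathrm{Bernoulli}(p_{i,x})$ variables with mean $p_{i,x}B/2$, so a multiplicative Chernoff bound plus a union bound over the $2M$ pairs $(i,x)$ yields an event $\mathcal{E}_p$ with $\Pr[\mathcal{E}_p^{c}]\le\delta_0$ on which $\widehat{p}_{i,x}\in[\tfrac23 p_{i,x},\tfrac32 p_{i,x}]$ for all $(i,x)$, valid once $B=\Omega(\tfrac1p\log\tfrac{M}{\delta_0})$ (for smaller $B$ both claimed bounds are $\Omega(1)$ and hence vacuous). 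A short combinatorial argument, tracking how $I_\tau=\{i:\min_x p_{i,x}<1/\tau\}$ changes when each $p_{i,x}$ is perturbed by a constant factor, shows that on $\mathcal{E}_p$ one has $m(\widehat{\mathbf{p}})=\Theta(m(\mathbf{p}))$, hence $\widehat{p}\cdot m(\widehat{\mathbf{p}})=\Theta(p\cdot m(\mathbf{p}))$. Consequently, on $\mathcal{E}_p$, entering the observational branch forces $\gamma=\Omega(\tfrac1{p\,m(\mathbf{p})})$ while entering the interventional branch forces $\gamma=O(\tfrac1{p\,m(\mathbf{p})})$; so the algorithm can pick the suboptimal branch only when $\gamma\,p\,m(\mathbf{p})=\Theta(1)$, and in that case the identity $\tfrac1{pB}=\tfrac{\gamma\,m(\mathbf{p})}{B}\cdot\tfrac1{\gamma\,p\,m(\mathbf{p})}$ together with the comparison of the logarithms shows the two target bounds differ only by constants.

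Next I would analyze each branch. On the observational branch the algorithm is literally \obsalg~run with budget $B$ (it plays $a_0$ for all $B$ rounds and outputs the empirical argmax over all $B$ samples), so since simple regret is nonnegative, $\mathbb{E}[r\cdot\mathbb{1}_{\text{obs}}]\le\mathbb{E}[r_{\obsalg}]=O(\sqrt{\tfrac1{pB}\log(pMB)})$ by Theorem~\ref{theorem: simple regret observational algorithm}. On the interventional branch, on $\mathcal{E}_p$ every arm is sampled at least $N_{\min}=\Omega(B/(\gamma\,m(\mathbf{p})))$ times: an arm $a_{i,x}\in A$ gets exactly $B/(2\gamma|A|)$ pulls with $|A|=|\widehat{I}_{m(\widehat{\mathbf{p}})}|\le m(\widehat{\mathbf{p}})$; an arm $a_{i,x}\notin A$ is observed $\widehat{p}_{i,x}B/2\ge B/(2m(\widehat{\mathbf{p}}))$ times in the first phase; and $a_0$ is observed $B/2$ times. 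A Hoeffding bound per $\widehat{\mu}_a$ and a union bound over the $2M+1$ arms then give the standard uniform-exploration guarantee: with failure probability set to $\approx(MN_{\min})^{-1}$, the output $\widehat{a}=\arg\max_a\widehat{\mu}_a$ has $\mu_{a^*}-\mu_{\widehat{a}}\le 2\max_a w_a=O(\sqrt{\log(MN_{\min})/N_{\min}})=O(\sqrt{\tfrac{\gamma\,m(\mathbf{p})}{B}\log\tfrac{MB}{\gamma\,m(\mathbf{p})}})$ in expectation; this is just the analysis of \pba~from \cite{LattimoreLR16} with the $B/2$--$B/2$ split, which only affects constants. Combining, $\mathbb{E}[r]\le\mathbb{E}[r_{\obsalg}]\cdot\mathbb{1}_{[\text{obs reachable}]}+\mathbb{E}[r\,\mathbb{1}_{\text{int}\cap\mathcal{E}_p}]+\Pr[\mathcal{E}_p^c]$. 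If $\gamma\ge\tfrac1{p\,m(\mathbf{p})}$ the first term is already the target $O(\sqrt{\tfrac1{pB}\log(pMB)})$ and on $\text{int}\cap\mathcal{E}_p$ we have $\gamma\,p\,m(\mathbf{p})=\Theta(1)$ so the second term collapses to the same quantity; if $\gamma\le\tfrac1{p\,m(\mathbf{p})}$ the second term is the target $O(\sqrt{\tfrac{\gamma m(\mathbf{p})}{B}\log\tfrac{MB}{\gamma m(\mathbf{p})}})$, while either the instance has $p$ small enough that on $\mathcal{E}_p$ the observational branch is unreachable (so that term contributes only $\Pr[\mathcal{E}_p^c]$) or again $\gamma\,p\,m(\mathbf{p})=\Theta(1)$ and $\mathbb{E}[r_{\obsalg}]$ collapses to the target. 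Choosing $\delta_0$ polynomially small (matching the Hoeffding/Chernoff confidence levels, which is what produces the $\log(pMB)$ and $\log\tfrac{MB}{\gamma m(\mathbf{p})}$ factors) makes the $\Pr[\mathcal{E}_p^c]$ term lower order.

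The main obstacle is the first step: the map $\mathbf{p}\mapsto m(\mathbf{p})$ is defined by a non-smooth combinatorial rule, so one must verify both that constant-multiplicative perturbations of the $p_{i,x}$ change $m$ by at most a constant factor, and --- more delicately --- that the \emph{only} way the estimated test at Step~5 can route the algorithm into the suboptimal branch is in the regime $\gamma=\Theta(1/(p\,m(\mathbf{p})))$ where the two target bounds agree up to constants. Once that boundary behaviour is pinned down, the per-branch analyses are routine: Theorem~\ref{theorem: simple regret observational algorithm} for the observational branch, and a standard Chernoff/Hoeffding-plus-union-bound uniform-exploration argument (with $N_{\min}=\Omega(B/(\gamma m(\mathbf{p})))$) for the interventional branch.
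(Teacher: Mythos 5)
Your proposal is correct and follows essentially the same route as the paper's proof: a concentration event on the first $B/2$ rounds under which $m(\widehat{\mathbf{p}})=\Theta(m(\mathbf{p}))$ (the paper's Lemmas \ref{lemma: bounds on p_i} and \ref{lemma: bounding m}), a per-branch analysis using Theorem \ref{theorem: simple regret observational algorithm} for the observational branch and a Hoeffding-plus-union-bound uniform-exploration argument with every arm effectively sampled $\Omega(B/(\gamma m(\mathbf{p})))$ times for the interventional branch, and the observation that near the threshold $\gamma=\Theta(1/(p\,m(\mathbf{p})))$ the two bounds coincide up to constants. Your treatment of the boundary regime (where the data-driven test may pick either branch) is in fact spelled out a bit more explicitly than in the paper, which handles it by restricting to $\gamma\le\frac{1}{5p\,m(\mathbf{p})}$ and $\gamma\ge\frac{5}{p\,m(\mathbf{p})}$, but the underlying argument is the same.
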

The proof of Theorem \ref{theorem: simple regret of gammapba} is in Section \ref{secappendix: proof of gamma simple regret}. Observe that the expected simple regret of \GammaPBA is equal to that of \PBA if $\gamma \leq \frac{1}{p\cdot m(\mathbf{p})}$, and is equal to the that of \OBSALG if $\gamma > \frac{1}{p\cdot m(\mathbf{p})}$. For $\gamma = O(\frac{1}{p\cdot m(\mathbf{p})})$ the optimality of the regret up to log factors follows from Theorem 2 in \cite{LattimoreLR16} where they show a  $\Omega\left(\sqrt{\frac{\gamma m}{B}}\right)$ lower bound on the expected simple regret.\footnote{The lower bound is shown in non-budgeted setting, which translates to $\Omega\left(\sqrt{\frac{\gamma m}{B}}\right)$ lower bound in our setting if $\gamma = O(\frac{1}{p\cdot m(\mathbf{p})})$.} \vineet{The experiment 2 in Section \ref{sec: experiments} shows that the performance of \GammaPBA matches or is better than the performance of \PBA for all values of $\gamma$, which validates our theoretical claim.}

\section{Cumulative-Regret in No-backdoor Graphs with Budget}\label{section: cum regret for parallel bandits}
In this section, we give the algorithm \CRMPB (Algorithm \ref{algorithm: cum regret for parallel bandits}) that minimizes the cumulative regret for the model in Section \ref{sec: simple regret}. 
\CRMPB is based on $\mathtt{Fractional-KUBE}$ ($\mathtt{F-KUBE}$, \cite{TRAN2012}), which is a budget-limited version of Upper Confidence Bound ($\mathtt{UCB}$, \cite{AUER2002}) but without side-information. 
In our setting, since arm $a_0$ aides in the exploration of all other $2M$ arms and has a unit cost, \CRMPB unlike $\mathtt{F-KUBE}$ ensures that arm $a_0$ is pulled sufficiently many times. Also importantly, in \CRMPB the estimate for an arm $a_{i,x}$ is made using the effective number of pulls of the arm $a_{i,x}$, which is equal to the number of pulls of the arm $a_{i,x}$ plus the number of pulls of arm $a_0$ where $X_i$ was sampled as $x$. 
Formally, let $N^{i,x}_{t}$ and $N^0_{t}$ denote the number of pulls of arm $a_{i,x}$ and $a_0$ respectively after $t$ rounds, and let $a_t$ denote the arm pulled at round $t$. The effective number of arm pulls of $a_{i,x}$ after $t$ rounds is equal to $E^{i,x}_t = N^{i,x}_t + \sum_{s=1}^t\mathbb{1}\{a_s = a_0 \text{~and~} X_i = x\}$. 

At the end of $t$ rounds \CRMPB computes $\widehat{\mu}_{i,x}(t)$ and $\widehat{\mu}_0(t)$ which are empirical estimates of $\mu_{i,x}$ and $\mu_0$ respectively, as follows: 
$$\widehat{\mu}_{i,x}(t) = \frac{\sum_{s=1}^{t} Y_s\cdot \mathbb{1}\{(a_s = a_{i,x}) \text{\,or\,} (a_s = a_0 \text{~and~} X_i = x)\}}{E^{i,x}_t} $$ 
$$\widehat{\mu}_{0}(t) =  \frac{1}{N^{0}_t}\sum_{s=1}^t Y_s\cdot \mathbb{1}\{a_s = a_0\} \big)$$
Based on this estimates the \CRMPB computes the weighted UCB estimate $\overline{\mu}_{i,x}(t)$ and $\overline{\mu}_{0}(t)$ for the arms $a_{i,x}$ and $a_0$ as follows:
$$\overline{\mu}_{i,x}(t) = \frac{1}{\gamma}\bigg(\widehat{\mu}_{i,x}(t) + \sqrt{\frac{8\log t}{E^{i,x}_{t}}}\bigg) $$
$$\overline{\mu}_{0}(t) = \widehat{\mu}_{0}(t) + \sqrt{\frac{8\log t}{N^0_{t}}} $$
In each round \CRMPB first ensures arm $a_0$ is pulled at least $\beta^2 \log T$ times (steps 4-5), where $\beta$ is set as in steps 11-14  and otherwise pulls the arm with the highest weighted UCB estimate (steps 6-8). 
\vineet{Ensuring the arm $a_0$ is pulled at least $\beta^2 \log T$ times at the end of $T$ rounds delicately balances the exploration-exploit trade-off: the causal side-information by pulling the arm $a_0$ ensuring free exploration of the other $2M$ interventions and the loss experienced in pulling the arm $a_0$ (if $a_0$ is the sub-optimal arm).}  
The reason for setting $\beta$ as in steps 11-14 is explained after Theorem \ref{theorem: cumulative regret of gamma paralllel bandit}, which bounds the expected cumulative regret of \crmpb. Observe that \CRMPB halts once it has exhausted its entire budget $B$. Crucially though, the decisions of \CRMPB  do not depend on the budget, i.e., it is budget oblivious. But note that the decisions of the algorithm do take into account the cost of an intervention, i.e., the algorithm is not cost-oblivious. 

\begin{algorithm}[!ht]
\caption{\crmpb} \label{algorithm: cum regret for parallel bandits}

\begin{algorithmic}
\State INPUT: $\mathcal{G}$, Set of nodes $\{X_1,\ldots,X_M\}$, $B$, $\gamma$
\end{algorithmic}
\begin{algorithmic}[1]
\State Pull each arm once and set $t = 2M+2$
\State Update $B_{t-1} = B - 2\gamma M - 1$ and let $\beta = 1$.  
\While{$B_t \geq 1$}
\If{$N^{0}_{t-1} < \beta^2 \log t$ or $B_{t}< \gamma$} 
    \State Pull $a_t = a_0$
\Else 
    \State Pull $a_t = \arg\max_{a\in \mathcal{A}} \overline{\mu}_{a}(t-1)$
\EndIf
\State Update $N^a_{t} = N^a_{t-1} + \mathbb{1}\{a_t = a\}$
\State Update $E^{a}_{t}$, $\widehat{\mu}_{a}(t)$ and $\overline{\mu}_{a}(t)$ for all $a \in \mathcal{A}$.  
\State Let $\widehat{\mu}^{*} = \text{max}_{i,x}\widehat{\mu}_{i,x}(t)$. 
\If{$\widehat{\mu}_0 (t) < \frac{\widehat{\mu}^*}{\gamma}$}
    \State Update $\beta = \min(\frac{2\sqrt{2}}{\widehat{\mu}^*/\gamma - \widehat{\mu}_0 (t)}, \sqrt{\log t})$
\EndIf
\State Update $B_{t+1} = \begin{cases}
            B_{t} - 1 & \text{if~} a_t = a_0\\
            B_{t} - \gamma & \text{if~} a_t \neq a_0\\
    \end{cases}$
\State Set $t=t+1$
\EndWhile
\end{algorithmic}
\end{algorithm}
Before stating Theorem \ref{theorem: cumulative regret of gamma paralllel bandit}, we introduce a few more notations which are used in the theorem. Let $v_{i,x} = \frac{\mu_{i,x}}{\gamma}$, and $v_0 = \mu_0$, and $a^* = \arg\max_{a\in \mathcal{A}}\{v_a\}$. Further, let $\Delta_{a} = \mu_{a^*} - \mu_a$ and $d_a = v_{a^*} -v_a$ for each $a \in \mathcal{A}$. Note that there could be $a\in \mathcal{A}$ such that $\Delta_a <0$. 

\begin{theorem}\label{theorem: cumulative regret of gamma paralllel bandit}
If $a^* = a_0$ then the expected cumulative regret of the algorithm is $O(1)$ and otherwise the expected cumulative regret of the algorithm is of order $\sum_{\Delta_{i,x}>0} \Delta_{i,x}\left(\text{max}\left(0, 1+ 8\ln B\left(\frac{1}{d_{i,x}^2} - \frac{p_{i,x}}{3d_0^2}\right) \right) + \frac{\pi^2}{3}\right) + \Delta_0 \left(\frac{50\ln B}{d_{0}^2} + 1 + \frac{\pi^2}{3} \right)$.
\end{theorem}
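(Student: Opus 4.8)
The plan is to follow the standard UCB-style regret decomposition, but adapted to the ``effective number of pulls'' quantity $E^{i,x}_t$ that couples each interventional arm $a_{i,x}$ with the observational arm $a_0$. First I would set up the good event: using Hoeffding's inequality together with a union bound over rounds, the empirical estimates $\widehat\mu_{i,x}(t)$ and $\widehat\mu_0(t)$ lie within the confidence radii $\sqrt{8\log t / E^{i,x}_t}$ and $\sqrt{8\log t / N^0_t}$ of their true means for all $t$, except on an event of probability $O(t^{-2})$ per round, contributing the $\pi^2/3$ terms (via $\sum_t t^{-2}$) in the bound. Conditioning on this good event, I would split the analysis into two cases matching the statement: (i) $a^* = a_0$, and (ii) $a^* = a_{i,x}$ for some interventional arm.

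For case (i), the key point is that once $a_0$ has been pulled enough times its weighted UCB estimate $\overline\mu_0(t)$ concentrates around $v_0 = \mu_0$, while each $\overline\mu_{i,x}(t) = \frac1\gamma(\widehat\mu_{i,x}(t) + \text{radius})$ concentrates around something at most $v_0 - d_{i,x}$; since $E^{i,x}_t \ge p_{i,x}\cdot N^0_t$ up to lower-order fluctuations (a Chernoff bound on the number of times $X_i = x$ among the $a_0$-pulls, using $p_{i,x}\ge p>0$), the forced pulls of $a_0$ in steps 4--5 already drive the confidence radius of every interventional arm down fast enough that no suboptimal interventional arm is ever preferred more than $O(1)$ times. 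Combined with the fact that $a_0$ itself is optimal (so pulling it incurs zero regret), this yields $O(1)$ expected cumulative regret. The adaptive choice of $\beta$ in steps 11--14 is what makes the ``enough times'' quantitative: $\beta \approx 2\sqrt2/(v_{a^*} - v_0)$ is exactly the threshold at which the radius $\sqrt{8\log t/N^0_t}$ with $N^0_t \ge \beta^2\log t$ shrinks below the relevant gap.

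For case (ii), I would bound the number of pulls of each suboptimal arm. For a suboptimal interventional arm $a_{i,x}$ with $\Delta_{i,x} > 0$ (equivalently $d_{i,x} > 0$), the standard UCB argument says it is pulled only while $E^{i,x}_t \lesssim 8\log B/d_{i,x}^2$; but $E^{i,x}_t$ also receives a contribution of roughly $p_{i,x} N^0_t \gtrsim p_{i,x}\beta^2\log B \approx p_{i,x}\cdot 8\log B/d_0^2$ for free from the forced $a_0$-pulls, so the number of \emph{direct} pulls of $a_{i,x}$ is at most $\max(0,\ 1 + 8\ln B(1/d_{i,x}^2 - p_{i,x}/(3d_0^2)))$ — the factor $3$ absorbing the Chernoff slack on $E^{i,x}_t \ge \frac13 p_{i,x}N^0_t$ — each such pull costing regret $\Delta_{i,x}$. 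For the observational arm $a_0$ when it is suboptimal, it is pulled both by the forced rule (steps 4--5, at most $\beta^2\log t \le \log t$ times once $\beta$ saturates, but before that roughly $(8/d_0^2)\log B$ times from $\beta\approx 2\sqrt2/d_0$) and by the UCB rule (another $O(\log B/d_0^2)$ times), giving the $50\ln B/d_0^2$ term after collecting constants; each pull costs $\Delta_0$. Summing $\Delta_a$ times the pull-count bound over all suboptimal arms, plus the $\pi^2/3$ failure-probability contributions, gives the claimed expression.

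The main obstacle I expect is handling the coupling term $E^{i,x}_t = N^{i,x}_t + \sum_{s\le t}\mathbb 1\{a_s = a_0,\ X_i = x\}$ rigorously: the number of useful observational samples for arm $(i,x)$ is itself a random, round-dependent quantity, so one must show via a Chernoff/Azuma bound that $\sum_{s\le t}\mathbb 1\{a_s = a_0,\ X_i = x\} \ge \frac13 p_{i,x} N^0_t$ with high probability (uniformly over $t$ up to a union bound), and feed this back into the confidence intervals without circularity — the $X_i$-samples are i.i.d.\ Bernoulli$(p_{i,x})$ conditioned on $a_s = a_0$, which is what makes this tractable, but the bookkeeping to make the constants ($3$, $50$, the $\max(0,\cdot)$ truncation) come out exactly as stated is the delicate part. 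A secondary subtlety is verifying that the budget-oblivious stopping (the algorithm halts when $B_t < 1$) does not affect the regret bound: since $B \ge$ (total horizon reached) and all the pull-count bounds above are in terms of $\ln B$, which upper-bounds $\ln t$ for every realized round $t$, the analysis goes through with $T$ replaced by $B$ throughout.
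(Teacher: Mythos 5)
Your overall route is the same as the paper's: a UCB decomposition in terms of the effective pull count $E^{i,x}_t$, concentration of the adaptively chosen $\beta$ around $2\sqrt{2}/d_0$, a case split on whether $a^*=a_0$, and a final accounting step replacing the (random) horizon $T$ by $B$ using $B/\gamma \le T \le B$. For case (ii) and for the $\Delta_0\big(\tfrac{50\ln B}{d_0^2}+\cdot\big)$ term your sketch matches the paper's Lemmas on bounding $\beta$ and on the number of suboptimal pulls, up to constants (the paper actually subtracts $p_{i,x}E[N^0_T]$ by taking expectations directly rather than via a uniform Chernoff bound, but that is a cosmetic difference); you would, however, need to actually prove the two-sided bound $\tfrac{8}{9d_0^2}\le E[\beta^2]\le \tfrac{50}{d_0^2}$, which in the paper is a separate lemma resting on the fact that the forced pulls guarantee $N^0_t\ge \ln t$ and hence $E^{i,x}_t\ge \tfrac{p}{2}\ln t$ with high probability, so that all $2M+1$ estimates are within $d_0/4$ of their means.

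The genuine gap is in your case (i). You claim that the forced pulls of steps 4--5 alone ``drive the confidence radius of every interventional arm down fast enough that no suboptimal interventional arm is ever preferred more than $O(1)$ times.'' But when $a_0$ is optimal, on the good event the condition in step 11 ($\widehat{\mu}_0(t)<\widehat{\mu}^*/\gamma$) does not fire, so $\beta$ stays at $1$ and the forced pulls contribute only about $\log t$ observations; the effective count of arm $a_{i,x}$ from this source is $\approx p_{i,x}\log t$, and its radius $\sqrt{8\log t/(p_{i,x}\log t)}=\sqrt{8/p_{i,x}}$ is a \emph{constant} that in general does not drop below the gap $d_{i,x}$. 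Forced pulls therefore give only the usual $O(\ln B/d_{i,x}^2)$ bound on suboptimal pulls, not $O(1)$, and your argument as stated does not yield the claimed constant regret. The paper closes this differently: since $a_0$ is the weighted-UCB optimum, each suboptimal interventional arm is pulled at most $O(\ln T/d_{i,x}^2)$ times, hence $E[N^0_T]\ge T-O(M\ln T)$; the free observational samples then grow \emph{linearly}, $p_{i,x}E[N^0_T]=\Omega(p_{i,x}T)$, so for $B$ beyond an instance-dependent threshold the $\max(0,\cdot)$ term vanishes and each suboptimal arm is pulled at most $\pi^2/3$ times in expectation. One then still needs the budget-waste accounting (each such pull costs $\gamma$ rather than $1$, so $E_T[T]\ge B-1-\tfrac{2M\pi^2(\gamma-1)}{3}$) to conclude that the term $(B-T)\mu_0$ in the regret is also constant; your remark that ``pulling $a_0$ incurs zero regret'' implicitly needs this step.
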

The optimal arm $a^*$ is equal to $a_0$ if the ratio of the expected reward of any intervention to expected reward of $a_0$ is at most $\gamma$. In particular, if $\frac{\text{max}_{i,x}\mu_{i,x}}{\mu_0} \leq \gamma$ then $a^* = a_0$ and in this case the expected cumulative regret of \CRMPB is bounded by a \emph{constant}. The proof of Theorem \ref{theorem: cumulative regret of gamma paralllel bandit} is given in Section \ref{secappendix: proof of gamma cumulative regret}. For the value of $\beta$ set as in steps 11-14, we show that if $a^* \neq a_0$ then $\frac{8}{9d_0^2} \leq E[\beta^2] \leq \frac{50}{d_0^2}$ (see Lemma \ref{lemma: bounding beta} in Section \ref{secappendix: proof of gamma cumulative regret}). This in particular ensures that if $a^*\neq a_0$ then the expected number of pulls of a sub-optimal arm $a_{i,x}$ is at most  $\text{max}\left(0, 1+ 8\ln B\left(\frac{1}{d_{i,x}^2} - \frac{p_{i,x}}{3d_0^2}\right)\right) + \frac{\pi^2}{3}$. Hence note that, if $\frac{1}{d_{i,x}^2} \geq \frac{p_{i,x}}{3d_0^2}$ then this sub-optimal arm is pulled at most a constant number of times. \vineet{In Section \ref{sec: experiments}, we show via simulation that \CRMPB performs much better than $\mathtt{F-KUBE}$ even for small values of $\gamma$. Note that $\mathtt{F-KUBE}$ does not take side information into account.}

\section{Cumulative Regret in General Graphs} \label{section: cumulative regret for general graphs}
In this section, we study the non-budgeted version of the causal bandit problem for general graphs (see Section \ref{subsec: non-budegeted bandits}) with the goal of minimizing the expected cumulative regret. This problem was studied in the recent work of \cite{LU2020} who gave a \UCB based algorithm, called \cucb, which has a worst-case regret bound of $\sqrt{k^nT}$ when each of the $n$ parent nodes of $Y$ (the reward variable) in the graph can take one of $k$ values. For the same problem, we propose an algorithm called \CUCBTwo, which
has constant regret in terms of instance-parameters. Additionally, \CUCB in \cite{LU2020} takes the time horizon $T$ as input, but our algorithm \CUCBTwo works for any \emph{unknown} (but finite) time horizon.

Let $Y_1, \ldots, Y_n$ be the parents of $Y$, hence we have $|Pa(Y)| = n$. Further, let $S \subset \R$ and $|S| = k$ be the set of values that a parents node of $Y$ can take. We denote the realization of $Y_i = y_i$, where $y_i\in S$ for each $i\in [n]$, as $Pa(Y) = \mathbf{y}$ where $\mathbf{y} = (y_1 \ldots y_n) \in S^n$. 
We assume the following: (a) the algorithm receives as input $\mathbb{P}(Pa(Y)= \mathbf{y}| do(a))$ for all $a$, and (b) the distributions $\mathbb{P}(Pa(Y)= \mathbf{y}| do(a))$ for all $a$ have the same non-zero support. \vineet{Assumption (a) is also made in \cite{LU2020, LattimoreLR16} whereas Assumption (b) is made in other existing literature on causal bandits (see \cite{SenSDS17}).} Let $c_{\mathbf{y}} = \text{min}_{a} (\mathbb{P}(Pa(Y) = \mathbf{y} \mid do(a))$. Observe that the expected reward $\mu_a$ of any intervention $a \in \mathcal{A}$ satisfies
$$ \mu_a = \sum_{\mathbf{y} \in S^n} E[Y\mid Pa(Y)= \mathbf{y}] \cdot \mathbb{P}(Pa(Y)= \mathbf{y} \mid do(a))~.$$
We denote $E[Y\mid Pa(Y)= \mathbf{y}]$ as $\mu_{\mathbf{y}}$. In \CUCBTwo (Algorithm \ref{algorithm: modfified causal UCB}), 
$\zeta_a = \sum_{c_{\mathbf{y}}>0} \frac{\mathbb{P}(Pa(Y)= \mathbf{y} \mid do(a))}{c_{\mathbf{y}}}$ for each $a$, and $N_{\mathbf{y},t}$ denotes the number of times $Pa(Y)$ have been sampled as $\mathbf{y}$ in $t$ rounds. Let $Pa_t(Y)$ denote the realization of $Pa(Y)$ at time $t$. Then, $N_{\mathbf{y},t} = \sum_{s=1}^t \mathbb{1}\{Pa_s(Y) = \mathbf{y}\}$ and
$\widehat{\mu}_{\mathbf{y}}(t)$ is the empirical estimate of $\mu_{\mathbf{y}}$ at the end of $t$ rounds defined as,
$$\widehat{\mu}_{\mathbf{y}}(t) = \frac{1}{N_{\mathbf{y},t}}\sum_{s=1}^t Y_s \cdot \mathbb{1}\{Pa_s(Y) = \mathbf{y}\} .$$
if $N_{\mathbf{y},t} \geq 1$ and otherwise $\widehat{\mu}_{\mathbf{y}}(t) = 0$. The algorithm also computes the empirical estimate $\widehat{\mu}_a(t)$ and the UCB estimate $\overline{\mu}_a(t)$ for all $a$ using $\widehat{\mu}_{\mathbf{y}}(t)$ at the end of every round as follows: 
$$\widehat{\mu}_a(t) = \sum_{\mathbf{y}\in S^n} \widehat{\mu}_{\mathbf{y}}(t)\cdot \mathbb{P}(Pa(Y)=\mathbf{y} | do(a)), \text{~~and}$$ $$\overline{\mu}_a(t) = \widehat{\mu}_a(t) + \sqrt{\frac{\log (k^n t^2/2)}{t}}\zeta_a~.$$
The quantity $\sqrt{\frac{\log (k^n t^2/2)}{t}}\zeta_a$ is called the upper confidence radius around the empirical estimate $\widehat{\mu}_a(t)$ at the end of $t$ rounds. We remark here that the difference between our algorithm \CUCBTwo and \CUCB by \cite{LU2020} is that \CUCB maintains a UCB estimate for each parent value tuple $\mathbf{y}$, whereas \CUCBTwo maintains a UCB estimate for each intervention $a\in \mathcal{A}$. 
\begin{algorithm}[!ht]
\caption{\CUCBTwo} \label{algorithm: modfified causal UCB}
\begin{algorithmic}
\State INPUT: $\mathcal{G}$, $\mathbb{P}(Pa(Y)=\mathbf{y} | do(a))$ for all $a\in \mathcal{A}$.
\end{algorithmic}
\begin{algorithmic}[1]
\State Play each intervention in round robin and for $t= |A|$ update $N_{\mathbf{y},t}, \widehat{\mu}_{\mathbf{y}}(t), \widehat{\mu}_a(t), \overline{\mu}_a(t)$.
\State Set $t=|A|+1$
\Loop
\State Play $a_t = \arg\max_{a\in A} \overline{\mu}_a$.
\State Update $N_{\mathbf{y},t}, \widehat{\mu}_{\mathbf{y}}(t), \widehat{\mu}_a(t), \overline{\mu}_a(t)$.
\State $t=t+1$
\EndLoop
\end{algorithmic}
\end{algorithm}
Theorem \ref{theorem: cumulative regret for general graphs} bounds the expected cumulative regret of \CUCBTwo. In Theorem \ref{theorem: cumulative regret for general graphs}, $\Delta_a = \max_{b\in \mathcal{A}} \mu_{b} - \mu_a$.
\begin{theorem}\label{theorem: cumulative regret for general graphs}
Let $\delta = \min_{\mathbf{y}\in S^n} \{c_{\mathbf{y}} > 0\}$, $L_1 = \min\Big\{t \in \N \mid t \geq \frac{2\log(k^nt^2)}{\delta^2} \Big\}$, $L_{2,a} = \min\Big\{t\in \N \mid t \geq \frac{4\log (k^{n} t^2/2)}{\Delta_a^2}\zeta_a^2 \Big\}$ for all $a\in \mathcal{A}$, and $L_a = \text{max}\{L_1,L_{2,a}\}$. Then the expected cumulative regret of \CUCBTwo after $T$ rounds is at most
$ \sum_{a\in \mathcal{A}} \Delta_a \Big(L_a + \frac{2\pi^2}{3}\Big).$
\end{theorem}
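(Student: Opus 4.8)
The plan is to carry out a standard UCB regret decomposition, where the two non-routine ingredients are: (i) every pull of \emph{any} arm yields a fresh sample of each parent-tuple $\mathbf{y}$ in the common support (Assumption (b)), so the estimates $\widehat{\mu}_{\mathbf{y}}(t)$ sharpen uniformly as $t$ grows; and (ii) the per-tuple Hoeffding radii, reweighted by $\mathbb{P}(Pa(Y)=\mathbf{y}\mid do(a))/c_{\mathbf{y}}$, sum exactly to the confidence radius $\sqrt{\log(k^n t^2/2)/t}\,\zeta_a$ used by the algorithm.

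First I would define the clean event. Fix $t$. Conditioned on the arm $a_s$ pulled in round $s\le t$, the realized parent tuple equals a given $\mathbf{y}$ in the support with probability $\mathbb{P}(Pa(Y)=\mathbf{y}\mid do(a_s))\ge c_{\mathbf{y}}\ge\delta$; hence $N_{\mathbf{y},t}$ stochastically dominates a $\mathrm{Binomial}(t,c_{\mathbf{y}})$ in the martingale sense, and an Azuma--Hoeffding bound gives $\Pr[N_{\mathbf{y},t}<c_{\mathbf{y}}t/2]\le 1/(k^n t^2)$ once $t\ge L_1$ --- this is precisely what the definition of $L_1$ secures, since $t\ge 2\log(k^n t^2)/\delta^2$ forces $\sqrt{(t/2)\log(k^n t^2)}\le \delta t/2\le c_{\mathbf{y}}t/2$. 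Separately, for a fixed $\mathbf{y}$ the rewards collected in the rounds where $Pa(Y)=\mathbf{y}$ are i.i.d.\ $\mathrm{Bernoulli}(\mu_{\mathbf{y}})$ regardless of which arms were pulled (given $Pa(Y)$, the reward is independent of the past), so Hoeffding together with a union bound over the number of such samples gives $\Pr\big[|\widehat{\mu}_{\mathbf{y}}(t)-\mu_{\mathbf{y}}|>\sqrt{\log(k^n t^2/2)/(2N_{\mathbf{y},t})}\big]\le 1/(k^n t^2)$. Let $\mathcal{E}_t$ be the event that both hold for every $\mathbf{y}$ in the support; a union bound over the $\le k^n$ tuples gives $\Pr[\neg\mathcal{E}_t]\le 4/t^2$.

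Next, on $\mathcal{E}_t$, using $N_{\mathbf{y},t}\ge c_{\mathbf{y}}t/2$ and $c_{\mathbf{y}}\le 1$ (so $1/\sqrt{c_{\mathbf{y}}}\le 1/c_{\mathbf{y}}$),
\begin{align*}
|\widehat{\mu}_a(t)-\mu_a| &\;\le\;\sum_{c_{\mathbf{y}}>0}\mathbb{P}(Pa(Y)=\mathbf{y}\mid do(a))\sqrt{\frac{\log(k^n t^2/2)}{2N_{\mathbf{y},t}}} \\
&\;\le\;\sqrt{\frac{\log(k^n t^2/2)}{t}}\sum_{c_{\mathbf{y}}>0}\frac{\mathbb{P}(Pa(Y)=\mathbf{y}\mid do(a))}{c_{\mathbf{y}}}\;=\;\sqrt{\frac{\log(k^n t^2/2)}{t}}\,\zeta_a ,
\end{align*}
so the UCB index is valid: $\mu_a\le\overline{\mu}_a(t)\le\mu_a+2\sqrt{\log(k^n t^2/2)/t}\,\zeta_a$ for every $a$. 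Then the usual dichotomy applies: if a suboptimal arm $a$ (with $\Delta_a>0$) is pulled in round $t$ and $\mathcal{E}_{t-1}$ holds, then $\mu_{a^*}\le\overline{\mu}_{a^*}(t-1)\le\overline{\mu}_a(t-1)\le\mu_a+2\sqrt{\log(k^n(t-1)^2/2)/(t-1)}\,\zeta_a$, i.e.\ $\Delta_a\le 2\sqrt{\log(k^n(t-1)^2/2)/(t-1)}\,\zeta_a$, which fails once $t-1\ge L_{2,a}$ (the map $s\mapsto s-\tfrac{4\zeta_a^2}{\Delta_a^2}\log(k^n s^2/2)$ is eventually increasing). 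Hence for $t>L_a=\max\{L_1,L_{2,a}\}$ the arm $a$ is pulled only in a round where $\mathcal{E}_{t-1}$ fails.

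Finally I would count: the expected number of pulls of a suboptimal $a$ over the (unknown, finite) horizon $T$ is at most $L_a$ (bounding the rounds $t\le L_a$, which also absorbs the initial round-robin) plus $\sum_{t>L_a}\Pr[\neg\mathcal{E}_{t-1}]\le\sum_{t\ge1}4/t^2=\tfrac{2\pi^2}{3}$, giving $\mathbb{E}[N^a_T]\le L_a+\tfrac{2\pi^2}{3}$; arms with $\Delta_a=0$ contribute nothing. Plugging this into $\mathbb{E}[R_{\CUCBTwo}(T)]=\sum_{a\in\mathcal{A}}\Delta_a\,\mathbb{E}[N^a_T]$ yields $\sum_{a\in\mathcal{A}}\Delta_a\big(L_a+\tfrac{2\pi^2}{3}\big)$, as claimed. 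I expect the only real obstacle to be ingredient (i): since the arm played in each round is chosen adaptively, $N_{\mathbf{y},t}$ is not a sum of i.i.d.\ indicators, so the lower bound $N_{\mathbf{y},t}\gtrsim c_{\mathbf{y}}t$ must come from a martingale / stochastic-domination argument rather than a plain Chernoff bound; the rest is bookkeeping, and the particular constants in $L_1$, $L_{2,a}$ and the term $\tfrac{2\pi^2}{3}$ are just what is needed to make the Hoeffding tails line up with the algorithm's confidence radius.
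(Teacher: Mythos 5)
Your proposal follows essentially the same route as the paper's proof: a concentration step showing $N_{\mathbf{y},t}\gtrsim c_{\mathbf{y}}t$ (which is exactly where $L_1$ enters), conversion of the per-tuple Hoeffding radii into the arm-level radius $\sqrt{\log(k^nt^2/2)/t}\,\zeta_a$, the standard UCB dichotomy in which $L_{2,a}$ rules out the gap event, and a $\sum_t O(t^{-2})$ tail giving the $\tfrac{2\pi^2}{3}$ term, with your single ``clean event'' packaging being only a cosmetic variant of the paper's per-arm three-event decomposition. Your explicit martingale/Azuma handling of the adaptively generated counts $N_{\mathbf{y},t}$ is a slightly more careful rendering of the step the paper does with a plain Chernoff--Hoeffding bound, and the residual looseness in your constants (e.g.\ the union bound over the random number of samples behind $\widehat{\mu}_{\mathbf{y}}(t)$) is at the same level of informality as the paper's own conditional application of Hoeffding, so the argument is correct to the paper's standard.
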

Observe that in Theorem \ref{theorem: cumulative regret for general graphs}, $L_a$ is a constant based on problem instance parameters, and hence Theorem \ref{theorem: cumulative regret for general graphs} proves that \CUCBTwo achieves instance dependent constant regret. Theorem \ref{theorem: cumulative regret for general graphs} is proved by showing that the expected number of pulls of a sub-optimal arm $a\in \mathcal{A}$ after time $L_a$ is at most $\frac{2\pi^2}{3}$ (proof in Section \ref{secappendix: proof of cum regret for general graphs}).
\vineet{In Section \ref{sec: experiments}, we show via simulations that the expected cumulative regret of \CUCBTwo is better than that of \cucb, and the experiment also validates that the regret of \CUCBTwo is a constant.}

\section{Algorithm Simulations}\label{sec: experiments}
\begin{figure}[ht!]
    \centering
    \minipage{0.45\textwidth}
    \includegraphics[width=\linewidth]{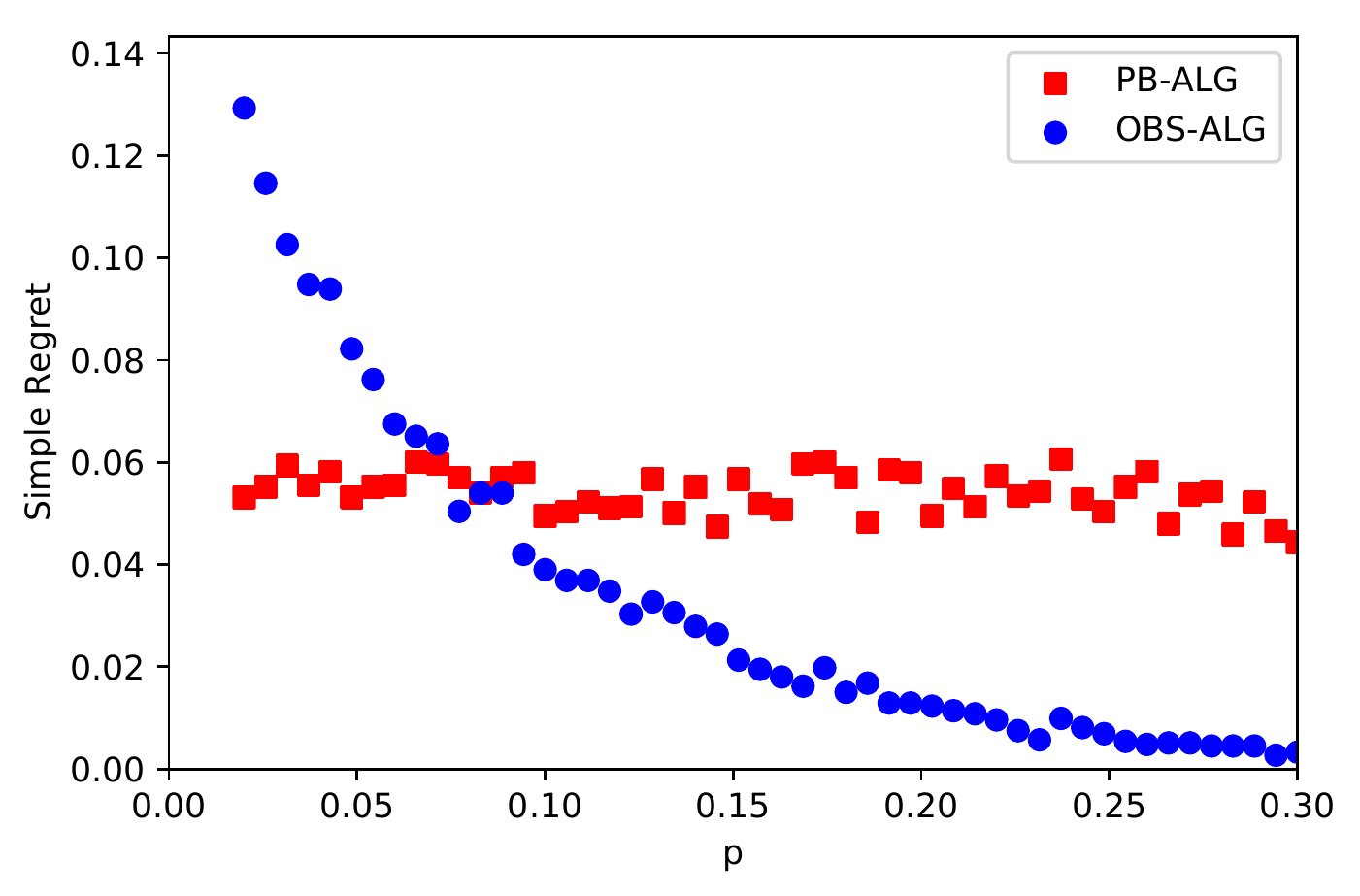}
    \caption{{\small \OBSALG vs \PBA}}
    \label{fig:regret-vs-p-obsalg-pba}
    \endminipage\hfill
\minipage{0.45\textwidth}
      \includegraphics[width=\linewidth]{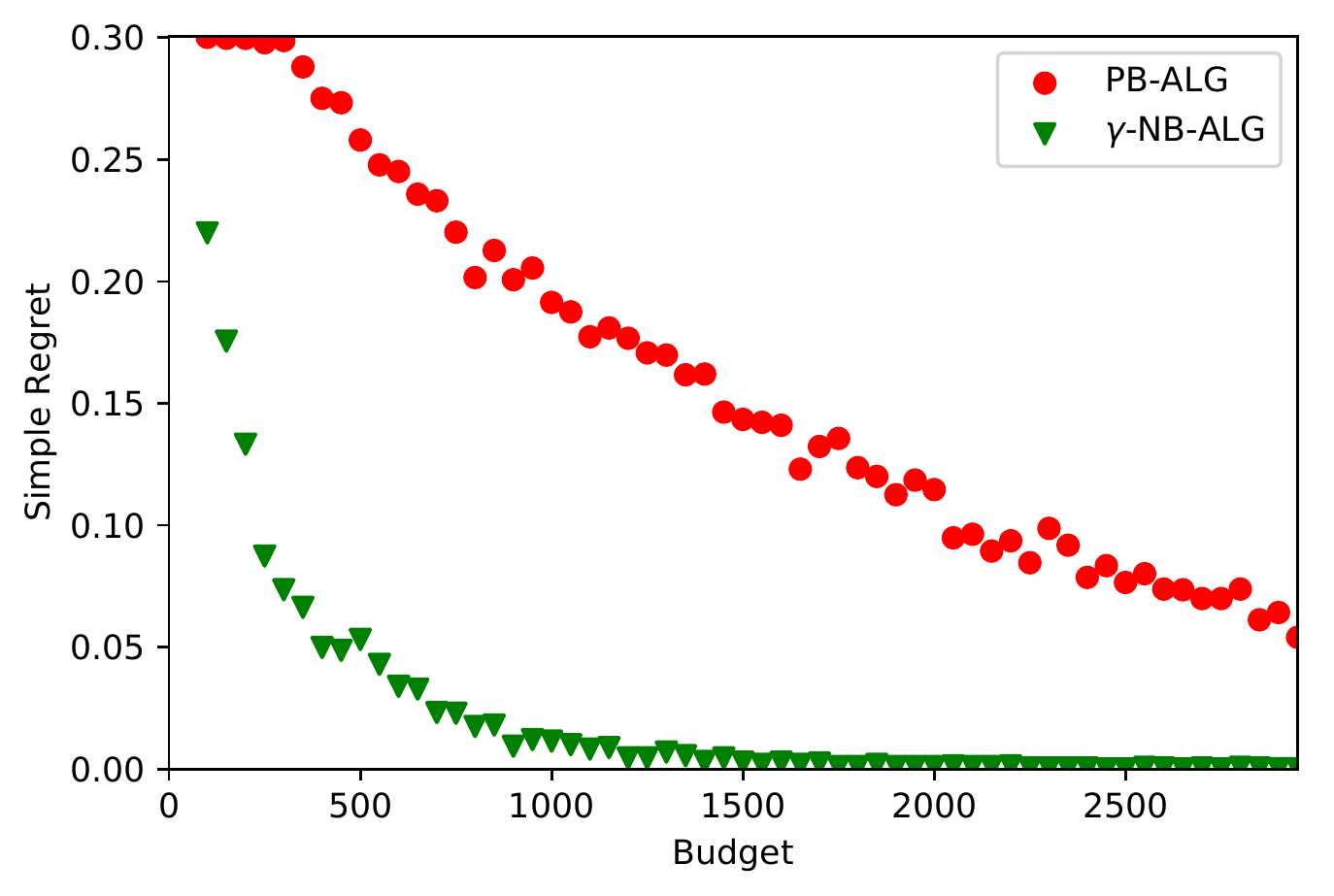}
  \caption[]%
        {{\small \GammaPBA vs. \PBA (part a)}} 	\label{fig:gammaPBA_1}   
\endminipage\hfill
\end{figure}
\noindent \textbf{Experiment $1$ (\OBSALG  vs. \PBA)}:
This experiment compares the performance of \OBSALG  with \PBA for a fixed budget on a parallel graph with $M=50$, i.e the reward variable $Y$ has $50$ parents $X_1,\ldots,X_{50}$:  $X_i \sim Bernoulli(p_i)$ for $i\in [50]$. The rewards variable $Y$ depends on $X_i$'s as follows (unknown to both the algorithms): if $X_1 =1$ then $Y \sim Bernoulli(0.5+\epsilon)$ and otherwise $Y \sim Bernoulli(0.5-\epsilon^\prime)$, where $\epsilon =0.3$, and $\epsilon^\prime = \frac{p_1\epsilon}{1-p_1} \sim 0.006$. The chosen causal graph structure is the same as in the experiments of \cite{LattimoreLR16}. Throughout the experiment $p_i = 0.5$ for $i\in [3,50]$, and $p_1 = p_2$ is the minimum probability $p$. The value of $p_1$ and $p_2$, i.e. the minimum probability $p$ is increased from $0.02$ to $0.3$. Figure \ref{fig:regret-vs-p-obsalg-pba} plots the simple regret of these algorithms with respect to minimum probability. The regret is computed by averaging it over $1000$ independent runs. The budget $B$ is fixed to a moderate value of $100$ and the cost of intervention $\gamma$ to $1$. The plot in Figure \ref{fig:regret-vs-p-obsalg-pba} shows an inverse relationship between simple regret of \OBSALG and $p$ as proved in Theorem \ref{theorem: simple regret observational algorithm}, whereas the simple regret of \PBA does not depend on $p$. Recall that the expected simple regret of \PBA depends on $m(\mathbf{p})$ (and not on $p$) and for the $p_i$'s as stated before, the quantity $m({\bf p}) = 2$, does not change.  Note that the performance of \GammaPBA is best for $\gamma=1$ and $m({\bf p}) = 2$. Finally, also observe that after a threshold value of $p$, \OBSALG starts performing much better than \PBA as can be seen from the plot.

\begin{figure}[!htb]
\centering
\minipage{0.45\textwidth}
   \includegraphics[width=\linewidth]{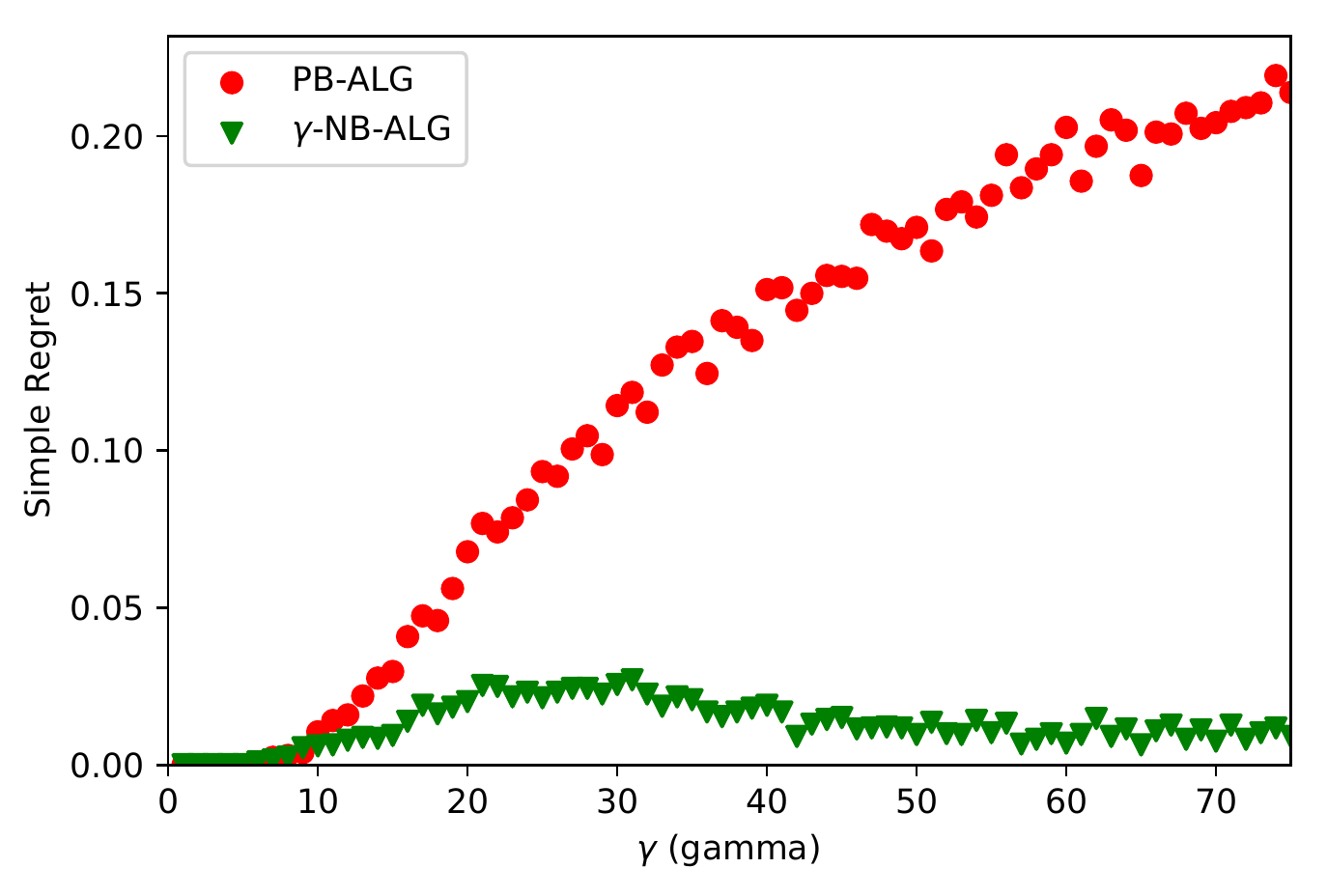}
   \caption[]%
        {{\small \GammaPBA vs. \PBA (part b)}}   \label{fig:gammaPBA_2}
\endminipage\hfill
\minipage{0.45\textwidth}
    \includegraphics[width=\linewidth]{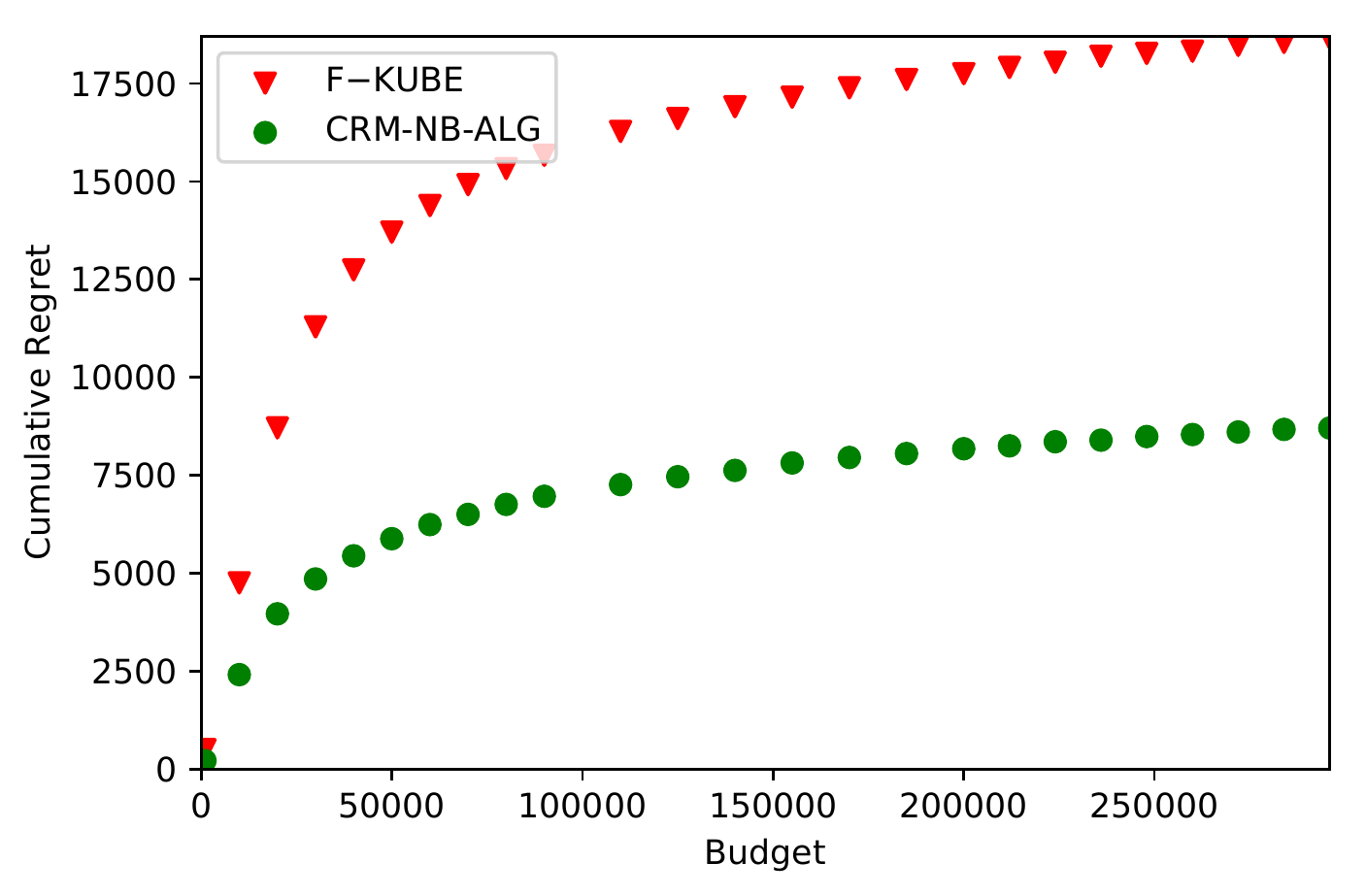}
		\caption[$\gamma = 1$]%
        {{\small $\gamma = 1$}}   \label{fig:gamma1}
\endminipage\hfill
\end{figure}
\noindent \textbf{Experiment $2$ ($\GammaPBA$ vs.\ $\PBA$)}:
This experiment compares the performance of $\GammaPBA$ and $\PBA$ on a parallel graph with $M=50$, i.e the reward variable $Y$ has $50$ parents $X_1,\ldots,X_{50}$: $X_i \sim Bernoulli(p_i)$ for $i\in [50]$, $p_1 = p_2 = 0.02$, and $p_i = 0.5$ for $i\in [3,50]$. For this choice of $p_i$'s the \PBA algorithm asymptotically achieves its best regret. The rewards variable $Y$ depends on $X_i$'s as follows (unknown to both the algorithms): if $X_1 =1$ then $Y \sim Bernoulli(0.5+\epsilon)$ and otherwise $Y \sim Bernoulli(0.5-\epsilon^\prime)$, where $\epsilon =0.3$, and $\epsilon^\prime = \frac{p_1\epsilon}{1-p_1} \sim 0.006$. Under these settings, \cite{LattimoreLR16} demonstrated a faster exponential decay of simple regret compared to the non-causal algorithms. Since in this experiment we compare \GammaPBA to \PBA (adapted to the budgeted version), we choose the same causal graph and distribution. The part a of this experiment in Figure \ref{fig:gammaPBA_1} compares the simple regret of the two algorithms when $\gamma =60$ and the budget is increased to $3000$. The regret is computed by averaging it over $1000$ independent runs. The part b of this experiment in Figure \ref{fig:gammaPBA_2} illustrates the effect on the simple regret of the algorithms as $\gamma$ increases from $1$ to $75$. In Figure \ref{fig:gammaPBA_2}, observe that till a threshold value of $\gamma$ both algorithms have very close simple regret and post the threshold, $\GammaPBA$ trades off between observations and interventions to yield a much better simple regret.
\begin{figure}[!htb]
\centering
\minipage{0.45\textwidth}
    \includegraphics[width=\linewidth]{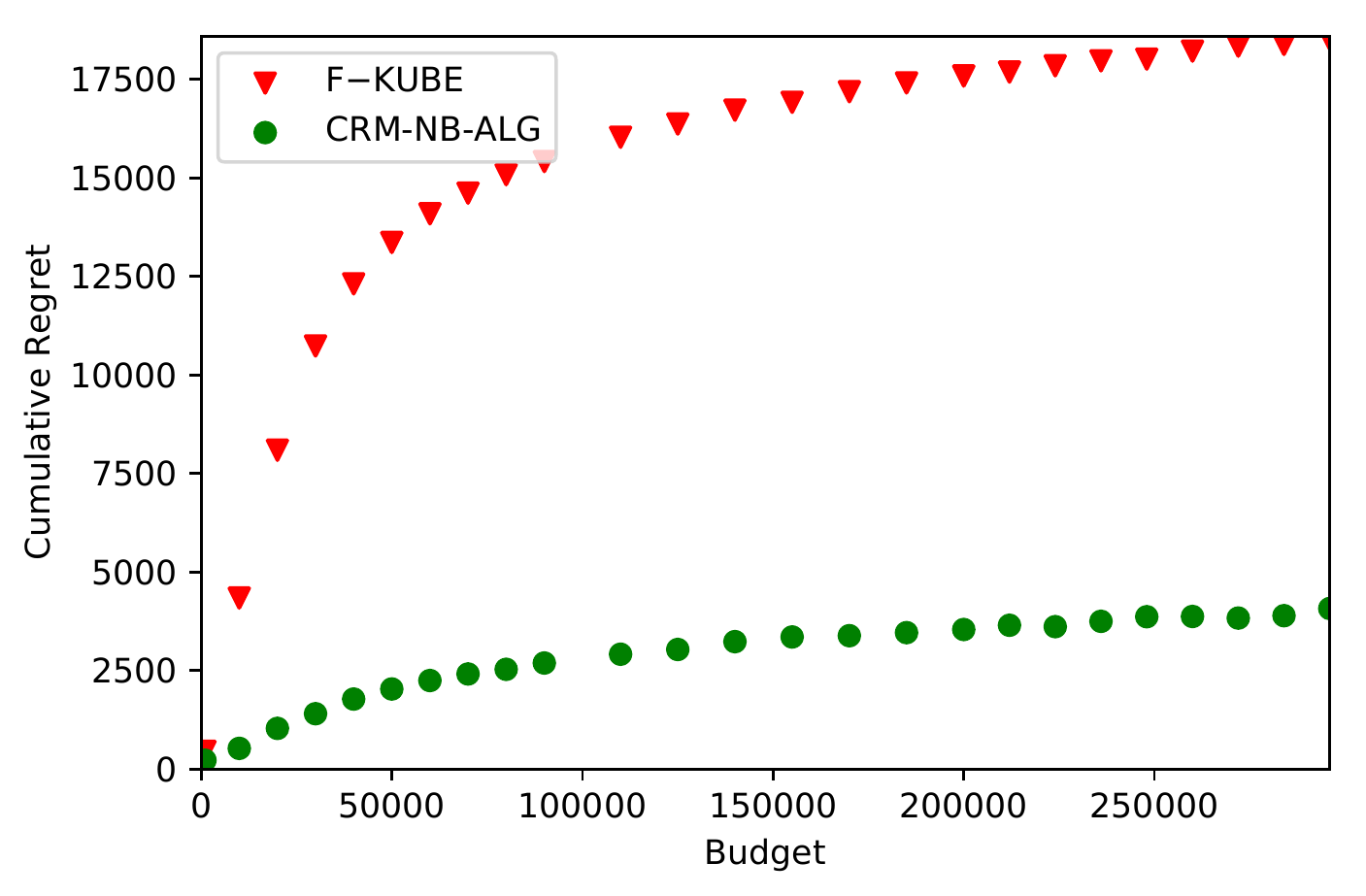}
		\caption[$\gamma = 1.1$]%
        {{\small $\gamma = 1.1$}}    \label{fig:gamma1.1}
\endminipage\hfill
\minipage{0.45\textwidth}
    \includegraphics[width=\linewidth]{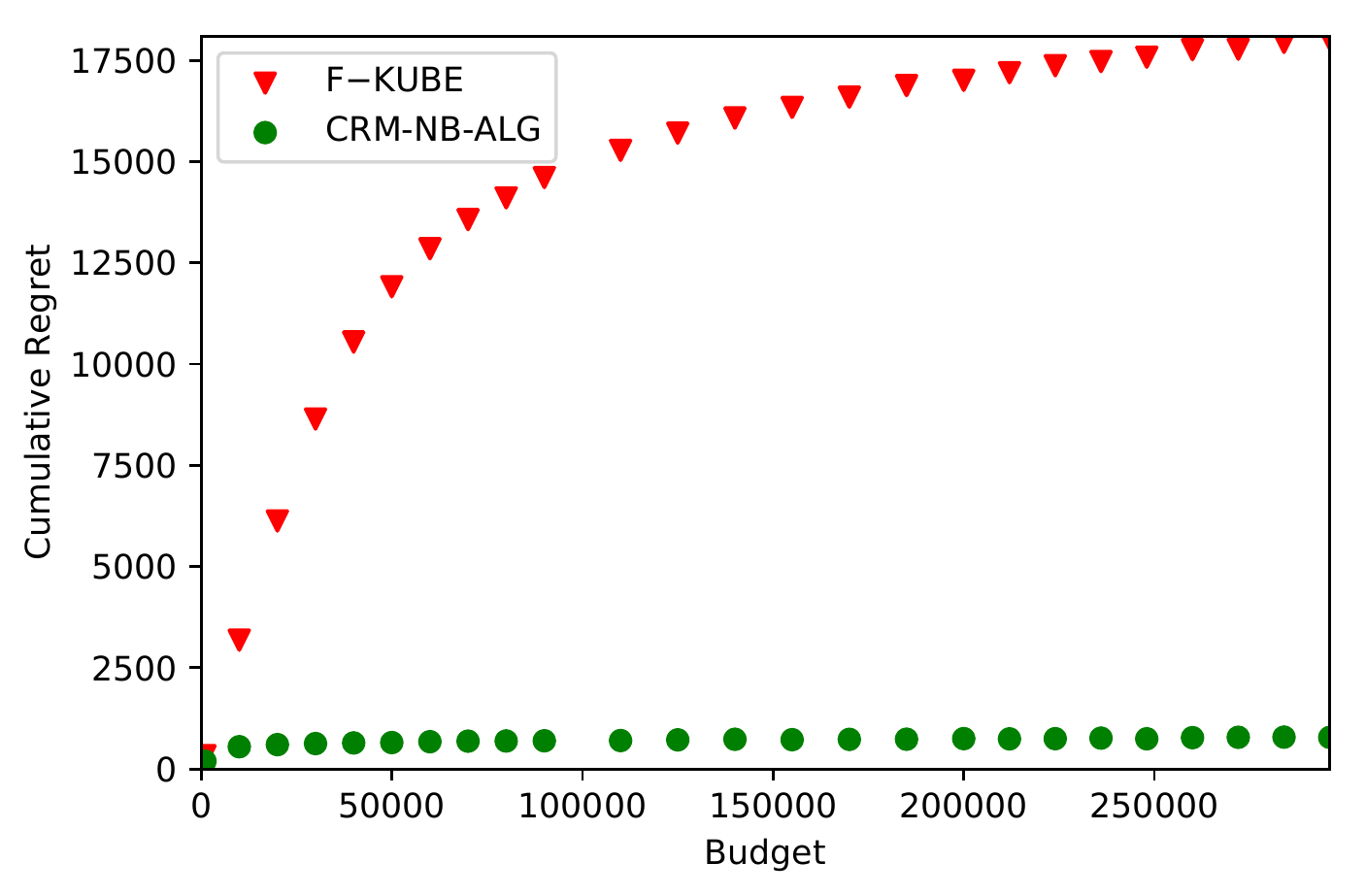}
		\caption[$\gamma = 1.5$]%
        {{\small $\gamma = 1.5$}}    
		\label{fig:gamma1.5}
\endminipage\hfill
\end{figure}

\noindent \textbf{Experiment $3$ ($\mathtt{F-KUBE}$ vs.\ $\CRMPB$)}:
This experiment compares the performance of $\mathtt{F-KUBE}$ and \CRMPB. The model is as in Experiment 2, except $\epsilon = 0.5$, i.e. the best arm has reward $1$. If the reward distribution is the same as in experiment $1$ the cumulative regret of \CRMPB even with $\gamma = 1.1$ converges very quickly to a small constant. This is attributed to the fact that the observation arm is closer to being optimal (i.e. $d_0$ is smaller). Even though this validates the better performance of our algorithm, for a better visual description we set the expected reward of the best arm to $1$. Even with this reward distribution, the performance of \CRMPB is much better than $\mathtt{F-KUBE}$. Figures \ref{fig:gamma1}, \ref{fig:gamma1.1}, and \ref{fig:gamma1.5} illustrate the cumulative regrets of both the algorithms for $\gamma$ equal to $1, 1.1$ and $1.5$ respectively as the budget is increased. The regret is computed by averaging over $50$ independent runs. Notice that \CRMPB yields a much better regret in all three cases and its regret is constant for $\gamma = 1.5$.
\begin{figure}[!htb]
\centering
\minipage{0.45\textwidth}
  \begin{center}
    \includegraphics[scale=0.5]{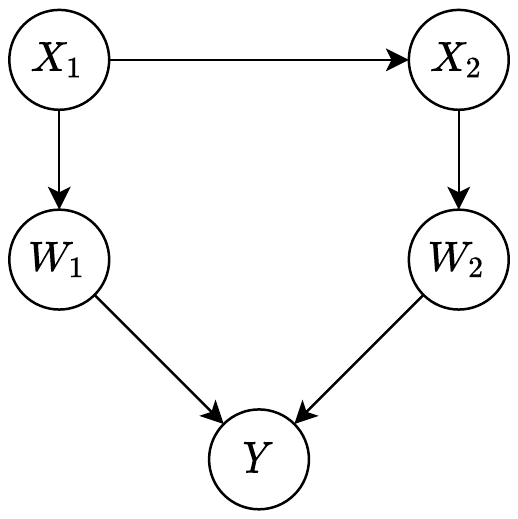}
    \caption[]{{\small General Graphs}}
    \label{fig:generalGraph}
    \end{center}
\endminipage\hfill
\minipage{0.45\textwidth}
  \includegraphics[width=\linewidth]{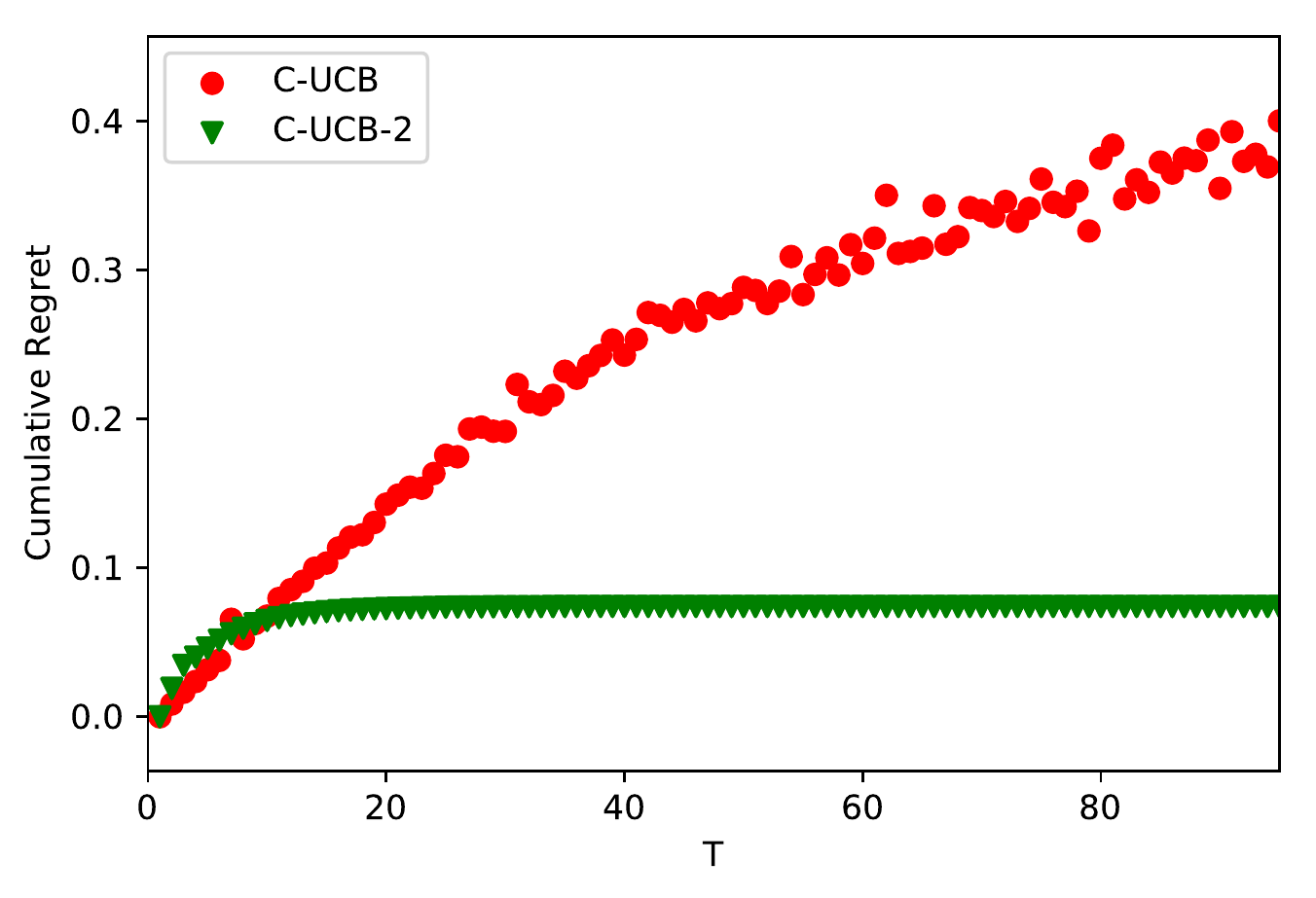}
		\caption[observationalAlgo]%
        {{\small \CUCBTwo vs. \CUCB}}   
		\label{fig:cucb2vscucb}
\endminipage\hfill
\end{figure}

\noindent \textbf{Experiment $4$ ($\CUCB$ vs.\ $\CUCBTwo$)}:
This experiment compares the performance of \CUCB and \CUCBTwo. The causal graph used in this experiment is as shown in figure \ref{fig:generalGraph}. Notice that this graph has a backdoor path from $X_2$ to $Y$ and therefore algorithms such as \PBA, \GammaPBA and \CRMPB, which are for no-backdoor graphs, cannot be used. Our conditional probabilities for nodes ($P(\text{node}|Pa(\text{node}))$) are given in Table \ref{table: conditional distribution}.
\begin{table}[ht!]
\centering
\begin{tabular}{|l|l|}
\hline
\textbf{Conditional Variable} & \textbf{Probability} \\ \hline
$X_1=0$                       & $0.45$               \\ \hline
$X_1=1$                       & $0.55$               \\ \hline
$X_2=0 | X_1 = 0$             & $0.55$               \\ \hline
$X_2=1 | X_1 = 0$             & $0.45$               \\ \hline
$X_2=0 | X_1 = 1$             & $0.45$               \\ \hline
$X_2=1 | X_1 = 1$             & $0.55$               \\ \hline
$W_1=0 | X_1 = 0$             & $0.46$               \\ \hline
$W_1=1 | X_1 = 0$             & $0.54$               \\ \hline
$W_1=0 | X_1 = 1$             & $0.54$               \\ \hline
$W_1=1 | X_1 = 1$             & $0.46$               \\ \hline
$W_2=0 | X_2 = 0$             & $0.52$               \\ \hline
$W_2=1 | X_2 = 0$             & $0.48$               \\ \hline
$W_2=0 | X_2 = 1$             & $0.48$               \\ \hline
$W_2=1 | X_2 = 1$             & $0.52$               \\ \hline
\end{tabular}
\caption{Conditional Probability Distributions}
    \label{table: conditional distribution}
\end{table}

The conditional distribution of the reward variable $Y$ was chosen as $Y|w_1,w_2 = \theta_1 X_1 + \theta_2 X_2 + \epsilon$, where $\theta_1$ and $\theta_2$ are fixed to $0.25$ (similar to that in \cite{LU2020}) and $\epsilon$ is distributed as $\mathcal{N}(0,0.01)$. Here $\mathcal{N}(0,0.01)$ denotes the normal distribution with mean $0$ and standard deviation $0.01$. The conditional probabilities in the above table are chosen to be close to each other in order to ensure that the expected rewards for all the arms are competitive and the algorithm takes longer to distinguish between them. The expected reward of the four arms $do(X_i = x), i\in [2], x\in \{0,1\}$ are given in the Table \ref{table: rewards}.
\begin{table}[ht!]
\centering
\begin{tabular}{|l|l|}
\hline
\textbf{Arm} & \textbf{Expected Reward} \\ \hline
$do(X_1=0)$  & $0.2595$                 \\ \hline
$do(X_1=1)$  & $0.2405$                 \\ \hline
$do(X_2=0)$  & $0.244$                  \\ \hline
$do(X_2=1)$  & $0.254$                  \\ \hline
\end{tabular}
\caption{Expected Reward of the Arms}\label{table: rewards}
\end{table} 

Figure \ref{fig:cucb2vscucb} shows a comparison between the cumulative regret incurred by both algorithms for values of $T$ in the range $[5,100]$. The regret is computed by averaging over $500$ independent runs. Notice that the regret of \CUCB is much higher than \CUCBTwo and also grows with time. Moreover, the regret of \CUCBTwo grows a little initially and then becomes constant as proved in Theorem \ref{theorem: cumulative regret for general graphs}.   
\section{Discussion and Future Work}
The \MAB problem can be used to model several real-world scenarios where additional information besides the reward of the pulled arms is available and hence the study of the \MAB problem with side-information has been an area of significant interest in the research community.
One of the most prominent models with side-information is the contextual \MAB problem where the algorithm receives extra information (called \emph{context}) before each arm pull \cite{LU2010contextual}. A class of bandit problems where the side-information obtained conforms to a feedback graph has also been studied in the literature \cite{ALON2015}. The special case of parallel causal graphs studied in \cite{LattimoreLR16} and in this work is in fact captured by such a model, but as shown by \cite{LattimoreLR16} their regret bounds are not optimal in this setting.

In this work, we study the the causal bandit problem for no-backdoor graphs in the budgeted bandit framework. In this setting, observations are cheaper compared to interventions, which is practically well-motivated. In Sections \ref{sec: simple regret} and \ref{section: cum regret for parallel bandits} we provided two algorithms, \GammaPBA and \CRMPB, that minimized the expected simple regret and expected cumulative regret respectively. \cite{SenSDS17} also studies the best intervention identification problem via importance sampling under budget constraint. But in contrast to our work, they consider soft interventions on a single node $V$, and also assume that the interventional distributions and the marginals of the parent distribution of the node $V$ are known. This is incomparable with hard interventions on no-backdoor graphs, where interventions can be performed on different variables and the parent distributions of the intervened nodes are not known. Also their setting is parameterized by $B'$ and $T$, where $B'$ is the upper bound on the average cost of sampling and $T$ is the total number of samples that the algorithm draws. This can be mapped to our setting by setting the budget to be $B'T$. In our budgeted setting $T$ is not given as input to the algorithm, and this is important for the trade-off between observations and interventions.

In the non-budgeted setting, we showed that our algorithm \CUCBTwo has constant expected cumulative regret in terms of instance-parameters. We conjecture that the worst-case regret bound of our algorithm matches that in \cite{LU2020}, and resolving that remains open. The work by \cite{sachidananda2017} studies a similar problem as that in our work and experimentally show the effectiveness of Thompson Sampling but do not provide any theoretical guarantees.

Finally, many of the works in the literature such as those of \cite{LU2020} and \cite{LattimoreLR16} assume that the parent distribution for each intervention is known to the algorithm. We only make this assumption in Section \ref{section: cumulative regret for general graphs}. This assumption is limiting in practice and showing a non-trivial regret guarantee for settings without this assumption remains an important open direction.
\section{Proofs of Theorems}
\subsection{Theoretical Preliminaries}
We require the following two versions of the Chernoff-Hoeffeding inequality in our proof.
\begin{lemma}[Chernoff-Hoeffeding inequality]\label{lemma: chernoff-hoeffeding inequality}
Suppose $X_1, \ldots, X_T$ are independent random variables taking values in the interval $[0,1]$, and let $X = \sum_{t\in [T]} X_t$ and $\overline{X} = \frac{\sum_{t\in [T]}X_t}{T}$. Then for any $\varepsilon \geq 0$ the following holds:
$$a)~~ \mathbb{P}\{X - E[X] \geq \varepsilon \} \leq e^{\frac{-2\varepsilon^2}{T}}, $$
$$b)~~ \mathbb{P}\{\overline{X} - E[\overline{X}] \geq \varepsilon \} \leq e^{-2\varepsilon^2T}~. $$
\end{lemma}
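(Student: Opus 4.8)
The plan is to prove part (a) by the standard Chernoff (exponential moment) method and then obtain part (b) as an immediate rescaling. First I would fix $s > 0$ and apply Markov's inequality to the nonnegative random variable $e^{s(X - E[X])}$, writing
\[
\mathbb{P}\{X - E[X] \geq \varepsilon\} = \mathbb{P}\{e^{s(X - E[X])} \geq e^{s\varepsilon}\} \leq e^{-s\varepsilon}\, E\!\left[e^{s(X - E[X])}\right].
\]
Since the $X_t$ are independent, the moment generating function factorizes as $E[e^{s(X-E[X])}] = \prod_{t\in[T]} E[e^{s(X_t - E[X_t])}]$, so the task reduces to bounding the MGF of a single centered, bounded random variable.

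The key tool, which I would establish separately, is Hoeffding's lemma: if $Z \in [a,b]$ with $E[Z]=0$, then $E[e^{sZ}] \leq e^{s^2(b-a)^2/8}$. For each summand, $Z_t = X_t - E[X_t]$ is centered and lies in an interval of length at most $1$ because $X_t \in [0,1]$; hence $E[e^{sZ_t}] \leq e^{s^2/8}$ and the product is at most $e^{Ts^2/8}$. Combining with the Markov step gives $\mathbb{P}\{X - E[X] \geq \varepsilon\} \leq e^{-s\varepsilon + Ts^2/8}$, and minimizing the exponent over $s > 0$ (the optimum is at $s = 4\varepsilon/T$) yields the bound $e^{-2\varepsilon^2/T}$, which is exactly (a).

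For Hoeffding's lemma itself — the only genuinely technical step — I would use convexity of $u \mapsto e^{su}$ to get, for $z \in [a,b]$, the chord bound $e^{sz} \leq \frac{b-z}{b-a}e^{sa} + \frac{z-a}{b-a}e^{sb}$. Taking expectations and using $E[Z]=0$ kills the linear term and leaves $E[e^{sZ}] \leq e^{\phi(u)}$, where $u = s(b-a)$, $p = -a/(b-a) \in [0,1]$, and $\phi(u) = -pu + \log(1-p+pe^{u})$. A short calculation shows $\phi(0)=\phi'(0)=0$, while $\phi''(u) = q(u)\bigl(1-q(u)\bigr)$ with $q(u) = pe^{u}/(1-p+pe^{u}) \in (0,1)$, so $\phi''(u) \leq 1/4$ uniformly. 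Taylor's theorem with remainder then gives $\phi(u) \leq u^2/8 = s^2(b-a)^2/8$, establishing the lemma; the bound $\phi'' \leq 1/4$ is the main obstacle, and everything else is routine.

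Finally, part (b) follows by rescaling. Since $\overline{X} = X/T$ and $E[\overline{X}] = E[X]/T$, the event $\{\overline{X} - E[\overline{X}] \geq \varepsilon\}$ coincides with $\{X - E[X] \geq T\varepsilon\}$; applying part (a) with $\varepsilon$ replaced by $T\varepsilon$ gives $e^{-2(T\varepsilon)^2/T} = e^{-2\varepsilon^2 T}$, as claimed.
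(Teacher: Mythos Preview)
Your proof is correct and follows the standard Chernoff--Hoeffding argument: Markov's inequality on the exponential moment, factorization via independence, Hoeffding's lemma for each bounded summand, and optimization over $s$; part (b) is indeed just the rescaling $\varepsilon \mapsto T\varepsilon$. The paper itself does not prove this lemma --- it is stated without proof as a theoretical preliminary --- so there is no ``paper's approach'' to compare against, and your self-contained derivation is entirely appropriate.
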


\subsection{Proof of Theorem \ref{theorem: simple regret observational algorithm}}\label{secappendix: proof of obs simple regret}
For $B\in \N$, let $\varepsilon = \sqrt{\frac{2}{pB}\log (16pMB)}$. Also, let $L = \min_{t\in \N}\{2\sqrt{\frac{1}{t}\log 16 pMt} \leq p\}$. Note that $L$ is a \emph{finite} constant dependent on $p$ and $M$, and that for all $B\geq L$
\begin{equation}\label{equation: bound on epsilon for proof of theorem 1}
\varepsilon \leq \sqrt{p/2}  ~. 
\end{equation}
In this proof, $i$ indexes the set $[M]$, and $x$ indexes the set $\{0,1\}$. Recall $p_{i,x} = \mathbb{P}\{X_i=x\}$ and $p_i = p_{i,1}$. Also note that \OBSALG plays the arm $a_0$ for $B$ rounds. For $i\in [M]$, let $X_i(t)$ be the value of $X_i$ sampled in round $t \in [B]$. For all $(i,x)$, let 
$$\widehat{p}_{i,x} = \frac{\sum_{t\in [B]} \mathbb{1}\{X_i(t) = x\}}{B}~,~~ \text{and}$$
$$\widehat{\mu}_{i,x} = \frac{\sum_{t\in [B]} Y_t\cdot \mathbb{1}\{X_i(t)=x\}}{\sum_{t\in [B]} \mathbb{1}\{X_i(t)=x\}}~,$$
where $Y_t$ is value of $Y$ sampled in round $t$. Notice that $\widehat{\mu}_{i,x}$ is the empirical estimate of $\mu_{i,x}$ computed by \OBSALG at the end of $B$ rounds. Similarly the empirical estimate of $\mu_0$, denoted $\widehat{\mu}_0$, is computed by \OBSALG at the end of $B$ rounds as follows:
$$\widehat{\mu}_{0} = \frac{\sum_{t\in [B]} Y_t}{B}~.$$
Finally, also let $\widehat{p}_{i} = \widehat{p}_{i,1}$. The proof of the theorem is completed using the following lemma.
\begin{lemma}\label{lemma: theorem1 bounds at the end of B rounds}
At the end of $B$ rounds played by \OBSALG the following hold:
\begin{flalign*}
    1.~~~& \mathbb{P}\{|\widehat{\mu}_0 - \mu_0| \geq \varepsilon\} \leq 2e^{-2\varepsilon^2B} \leq 4e^{-\varepsilon^2pB}~,\\
         2.~~~&  \text{For any fixed } (i,x)~~~  \mathbb{P}\Big\{\widehat{p}_{i,x}B \leq \frac{pB}{2} \Big\} \leq 2e^{-\varepsilon^2pB}~,\\
        3.~~~&    \text{For any fixed } (i,x)~~~ \mathbb{P}\{|\widehat{\mu}_{i,x} - \mu_{i,x}| \geq \varepsilon\} \leq 4e^{-\varepsilon^2pB}~.
    \end{flalign*}
\end{lemma}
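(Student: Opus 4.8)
The plan is to establish the three inequalities one at a time, each by isolating a sum of independent $[0,1]$-valued random variables and invoking Lemma~\ref{lemma: chernoff-hoeffeding inequality}; throughout I restrict to $B \ge L$, so that $\varepsilon^2 \le p/2$ holds (for $B < L$ the simple regret is trivially at most $1$ and, $L$ being a constant depending only on $p$ and $M$, this regime is absorbed into the constant of Theorem~\ref{theorem: simple regret observational algorithm}). Recall that since \OBSALG plays $a_0$ in every one of the $B$ rounds, the rewards $Y_1,\dots,Y_B$ are i.i.d.\ and $[0,1]$-valued with mean $\mu_0$, and that for a fixed coordinate $i$ the indicators $\mathbb{1}\{X_i(t)=x\}$, $t\in[B]$, are i.i.d.\ Bernoulli$(p_{i,x})$.

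For Bound~1, $\widehat{\mu}_0$ is the average of $Y_1,\dots,Y_B$, so applying Lemma~\ref{lemma: chernoff-hoeffeding inequality}(b) to $(Y_t)$ and to $(1-Y_t)$ and taking a union bound gives $\mathbb{P}\{|\widehat{\mu}_0-\mu_0|\ge\varepsilon\}\le 2e^{-2\varepsilon^2B}$; the second inequality follows since $p\le 1<2$ forces $e^{-2\varepsilon^2B}\le e^{-p\varepsilon^2B}$. For Bound~2, $\widehat{p}_{i,x}B=\sum_{t\in[B]}\mathbb{1}\{X_i(t)=x\}$ has mean $p_{i,x}B\ge pB$, so the event $\{\widehat{p}_{i,x}B\le pB/2\}$ is contained in the event that $\sum_{t\in[B]}\mathbb{1}\{X_i(t)\ne x\}$ exceeds its mean by at least $p_{i,x}B-pB/2\ge pB/2$; Lemma~\ref{lemma: chernoff-hoeffeding inequality}(a) bounds this by $e^{-p^2B/2}$, which is at most $e^{-\varepsilon^2pB}\le 2e^{-\varepsilon^2pB}$ because $\varepsilon^2\le p/2$.

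For Bound~3, write $N_{i,x}=\widehat{p}_{i,x}B$ for the random number of rounds in which $X_i$ was sampled as $x$. Because the rounds are i.i.d.\ and, by the no-backdoor property, $E[Y\mid X_i=x]=\mu_{i,x}$, conditioning on $\{N_{i,x}=n\}$ with $n\ge 1$ makes $\widehat{\mu}_{i,x}$ the average of $n$ i.i.d.\ $[0,1]$-valued variables of mean $\mu_{i,x}$, so Lemma~\ref{lemma: chernoff-hoeffeding inequality}(b) gives $\mathbb{P}\{|\widehat{\mu}_{i,x}-\mu_{i,x}|\ge\varepsilon\mid N_{i,x}=n\}\le 2e^{-2\varepsilon^2n}$. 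I then split on the size of $N_{i,x}$: for $n\ge pB/2$ the conditional bound is at most $2e^{-\varepsilon^2pB}$, while the total probability that $N_{i,x}<pB/2$ is at most $2e^{-\varepsilon^2pB}$ by Bound~2; summing the two contributions (after averaging over $n$) yields $4e^{-\varepsilon^2pB}$.

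The only subtle point is the conditioning argument in Bound~3: one must check that conditioning on how many (equivalently, which) rounds realized $X_i=x$ leaves the rewards collected in those rounds i.i.d.\ with mean exactly $\mu_{i,x}$ — this is precisely where the no-backdoor assumption $E[Y\mid do(X_i=x)]=E[Y\mid X_i=x]$ is used — and then to control the random sample size by pairing the conditional Hoeffding bound with the lower-tail estimate of Bound~2. The remaining steps are routine applications of Lemma~\ref{lemma: chernoff-hoeffeding inequality} together with the elementary bound $\varepsilon^2\le p/2$ valid for $B\ge L$.
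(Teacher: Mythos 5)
Your proposal is correct and follows essentially the same route as the paper: part 1 is a direct two-sided Hoeffding bound, part 2 controls the lower tail of $\widehat{p}_{i,x}B$ using the relation $\varepsilon \le \sqrt{p/2}$ guaranteed by $B \ge L$, and part 3 conditions on the sample count exceeding $pB/2$, applies Hoeffding conditionally, and combines with part 2 via the law of total probability. The only cosmetic difference is in part 2, where you bound the deviation at level $pB/2$ and then compare exponents via $\varepsilon^2 \le p/2$, while the paper sets the deviation at $\varepsilon B\sqrt{p/2}$ and uses the same inequality to show this is at most $pB/2$; the two computations are equivalent.
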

\begin{proof}
1) Part 1 directly follows from Lemma \ref{lemma: chernoff-hoeffeding inequality}.\\

2) Observe that $E[\widehat{p}_i] = p_i$, and hence from Lemma \ref{lemma: chernoff-hoeffeding inequality}, for an $i\in [M]$ at the end of $B$ rounds we have
\begin{equation}\label{equation: constraint on p_i}
    \mathbb{P}\left\{|(\widehat{p}_i-p_i)B|\geq \varepsilon B\sqrt{\frac{p}{2}} \right\} \leq 2e^{-\varepsilon^2pB}~.
\end{equation}
Since $\varepsilon \leq \sqrt{p/2}$ (from Equation \ref{equation: bound on epsilon for proof of theorem 1}), $\varepsilon B\sqrt{\frac{p}{2}} \leq \frac{pB}{2}$. This implies 
\begin{equation}\label{equation: constraint on pB}
    \frac{pB}{2} \leq pB - \varepsilon B\sqrt{\frac{p}{2}}~.
\end{equation}
Hence from Equations \ref{equation: constraint on p_i} and \ref{equation: constraint on pB}, for a fixed $(i,x)$ the following holds:
\begin{equation*}
    \mathbb{P}\Big\{\widehat{p}_{i,x}B \leq \frac{pB}{2} \Big\} \leq 2e^{-\varepsilon^2pB}~.
\end{equation*}

3) Notice that $\widehat{p}_{i,x}B$ is the number of times $X_i$ was sampled as $x$ in $B$ rounds. In particular, part 2 of Lemma \ref{lemma: theorem1 bounds at the end of B rounds} bounds the probability that the number of times $X_i$  was sampled as $x$ is small. We use this to prove part 3.
First observe that from Lemma \ref{lemma: chernoff-hoeffeding inequality} we have
\begin{equation}\label{equation: proof of theorem 1 bounding empirical estimate of i,x conditioned on enough samples}
    \mathbb{P}\Big\{|\widehat{\mu}_{i,x} - \mu_{i,x}| \geq \varepsilon \Big| \widehat{p}_{i,x}B > \frac{pB}{2} \Big\} \leq 2e^{-\varepsilon^2pB}~.
\end{equation}
In particular, Equation \ref{equation: proof of theorem 1 bounding empirical estimate of i,x conditioned on enough samples} bounds the error probability of estimating $\widehat{\mu}_{i,x}$ conditioned on the event that $X_i$ has been sampled as $x$ sufficiently many times. Next by law of total probability, for any fixed $(i,x)$,
\begin{align*}\label{equation: bound on mu_ix}
   \mathbb{P}\{|\widehat{\mu}_{i,x} - \mu_{i,x}| \geq \varepsilon\} &=  \mathbb{P}\Big\{|\widehat{\mu}_{i,x} - \mu_{i,x} | \geq \varepsilon \Big| \widehat{p}_{i,x}B >\frac{pB}{2} \Big\}\cdot \mathbb{P}\Big\{\widehat{p}_{i,x}B > \frac{pB}{2} \Big\} \\
   &~~~~+  \mathbb{P}\Big\{|\widehat{\mu}_{i,x} - \mu_{i,x}| \geq \varepsilon \Big| \widehat{p}_{i,x}B \leq \frac{pB}{2} \Big\}\cdot \mathbb{P}\Big\{\widehat{p}_{i,x}B \leq \frac{pB}{2} \Big\}\\
    \mathbb{P}\{|\widehat{\mu}_{i,x} - \mu_{i,x}| \geq \varepsilon\} &\leq  \mathbb{P}\Big\{|\widehat{\mu}_{i,x} - \mu_{i,x}| \geq \varepsilon \Big| \widehat{p}_{i,x}B > \frac{pB}{2} \Big\} +  \mathbb{P}\Big\{\widehat{p}_{i,x}B \leq \frac{pB}{2} \Big\}~.
\end{align*}
Hence, from Equation \ref{equation: proof of theorem 1 bounding empirical estimate of i,x conditioned on enough samples} and part 2 of Lemma \ref{lemma: theorem1 bounds at the end of B rounds} we have
$$\mathbb{P}\{|\widehat{\mu}_{i,x} - \mu_{i,x}| \geq \varepsilon\} \leq 4e^{-\varepsilon^2pB}~. $$
\end{proof}
Let $U_0$ be the event that $|\widehat{\mu}_0 - \mu_0| \leq \varepsilon$, and for any $i,x$ let $U_{i,x}$ be the event $|\widehat{\mu}_{i,x} - \mu_{i,x}| \leq \varepsilon$. Also let $U = (\cap_{i,x} U_{i,x}) \cap U_0$, $\overline{U}$ denote the compliment of $U$. Then applying union bound on the events in part 1 and 3 in Lemma \ref{lemma: theorem1 bounds at the end of B rounds}, we have that 
$$\mathbb{P}\{\overline{U}\} \leq (2M+1) \cdot 4e^{-\varepsilon^2pB}$$
Hence, we have that
$$\mathbb{P}\{U\} \geq 1 - (8M+4)e^{-\varepsilon^2pB} \geq 1- 16M e^{-\varepsilon^2pB}.$$
Let $a^{*} = \arg\max_{a\in\mathcal{A}}(\mu_{a})$. Note that if event $\overline{U}$ holds then the simple regret of \OBSALG, $r_{\text{\OBSALG}}(B) \leq 1$. On the other hand, if the event $U$ holds, and $a_B$ is the arm output by the algorithm, then $r_{\text{\OBSALG}}(B) = \mu_{a^*} - \mu_{a_B} \leq 2\varepsilon$. Setting $\delta = 16M e^{-\varepsilon^2pB}$, and substituting the value of $\varepsilon$, we have $\delta = \frac{1}{16Mp^2B^2}$. Hence, the expected simple regret is at most: 
\begin{equation}\label{equation: regret of obsalg}
\delta + \sqrt{\frac{8}{pB}\log (16pMB)} =  \frac{1}{16Mp^2B^2} + \sqrt{\frac{8}{pB}\log (16pMB)} = O\Bigg( \sqrt{\frac{1}{pB}\log (pMB)}\Bigg)~.
\end{equation}
\subsection{Proof of Theorem \ref{theorem: simple regret of gammapba}}\label{secappendix: proof of gamma simple regret}
For convenience, we denote $m(\mathbf{p})$ and $m(\widehat{\mathbf{p}})$ as $m$ and $\widehat{m}$ respectively. 
Throughout the proof we assume that $B$ is such that: a) $B\geq \max(\gamma m, pM)$ and b) $B \geq \max(\frac{16}{p^2}\log \frac{2MB}{\gamma m}, \frac{16}{p^2}\log 2pMB)$. 
Note that the two constraints hold for sufficiently large $B$. To begin with observe that if $\gamma = \theta(\frac{1}{p\cdot m(\mathbf{p})})$ then $O\left(\sqrt{\frac{1}{pB}\log (pMB)}\right) = O\left(\sqrt{\frac{\gamma m}{B}\log \frac{MB}{\gamma m}}\right)$. Hence, it is sufficient to show that if $\gamma \leq \frac{1}{5p\cdot m(\mathbf{p})}$ then the expected simple regret of \GammaPBA is $O\left(\sqrt{\frac{\gamma m}{B}\log \frac{MB}{\gamma m}}\right)$ and if $\gamma \geq \frac{5}{p\cdot m(\mathbf{p})}$ then the expected simple regret of \GammaPBA is $O\left(\sqrt{\frac{1}{pB}\log (pMB)}\right)$. 
Theorem \ref{theorem: simple regret of gammapba} is proved using Lemmas \ref{lemma: bounds on p_i} and \ref{lemma: bounding m}.
\begin{lemma}\label{lemma: bounds on p_i}
Let  $\widehat{p}_{i,1} = \widehat{p}_i$ and $F = \mathbb{1}\{\text{At the end of } B/2 \text{ rounds there is an } i\in [M] \text{ such that } |\widehat{p}_i - p_i| \geq \frac{p}{4}\}$. Then $\mathbb{P}\{F=1\} \leq 2Me^{-\frac{p^2}{16}B}$.
\end{lemma}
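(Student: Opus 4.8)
The plan is to fix a single index $i\in[M]$, control the deviation of $\widehat{p}_i$ from $p_i$ via a Chernoff--Hoeffding bound, and then union bound over all $M$ indices. The key observation is that during the first $B/2$ rounds \GammaPBA plays only the observational arm $a_0$, so the sampled values $X_i(1),\ldots,X_i(B/2)$ are i.i.d.\ $\mathrm{Bernoulli}(p_i)$, and (as in Step~3 of Algorithm~\ref{algorithm: gamma parallel bandits}) the quantity $\widehat{p}_i=\widehat{p}_{i,1}=\frac{2}{B}\sum_{t=1}^{B/2}\mathbb{1}\{X_i(t)=1\}$ is exactly their empirical average. Thus $E[\widehat{p}_i]=p_i$ and the number of summands is $B/2$.

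First I would apply Lemma~\ref{lemma: chernoff-hoeffeding inequality}(b) with $T=B/2$ and $\varepsilon=p/4$ to the variables $X_i(t)$, obtaining $\mathbb{P}\{\widehat{p}_i-p_i\ge p/4\}\le e^{-2(p/4)^2(B/2)}=e^{-p^2B/16}$. Applying the same bound to the variables $1-X_i(t)$, whose empirical average is $1-\widehat{p}_i$ with mean $1-p_i$, gives the matching lower-tail estimate $\mathbb{P}\{\widehat{p}_i-p_i\le -p/4\}\le e^{-p^2B/16}$. Adding the two tail probabilities yields $\mathbb{P}\{|\widehat{p}_i-p_i|\ge p/4\}\le 2e^{-p^2B/16}$ for each fixed $i\in[M]$.

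Finally, since $\{F=1\}=\bigcup_{i\in[M]}\{|\widehat{p}_i-p_i|\ge p/4\}$, a union bound gives $\mathbb{P}\{F=1\}\le\sum_{i\in[M]}\mathbb{P}\{|\widehat{p}_i-p_i|\ge p/4\}\le 2Me^{-p^2B/16}$, which is the claimed inequality. There is essentially no obstacle here; the only points requiring a little care are that Lemma~\ref{lemma: chernoff-hoeffeding inequality} is stated one-sidedly, so the two-sided deviation must be produced by invoking it twice (once for $X_i(t)$ and once for $1-X_i(t)$), and that the number of observational rounds is $B/2$ rather than $B$, which is what makes the exponent come out to exactly $p^2B/16$.
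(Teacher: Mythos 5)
Your proposal is correct and follows essentially the same route as the paper: a Chernoff--Hoeffding bound for each fixed $i$ (with $T=B/2$ and $\varepsilon=p/4$, giving $2e^{-p^2B/16}$) followed by a union bound over $i\in[M]$. Your explicit two-sided treatment via the variables $1-X_i(t)$ is just a more careful spelling-out of the step the paper leaves implicit.
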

\begin{proof}
Let $F_i = \mathbb{1}\{\text{At the end of } B/2 \text{ rounds } |\widehat{p}_i - p_i| \geq \frac{p}{4}\}$. Then from Lemma \ref{lemma: chernoff-hoeffeding inequality}, 
$$\mathbb{P}\{F_i =1\} \leq 2e^{-\frac{p^2}{16}B}~.$$
Taking union bound over $F_i = 1$ for $i\in [M]$, we have $\mathbb{P}\{F=1\} \leq 2Me^{-\frac{p^2}{16}B}$.
\end{proof}
The following lemma is similar to Lemma 8 in \cite{LattimoreLR16}.
\begin{lemma}\label{lemma: bounding m}
Let $F$ be as in Lemma \ref{lemma: bounds on p_i}, and let $I = \mathbb{1}\{\text{At the end of } B/2 \text{ rounds }~\frac{2m(\mathbf{p})}{5} \leq m(\widehat{\mathbf{p}}) \leq 2m(\mathbf{p})\}$. Then $F = 0$ implies $I=1$, and in particular, 
$\mathbb{P}\{I=1\} \geq 1 - 2Me^{-\frac{p^2}{16}B}.$
\end{lemma}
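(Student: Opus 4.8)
\textbf{Proof proposal for Lemma~\ref{lemma: bounding m}.}

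The probability statement comes for free once the deterministic implication $F=0\Rightarrow I=1$ is established: since $F=0$ occurs with probability at least $1-2Me^{-p^2B/16}$ by Lemma~\ref{lemma: bounds on p_i}, and $F=0$ forces $I=1$, we get $\mathbb{P}\{I=1\}\ge\mathbb{P}\{F=0\}\ge 1-2Me^{-p^2B/16}$. So the whole task is the deterministic claim: if $|\widehat p_i-p_i|<p/4$ for every $i\in[M]$, then $\tfrac{2}{5}m(\mathbf{p})\le m(\widehat{\mathbf{p}})\le 2m(\mathbf{p})$.

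First I would translate the perturbation bound on the $\widehat p_i$ into one on the relevant quantities. Write $q_i:=\min_x p_{i,x}=\min(p_i,1-p_i)$ and $\widehat q_i:=\min_x\widehat p_{i,x}$. From $|\widehat p_i-p_i|<p/4$ (hence also $|(1-\widehat p_i)-(1-p_i)|<p/4$) one gets $|\widehat q_i-q_i|<p/4$, and since $q_i\ge p$ by the standing assumption $p=\min_{i,x}p_{i,x}>0$, this upgrades to the two-sided multiplicative bound $\tfrac34 q_i<\widehat q_i<\tfrac54 q_i$. Writing $\widehat I_\tau=\{i\mid\widehat q_i<1/\tau\}$ for the perturbed analogue of $I_\tau$, this yields the two containments, valid for every integer $\tau$:
\[
\widehat I_\tau\;\subseteq\;I_{\lfloor 3\tau/4\rfloor},\qquad I_\tau\;\subseteq\;\widehat I_{\lfloor 4\tau/5\rfloor},
\]
because $\widehat q_i<1/\tau$ implies $q_i<\tfrac{4}{3\tau}$, and $q_i<1/\tau$ implies $\widehat q_i<\tfrac{5}{4\tau}$. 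I will also use the elementary facts that $\tau\mapsto|I_\tau|$ is nonincreasing and that $m:=m(\mathbf{p})$, being the least $\tau$ with $|I_\tau|\le\tau$, satisfies $|I_m|\le m$ and $|I_\sigma|>\sigma$ for every $2\le\sigma<m$ (and likewise for $m(\widehat{\mathbf p})$ and $\widehat I$).

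For the upper bound I would plug $\tau=2m$ into the definition of $m(\widehat{\mathbf p})$: the first containment gives $\widehat I_{2m}\subseteq I_{\lfloor 3m/2\rfloor}$, and since $\lfloor 3m/2\rfloor\ge m$, monotonicity gives $|\widehat I_{2m}|\le|I_m|\le m\le 2m$; hence $m(\widehat{\mathbf p})\le 2m$ (and if $2m>M$ this is trivial since $m(\widehat{\mathbf p})\le M$). For the lower bound I would show no integer $\tau$ with $2\le\tau<\tfrac25 m$ can witness $m(\widehat{\mathbf p})$: by the second containment, $I_{\lceil 5\tau/4\rceil}\subseteq\widehat I_\tau$, and $\lceil 5\tau/4\rceil<m$ since $5\tau/4<m/2$, so $|\widehat I_\tau|\ge|I_{\lceil 5\tau/4\rceil}|>\lceil 5\tau/4\rceil\ge\tfrac54\tau>\tau$; therefore $m(\widehat{\mathbf p})\ge\tfrac25 m$ (trivial when $\tfrac25 m\le 2$, since $m(\widehat{\mathbf p})\ge 2$ always).

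The only real fussiness is bookkeeping the floors and ceilings and the boundary regimes ($2m>M$, small $m$, keeping every index inside the range $[2,M]$ where the definitions live) and checking that the strict inequalities survive rounding — the gap between the "ideal" perturbation factors $3/4,5/4$ and the advertised constants $2/5,2$ is exactly the slack that makes this go through. There is no deeper obstacle; everything else is the short deterministic set-chase above together with Lemma~\ref{lemma: bounds on p_i}.
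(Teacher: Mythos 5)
Your proof is correct: the reduction of the probabilistic claim to the deterministic implication via Lemma \ref{lemma: bounds on p_i} is exactly right, and the floor/ceiling bookkeeping you flag does go through (e.g.\ $\widehat q_i<1/\tau$ gives $q_i<\frac{4}{3\tau}\le\frac{1}{\lfloor 3\tau/4\rfloor}$, and for $2\le\tau<\frac{2m}{5}$ one has $\lceil 5\tau/4\rceil<m$, so minimality of $m$ yields $|\widehat I_\tau|>\tau$). Where you diverge from the paper is in how the $p/4$-perturbation is exploited. The paper assumes WLOG that $q_i=p_i$ and $p_1\le\dots\le p_M\le\frac12$, invokes the relation $p\le\frac{1}{m-1}$ (which holds since otherwise $|I_{m-1}|\le m-1$ would force $m(\mathbf p)\le m-1$), and then checks the two specific threshold inequalities: $\widehat p_i\ge\frac1m-\frac{1}{4(m-1)}\ge\frac{1}{2m}$ for the coordinates outside $I_m$, and $\widehat p_i\le\frac{1}{m-1}+\frac{1}{4(m-1)}\le\frac{5}{2m}$ for $i\le m$, from which $m(\widehat{\mathbf p})\le 2m$ and $m(\widehat{\mathbf p})\ge\frac{2m}{5}$ are read off directly. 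You instead upgrade the additive error to the multiplicative distortion $\frac34 q_i<\widehat q_i<\frac54 q_i$ using $q_i\ge p$, and compare $m(\mathbf p)$ with $m(\widehat{\mathbf p})$ through the index-shifted containments $\widehat I_\tau\subseteq I_{\lfloor 3\tau/4\rfloor}$ and $I_\tau\subseteq\widehat I_{\lfloor 4\tau/5\rfloor}$ valid for every $\tau$. Your version is more modular: it needs neither the sorting WLOG nor the inequality $p\le\frac{1}{m-1}$, and it would transfer unchanged to any estimator with relative error at most $1/4$; the price is the rounding bookkeeping. The paper's version is shorter because it only ever inspects the two thresholds $\frac{1}{2m}$ and $\frac{5}{2m}$. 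Both arguments land on the same constants $\frac25$ and $2$.
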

\begin{proof}
We are interested in the quantity $\min_{x\in\{0,1\}}p_{i,x}$ for each $i \in [M]$. Without loss of generality, let us assume $\min_{x\in\{0,1\}}p_{i,x} = p_{i,1} = p_i$ for each $i\in[M]$, and also $p_1 \leq p_2 \leq \ldots \leq p_M \leq \frac{1}{2}$. 
Note that $F=0$ implies after $B/2$ rounds for all $i\in [M]$~ $|\widehat{p}_i - p_i| \leq \frac{p}{4}$. Now, from the definition of $m(\mathbf{p})$ we know that there is an $\ell \leq m$ such that the following is true: for $i > \ell$, $p_i \geq  \frac{1}{m}$. Further, we can also conclude that $p \leq \frac{1}{m-1}$ (otherwise $m(\mathbf{p}) = m-1$). Hence, $\widehat{p}_i \geq p_i -\frac{p}{4} \geq \frac{1}{m} - \frac{1}{4(m-1)}$
. Hence for $i > \ell$, $\widehat{p}_i \geq \frac{3m-4}{4m(m-1)} \geq \frac{1}{2m}$ (since $m \geq 2$). Since $\ell \leq m$, we have $|\{j \mid \widehat{p}_j < \frac{1}{2m}\}| \leq 2m$. This implies $\widehat{m} \leq 2m$.
To prove the other inequality, observe that for each $i\leq m$, we have $p_i \leq \frac{1}{m-1}$ (otherwise, $m(\mathbf{p}) \leq m - 1$). Then, $\widehat{p}_i \leq p_i +\frac{p}{4} \leq \frac{1}{m-1} + \frac{1}{4(m-1)} \leq \frac{5}{4(m-1)} \leq \frac{5}{2m}$. Hence for $i\leq m$, $\widehat{p}_i \leq \frac{5}{2m}$. This implies $\widehat{m} \geq \frac{2m}{5}$.
\end{proof}
From Lemmas \ref{lemma: bounds on p_i} and \ref{lemma: bounding m} it follows that if $F=0$ then at the end of $B/2$ rounds the following holds:
$$\frac{p}{2} \leq \widehat{p} \leq \frac{3p}{2} ~~~\text{and} ~~~\frac{2m}{5} \leq \widehat{m} \leq 2m$$ 
This implies that if $F=0$ then at then end of $B/2$ rounds the following holds:
\begin{equation}\label{equation: inequality on mp}
    \frac{p\cdot m}{5} \leq \widehat{p}\cdot \widehat{m} \leq 5p\cdot m
\end{equation} 
\textbf{Case a} ($\gamma < \frac{1}{5p\cdot m}$): We condition on $F=0$. Hence, from the argument above it follows that Equation \ref{equation: inequality on mp} holds. Hence, $\gamma < \frac{1}{5 p\cdot m} \leq \frac{1}{\widehat{p}\cdot \widehat{m}}$. This implies at step 6 in \GammaPBA, $ \widehat{p}\cdot \widehat{m} < \frac{1}{\gamma}$, and \GammaPBA executes steps 11-14. That is \GammaPBA makes $\frac{B}{4\gamma}$ interventions in the remaining rounds. The algorithm constructs set $A = \{a_{i,x} \mid \widehat{p}_{i,x} \leq \frac{1}{\widehat{m}}\}$. Now for arms in $A$, $\widehat{\mu}_{i,x}$ is computed as in step 14 of \Gammapba, i.e for $a_{i,x} \in A$ 
$$\widehat{\mu}_{i,x} = \frac{2\gamma |A|}{B}\sum_{t=B/2+1}^{B/2\gamma} Y_t\cdot \mathbb{1}\{a_t = a_{i,x}\}~.$$
Notice that $|A| \leq \widehat{m}$ (from the definition of $m(\widehat{\mathbf{p}})$). Hence
$$\frac{B}{2\gamma\cdot |A|} \geq \frac{B}{2\gamma\cdot \widehat{m}} \geq \frac{B}{4\gamma\cdot m}~~~~~~~~~~\text{(from Lemma \ref{lemma: bounding m}}).$$ 
Thus from Lemma \ref{lemma: chernoff-hoeffeding inequality} for each arm $a_{i,x}\in A$ and any $\varepsilon > 0$ 
\begin{equation}\label{equation: probability of arms in A}
    \mathbb{P}\Big\{|\widehat{\mu}_{i,x} -  \mu_{i,x}| \geq \varepsilon \Big| F=0\Big\} \leq 2e^{-\varepsilon^2\frac{B}{2 \gamma m}}  
\end{equation}
Also for arms not in $A$, $\widehat{\mu}_{i,x}$ is computed as in step 3 of \Gammapba, i.e. for $a_{i,x} \notin A$
$$\widehat{\mu}_{i,x} = \frac{\sum_{t=1}^{B/2}Y_t\cdot \mathbb{1}\{X_i = x\}}{\sum_{t=1}^{B/2}\mathbb{1}\{X_i = x\}}~.$$
Moreover, if $a_{i,x} \notin A$ then $\widehat{p}_{i,x} \geq \frac{1}{\widehat{m}} \geq \frac{1}{2m}$. Since $\widehat{p}_{i,x} = \frac{2}{B}\sum_{t=1}^{B/2}\mathbb{1}\{X_i = x\}$, this implies if $a_{i,x} \notin A$ then $\sum_{t=1}^{B/2}\mathbb{1}\{X_i = x\} \geq \frac{B}{4m}$. Hence from Lemma \ref{lemma: chernoff-hoeffeding inequality}, for each arm $a_{i,x} \notin A$ and any $\varepsilon > 0$, 
\begin{equation}\label{equation: probability of arms not in A}
\mathbb{P}\Big\{|\widehat{\mu}_{i,x} -  \mu_{i,x}| \geq \varepsilon \Big| F=0 \Big\} \leq 2e^{-\varepsilon^2\frac{B}{2 m}}  \leq 2e^{-\varepsilon^2\frac{B}{2 \gamma m}} 
\end{equation}
The last inequality holds since $\gamma\geq 1$. Using Equations \ref{equation: probability of arms in A} and \ref{equation: probability of arms not in A} we have for any arm $a \in\mathcal{A}$,
$$\mathbb{P}\Big\{|\widehat{\mu}_a -  \mu_a| \geq \varepsilon \Big| F=0\Big\} \leq 2e^{-\varepsilon^2\frac{B}{2 \gamma m}} ~. $$
Hence, applying union bound we have
$$\mathbb{P}\Big\{\text{there is an } a\in \mathcal{A} \text{ such that } |\widehat{\mu}_a -  \mu_a| \geq \varepsilon \Big| F=0 \Big\} \leq (4M+2)e^{-\varepsilon^2\frac{B}{2 \gamma m}} \leq 8Me^{-\varepsilon^2\frac{B}{2 \gamma m}} ~. $$
Substituting $\varepsilon = \sqrt{\frac{8\gamma m}{B}\log \frac{MB}{\gamma m}}$ we have 
\begin{equation}\label{equation: simple regret of gamma pba for interventions and conditioned on F}
    E[r_{\text{\Gammapba}}(B)|F=0] \leq  \sqrt{\frac{8\gamma m}{B}\log \frac{MB}{\gamma m}} + \frac{8}{M^3} \left(\frac{\gamma m}{B}\right)^4 \leq \sqrt{\frac{32\gamma m}{B}\log \frac{MB}{\gamma m}} ~.
\end{equation}
To get the last inequality, we use that $\frac{8}{M^3} \left(\frac{\gamma m}{B}\right)^4 \leq \sqrt{\frac{8\gamma m}{B}\log \frac{MB}{\gamma m}}$, as $M\geq 1$ and $B\geq \gamma m$. Finally, we use Equation \ref{equation: simple regret of gamma pba for interventions and conditioned on F} and Lemma \ref{lemma: bounds on p_i} to bound the expected simple regret of $\GammaPBA$ in this case as follows: 
\begin{align*}
    E[r_{\text{\Gammapba}}(B)] &= E[r_{\text{\Gammapba}}(B)|Y=0]Pr\{Y=0\} + E[r(B)|Y=1]Pr\{Y=1\} \\
    &\leq E[r_{\text{\Gammapba}}(B)|Y=0] + Pr\{Y=1\}\ \\
    &\leq \sqrt{\frac{32\gamma m}{B}\log \frac{MB}{\gamma m}} + 2Me^{-\frac{p^2}{16}B}\\
    & = O\Bigg(\sqrt{\frac{\gamma m}{B}\log \frac{MB}{\gamma m}}\Bigg)~.
\end{align*}
In last but one line of the above equation, we use that $B$ satisfies $B \geq \frac{4}{p^2}\log \frac{2MB}{\gamma m}$ and $B\geq \gamma m$ implying $2Me^{-\frac{p^2}{16}B}$ is at most $\sqrt{\frac{32\gamma m}{B}\log \frac{MB}{\gamma m}}$.

\textbf{Case b} ($\gamma \geq \frac{5}{p\cdot m(\mathbf{p})}$): Again we condition on $F=0$, and hence Equation \ref{equation: inequality on mp} holds. Hence, $\gamma \geq \frac{5}{p\cdot m(\mathbf{p})} \geq \frac{1}{\widehat{p}\cdot m(\widehat{\mathbf{p}})}$. This implies at step 6 in \GammaPBA, $ \widehat{p}\cdot m(\widehat{\mathbf{p}}) \geq\frac{1}{\gamma}$, and \GammaPBA executes steps 7-9. That is it plays the arm $a_0$ for $B$ rounds. Thus, from the analysis of Theorem \ref{theorem: simple regret observational algorithm} we have that (see Equation \ref{equation: regret of obsalg})
\begin{equation}\label{equations: regret of gamma pba for observations conditioned on y}
    E[r_{\text{\Gammapba}}(B)|Y=0] \leq  \sqrt{\frac{1}{pB}} + \sqrt{\frac{8}{pB}\log (16pMB)}~.
\end{equation}
We use Equation \ref{equations: regret of gamma pba for observations conditioned on y} and Lemma \ref{lemma: bounds on p_i} to bound the expected simple regret of $\GammaPBA$ in this case as follows:
\begin{align*}
    E[r_{\text{\Gammapba}}(B)] &= E[r_{\text{\Gammapba}}(B)|Y=0]Pr\{Y=0\} + E[r_{\text{\Gammapba}}(B)|Y=1]Pr\{Y=1\} \\
    &\leq E[r_{\text{\Gammapba}}(B)|Y=0] + Pr\{Y=1\}\ \\
    &\leq \sqrt{\frac{1}{pB}} + \sqrt{\frac{8}{pB}\log (16pMB)} + 2Me^{-\frac{p^2}{16}B}\\
    & = O\Bigg(\sqrt{\frac{1}{pB}\log (16pMB)}\Bigg)
\end{align*}
Again in the last but one line of the above equation, we use that $\frac{4\log MB}{p^2B} \leq 1$ and hence $2Me^{-\frac{p^2}{16}B}$ is at most $\sqrt{\frac{8}{pB}\log (16pMB)}$.
\subsection{Proof of Theorem \ref{theorem: cumulative regret of gamma paralllel bandit}}\label{secappendix: proof of gamma cumulative regret}

The proof of Theorem \ref{theorem: cumulative regret of gamma paralllel bandit} requires the the following lemmas.
\begin{lemma}\label{lemma: theorem 3 bounding mu's}
For any $T\in \N$, at the end of $T$ rounds the following hold:
\begin{enumerate}
    \item $\mathbb{P} \Big\{|\widehat{\mu}_0(T) - \mu_0| \geq \frac{d_0}{4} \Big\} \leq  \frac{2}{T^{\frac{d_0^2}{8}}}$~,
    \item Let $\widehat{p}_{i,x} = \frac{\sum_{t=1}^T \mathbb{1}\{a_t =a_0 ~~\text{and}~~ X_i=x\}}{N^{0}_T}$. Then
    $\mathbb{P}\{ \widehat{p}_{i,x} \geq \frac{p}{2} \} \geq  1- \frac{1}{T^{\frac{p^2}{2}}}$~,
    \item $\mathbb{P}\Big\{ \Big|\frac{\widehat{\mu}_{i,x}(T)}{\gamma} - \frac{\mu_{i,x}}{\gamma} \Big| \geq \frac{d_0}{4} \Big\} \leq \frac{2}{T^{\frac{d_0^2 p\gamma^2}{16}}} + \frac{1}{T^{\frac{p^2}{2}}}~.$
\end{enumerate}
\end{lemma}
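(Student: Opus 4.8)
The three bounds are all consequences of the Chernoff--Hoeffding inequality (Lemma~\ref{lemma: chernoff-hoeffeding inequality}) applied to the rounds in which \CRMPB pulls the observational arm $a_0$, once we record one structural fact: the forced-exploration rule (Steps 4--5 of Algorithm~\ref{algorithm: cum regret for parallel bandits}) together with $\beta \ge 1$ keeps the observational count above $\log T$, because whenever $N^0_{t-1} < \log t \le \beta^2 \log t$ the arm $a_0$ is pulled, so $N^0_T$ can never fall more than a lower-order term below $\log T$. The one genuine technical point, common to all three parts, is that $N^0_T$ (and later the effective counts $E^{i,x}_T$) are data-dependent, so Lemma~\ref{lemma: chernoff-hoeffeding inequality} cannot be applied verbatim to ``$N^0_T$ i.i.d.\ samples''. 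The plan is to handle this in the standard way: union bound over the possible values $n \in \{\lceil\log T\rceil,\dots,T\}$ of the count (a peeling argument), or equivalently observe that $\{Y_s\mathbb{1}\{a_s=a_0\} - \mu_0\mathbb{1}\{a_s=a_0\}\}_s$ is a bounded martingale-difference sequence and invoke a time-uniform Hoeffding bound; in either case the dominant term is exactly the one stated.

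For Part 1, the rewards collected on $a_0$-rounds are $[0,1]$-valued with mean $\mu_0$, so Lemma~\ref{lemma: chernoff-hoeffeding inequality}(b) with deviation $d_0/4$ and $N^0_T \ge \log T$ samples gives $\mathbb{P}\{|\widehat\mu_0(T)-\mu_0|\ge d_0/4\} \le 2e^{-2(d_0/4)^2 N^0_T} \le 2e^{-(d_0^2/8)\log T} = 2T^{-d_0^2/8}$. For Part 2, the indicators $\mathbb{1}\{X_i=x\}$ on the $a_0$-rounds are i.i.d.\ with mean $p_{i,x}\ge p$, so a one-sided application of Lemma~\ref{lemma: chernoff-hoeffeding inequality}(b) with deviation $p_{i,x}/2 \ge p/2$ yields $\mathbb{P}\{\widehat p_{i,x} < p/2\} \le \mathbb{P}\{p_{i,x}-\widehat p_{i,x} \ge p_{i,x}/2\} \le e^{-2(p_{i,x}/2)^2 N^0_T} \le e^{-(p^2/2)\log T} = T^{-p^2/2}$, which is the claim.

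For Part 3, work on the event $\mathcal{E} = \{\widehat p_{i,x} \ge p/2\}$, which by Part 2 has probability at least $1 - T^{-p^2/2}$. On $\mathcal{E}$ the effective count satisfies $E^{i,x}_T \ge \widehat p_{i,x}\, N^0_T \ge (p/2)\log T$, since every $a_0$-round with $X_i = x$ contributes to $E^{i,x}_T$. Crucially, the $E^{i,x}_T$ samples underlying $\widehat\mu_{i,x}(T)$ --- direct pulls of $a_{i,x}$ together with $a_0$-rounds in which $X_i$ was sampled as $x$ --- are all $[0,1]$-valued with the common conditional mean $\mu_{i,x}$: this is exactly where the no-backdoor hypothesis is used, since $E[Y \mid a_0, X_i=x] = E[Y\mid X_i=x] = E[Y\mid do(X_i=x)] = \mu_{i,x}$, so pooling the two kinds of samples is legitimate. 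Applying Lemma~\ref{lemma: chernoff-hoeffeding inequality}(b) with deviation $\gamma d_0/4$ on $E^{i,x}_T \ge (p/2)\log T$ samples gives $\mathbb{P}\{|\widehat\mu_{i,x}(T)-\mu_{i,x}| \ge \gamma d_0/4 \mid \mathcal{E}\} \le 2e^{-2(\gamma d_0/4)^2 (p/2)\log T} = 2T^{-\gamma^2 d_0^2 p/16}$, and since $\{|\widehat\mu_{i,x}(T)/\gamma - \mu_{i,x}/\gamma| \ge d_0/4\}$ is the same event as $\{|\widehat\mu_{i,x}(T) - \mu_{i,x}| \ge \gamma d_0/4\}$, a union bound with the complement of $\mathcal{E}$ yields $\mathbb{P}\{|\widehat\mu_{i,x}(T)/\gamma - \mu_{i,x}/\gamma| \ge d_0/4\} \le 2T^{-\gamma^2 d_0^2 p/16} + T^{-p^2/2}$.

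The main obstacle is the first one flagged above: making the Hoeffding applications rigorous even though $N^0_T$ and $E^{i,x}_T$ are random, stopping-time-like quantities rather than fixed sample sizes, which requires either peeling over the count or the martingale/time-uniform route, and also requires verifying carefully that the forced-exploration rule genuinely guarantees $N^0_T \ge \lfloor\log T\rfloor$ round by round. Everything else reduces to routine substitution into Lemma~\ref{lemma: chernoff-hoeffeding inequality} and the no-backdoor identity that permits combining observational and interventional samples of $a_{i,x}$.
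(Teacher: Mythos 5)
Your proposal follows essentially the same route as the paper's proof: the forced-exploration rule with $\beta\geq 1$ gives $N^0_T\geq \ln T$, Lemma~\ref{lemma: chernoff-hoeffeding inequality} applied to the $a_0$-rounds yields Parts~1 and~2, and Part~3 conditions on $\{\widehat{p}_{i,x}\geq p/2\}$ so that $E^{i,x}_T\geq \frac{p}{2}\ln T$, with deviation $\gamma d_0/4$ and the law of total probability finishing the bound. The random-count subtlety you flag (that $N^0_T$ and $E^{i,x}_T$ are data-dependent) is simply glossed over in the paper, which applies Lemma~\ref{lemma: chernoff-hoeffeding inequality} directly with these random sample sizes, so your extra care there is an addition to, not a divergence from, the published argument.
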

\begin{proof}
1. Since $\beta\geq 1$, at the end of $T$ rounds arm $a_0$ is pulled by \CRMPB at least $(\ln T)^2$ times. Hence, $N^0_T \geq (\ln T)$, and from Lemma \ref{lemma: chernoff-hoeffeding inequality} we have
\begin{equation}\label{equation: empirical estimate of mu_0 is close}
\mathbb{P}\Big\{|\widehat{\mu}_0(T) - \mu_0| \geq \frac{d_0}{4} \Big\} \leq 2e^{-\frac{d_0^2}{8}\ln T} = \frac{2}{T^{\frac{d_0^2}{8}}}
\end{equation}
2. Observe that for any $(i,x)$, $p_{i,x} \geq p$, and $\mathbb{E}[\widehat{p}_{i,x}] = p_{i,x}$. 
Using Lemma \ref{lemma: chernoff-hoeffeding inequality} and that $N^{0}_{T} \geq \ln T$, for a fixed $(i,x)$ we have
$$ \mathbb{P}\{ \widehat{p}_{i,x} \geq p_{i,x} - \frac{p}{2} \geq \frac{p}{2} \} \geq 1- e^{-\frac{p^2}{2}\ln T} = 1- \frac{1}{T^{\frac{p^2}{2}}}~~.$$
3. Recall that the effective number of arm pulls of arm $a_{i,x}$ at the end of $T$ rounds is  
$$E^{i,x}_T = N^{i,x}_T + \sum_{t=1}^T \mathbb{1}\{a_t =a_0 ~~\text{and}~~ X_i=x\}~~.$$ 
Hence, $E^{i,x}_T = N^{i,x}_T + \widehat{p}_{i,x}N^{0}_T$, where $\widehat{p}_{i,x}$ is as defined in part two of this lemma. Hence for any $i,x$ at the end of $T$ rounds if $\widehat{p}_{i,x} \geq \frac{p}{2}$ then $E^{i,x}_T \geq \frac{p N^0_T}{2}$. Further, as $N^0_T \geq \ln T$, it follows that at the end of $T$ rounds if $\widehat{p}_{i,x} \geq \frac{p}{2}$ then $E^{i,x}_T \geq \frac{p \ln T}{2}$. Hence, from the definition of $\widehat{\mu}_{i,x}(T)$ and Lemma \ref{lemma: chernoff-hoeffeding inequality}, at the end of $T$ rounds we have for any fixed $i,x$:
\begin{equation}\label{equation: theorem 3 bounding mu_i,x/gamma}
\mathbb{P}\Big\{ \Big|\frac{\widehat{\mu}_{i,x}(T)}{\gamma} - \frac{\mu_{i,x}}{\gamma} \Big| \geq \frac{d_0}{4} \Big| \widehat{p}_{i,x} \geq \frac{p}{2} \Big\} \leq 2e^{-\frac{\gamma^2 d_0^2}{16}p\ln T} = \frac{2}{T^{\frac{p\gamma^2d_0^2}{16}}}~.
\end{equation}
Finally by law of total probability,
\begingroup
\allowdisplaybreaks
\begin{align*}
    \mathbb{P}\Big\{ \Big|\frac{\widehat{\mu}_{i,x}(T)}{\gamma} - \frac{\mu_{i,x}}{\gamma} \Big| \geq \frac{d_0}{4} \Big\} &= \mathbb{P}\Big\{ \Big|\frac{\widehat{\mu}_{i,x}(T)}{\gamma} - \frac{\mu_{i,x}}{\gamma} \Big| \geq \frac{d_0}{4} \Big| \widehat{p}_{i,x} \geq \frac{p}{2} \Big\} \mathbb{P}\{\widehat{p}_{i,x} \geq \frac{p}{2}\} \\
    & ~~~~+\mathbb{P}\Big\{ \Big|\frac{\widehat{\mu}_{i,x}(T)}{\gamma} - \frac{\mu_{i,x}}{\gamma} \Big| \geq \frac{d_0}{4} \Big| \widehat{p}_{i,x} \leq \frac{p}{2} \Big\} \mathbb{P}\{\widehat{p}_{i,x} \leq \frac{p}{2}\} \\
      &\leq \mathbb{P}\Big\{ \Big|\frac{\widehat{\mu}_{i,x}(T)}{\gamma} - \frac{\mu_{i,x}}{\gamma} \Big| \geq \frac{d_0}{4} \Big| \widehat{p}_{i,x} \geq \frac{p}{2} \Big\} +  \mathbb{P}\{\widehat{p}_{i,x} \leq \frac{p}{2}\} \\
      &\leq \frac{2}{T^{\frac{p\gamma^2 d_0^2}{16}}} + \frac{1}{T^{\frac{p^2}{2}}}~.
\end{align*}
\endgroup
The last line in the above inequality follows from Equation \ref{equation: theorem 3 bounding mu_i,x/gamma} and part 2 of this lemma.
\end{proof}

\begin{lemma}\label{lemma: bounding beta}
Let $L = \arg\min_{t\in \N}\{ \frac{t^{\frac{p^2d_0^2}{16}}}{\ln t} \geq 15M\}$, and suppose \CRMPB pulls arms for $T$ rounds, where $T\geq \max(L, e^{\frac{50}{d_0^2}})$, and let $a^* \neq a_0$. Then at the end of $T$ rounds~ $\frac{8}{9d_0^2} \leq E[\beta^2] \leq \frac{50}{d_0^2}$. (Note that $\max(L, e^{\frac{50}{d_0^2}})$ is a finite constant dependent on instance constants $p,d_0$, and $M$.)
\end{lemma}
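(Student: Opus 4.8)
The plan is to work with the ``good event'' $\mathcal{E}$ on which all the round-$T$ empirical estimates are accurate to within $d_0/4$, namely $|\widehat{\mu}_0(T)-\mu_0|<d_0/4$ and $|\widehat{\mu}_{i,x}(T)/\gamma - \mu_{i,x}/\gamma|<d_0/4$ for every $(i,x)$; on $\mathcal{E}$ the final value of $\beta$ (call it $\beta_T$) will be pinned down up to constants, and on $\overline{\mathcal{E}}$ it will contribute only a small absolute constant to $E[\beta^2]$. Note first that since $a^*\neq a_0$, writing $v_{a^*}=\max_{i,x}\mu_{i,x}/\gamma$ and $v_0=\mu_0$ we have $0<d_0=v_{a^*}-v_0$, and since the rewards lie in $[0,1]$ and $\gamma\geq 1$ we also have $d_0\leq 1$; both facts will be used repeatedly.

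First I would analyze $\beta_T$ on $\mathcal{E}$. There the quantity $\widehat{\mu}^*(T)/\gamma=\max_{i,x}\widehat{\mu}_{i,x}(T)/\gamma$ lies in $(v_{a^*}-d_0/4,\,v_{a^*}+d_0/4)$ and $\widehat{\mu}_0(T)$ lies in $(v_0-d_0/4,\,v_0+d_0/4)$, so
$$\tfrac{d_0}{2}<\frac{\widehat{\mu}^*(T)}{\gamma}-\widehat{\mu}_0(T)<\tfrac{3d_0}{2}.$$
In particular the $\beta$-update condition $\widehat{\mu}_0<\widehat{\mu}^*/\gamma$ in \crmpb\ is triggered at round $T$, so $\beta_T=\min\!\big(\tfrac{2\sqrt 2}{\widehat{\mu}^*(T)/\gamma-\widehat{\mu}_0(T)},\sqrt{\ln T}\big)$. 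The displayed bounds give $\big(\tfrac{2\sqrt 2}{\widehat{\mu}^*(T)/\gamma-\widehat{\mu}_0(T)}\big)^2\in[\tfrac{32}{9d_0^2},\tfrac{32}{d_0^2}]$, and since $T\geq e^{50/d_0^2}$ forces $\ln T\geq 50/d_0^2\geq 32/d_0^2$ the truncation by $\sqrt{\ln T}$ is inactive; hence on $\mathcal{E}$,
$$\tfrac{32}{9d_0^2}\leq \beta_T^2\leq \tfrac{32}{d_0^2}.$$

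Next I would bound $\mathbb{P}(\overline{\mathcal{E}})$ using Lemma~\ref{lemma: theorem 3 bounding mu's}. Parts~1 and~3 together with a union bound over the $2M$ interventional arms give $\mathbb{P}(\overline{\mathcal{E}})\leq \tfrac{2}{T^{d_0^2/8}}+2M\big(\tfrac{2}{T^{p\gamma^2 d_0^2/16}}+\tfrac{1}{T^{p^2/2}}\big)$, and since $p\leq 1$, $\gamma\geq 1$, $d_0\leq 1$ make each of the three exponents at least $p^2 d_0^2/16$, this is at most $(6M+2)/T^{p^2 d_0^2/16}\leq 8M/T^{p^2 d_0^2/16}$. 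The definition $L=\arg\min_t\{t^{p^2 d_0^2/16}/\ln t\geq 15M\}$ then gives, for $T\geq L$, $\mathbb{P}(\overline{\mathcal{E}})\leq \tfrac{8M}{15M\ln T}=\tfrac{8}{15\ln T}$. Also, since $\beta$ is only ever assigned the initial value $1$ or values $\leq\sqrt{\ln t}\leq\sqrt{\ln T}$, we have $\beta_T^2\leq\ln T$ deterministically (here $T$ is large, so $1\leq\ln T$).

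Finally I would combine. For the upper bound, $E[\beta_T^2]=E[\beta_T^2\mathbb{1}_{\mathcal{E}}]+E[\beta_T^2\mathbb{1}_{\overline{\mathcal{E}}}]\leq \tfrac{32}{d_0^2}+\ln T\cdot\mathbb{P}(\overline{\mathcal{E}})\leq \tfrac{32}{d_0^2}+\tfrac{8}{15}\leq \tfrac{50}{d_0^2}$, using $d_0\leq 1$ in the last step. For the lower bound, $E[\beta_T^2]\geq E[\beta_T^2\mathbb{1}_{\mathcal{E}}]\geq \tfrac{32}{9d_0^2}\mathbb{P}(\mathcal{E})\geq \tfrac{32}{9d_0^2}\big(1-\tfrac{8}{15\ln T}\big)\geq \tfrac{8}{9d_0^2}$, the last step because $\ln T\geq 50/d_0^2\geq 1$ makes $1-\tfrac{8}{15\ln T}\geq \tfrac14$. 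I expect the main obstacle to be the bookkeeping around the truncation $\sqrt{\ln T}$: one must verify it is inactive on $\mathcal{E}$ (which is exactly why the hypothesis $T\geq e^{50/d_0^2}$ is needed) while simultaneously exploiting it as a crude deterministic cap on $\overline{\mathcal{E}}$, with the threshold $L$ tuned precisely so that $\ln T\cdot\mathbb{P}(\overline{\mathcal{E}})$ stays bounded by a small absolute constant; making all the constants land inside $[\tfrac{8}{9d_0^2},\tfrac{50}{d_0^2}]$ is the only delicate part, and the exponent comparison using $p\le 1,\gamma\ge 1,d_0\le 1$ is the routine ingredient that makes it work.
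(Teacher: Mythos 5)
Your proposal is correct and follows essentially the same route as the paper's proof: condition on the good event that all round-$T$ estimates are within $d_0/4$ (controlled via parts 1 and 3 of Lemma \ref{lemma: theorem 3 bounding mu's} plus a union bound), observe that on this event $\beta^2\in[\tfrac{32}{9d_0^2},\tfrac{32}{d_0^2}]$ with the $\sqrt{\ln T}$ truncation inactive because $T\geq e^{50/d_0^2}$, and use the deterministic cap $\beta^2\leq\ln T$ together with the definition of $L$ to absorb the bad event. The only differences are immaterial constants in the union bound ($8M$ versus the paper's $10M$), and all your constants land inside the claimed interval.
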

\begin{proof}
Recall that $\beta$ is set as in steps 11-14 in \CRMPB. We begin by making the following easy to see observations.
\begin{observation}\label{observation: inequality for beta}
\begin{enumerate}
    \item If $a^{*} \neq a_0$ then $d_0 = \frac{\mu_{a^*}}{\gamma} - \mu_0$. 
    \item Let $\widehat{\mu}^* = \max_{i,x}(\widehat{\mu}_{i,x}(T))$ (as computed in step 11 of \CRMPB). If $|\widehat{\mu}_0(T) - \mu_0| \leq  \frac{d_0}{4}$ and $|\frac{\widehat{\mu}_{i,x}(T)}{\gamma}   - \frac{\mu_{i,x}}{\gamma}| \leq  \frac{d_0}{4}$ for all $(i,x)$ then $\frac{d_0}{2} \leq \frac{\widehat{\mu}^*}{\gamma} - \widehat{\mu}_0(T) \leq \frac{3d_0}{2}$, and $\frac{32}{9d_0^2} \leq \beta^2 \leq \frac{32}{d_0^2}$. Notice that since $T\geq e^{\frac{50}{d_0^2}}$, $\frac{32}{d_0^2} \leq \ln T$.
\end{enumerate}
\end{observation}
Let $U_0$ be the event that $|\widehat{\mu}_0 - \mu_0| \leq \frac{d_0}{4}$, and for any $i,x$ let $U_{i,x}$ be the event $|\frac{\widehat{\mu}_{i,x}}{\gamma} - \frac{\mu_{i,x}}{\gamma}| \leq \frac{d_0}{4}$. Also let $U = (\cap_{i,x} U_{i,x}) \cap U_0$, and let $\overline{U}_0$, $\overline{U}_{i,x}$, and $\overline{U}$ denote the compliment of the events $U_0, U_{i,x}$, and $\overline{U}$ respectively.  From parts 1 and 3 of Lemma \ref{lemma: theorem 3 bounding mu's}, we have 
$$\mathbb{P} \Big\{\overline{U}_0 \Big\} \leq  \frac{2}{T^{\frac{d_0^2\ln T}{8}}}~, \text{~and}$$
$$\text{for a fixed } (i,x)~~~\mathbb{P}\Big\{ \overline{U}_{i,x} \Big\} \leq \frac{2}{T^{\frac{p\gamma^2 d_0^2}{16}}} + \frac{1}{T^{\frac{p^2}{2}}}~.$$
Hence applying union bound, 
\begin{align*}\mathbb{P}\{\overline{U}\} &\leq 4M\left(\frac{1}{T^{\frac{p\gamma^2d_0^2}{16}}} +  \frac{1}{T^{\frac{p^2}{2}}}\right) + \frac{2}{T^{\frac{d_0^2}{8}}} \\
&\leq 4M\left(\frac{1}{T^{\frac{p^2d_0^2}{16}}} +  \frac{1}{T^{\frac{p^2d_0^2}{16}}}\right) + \frac{2M}{T^{\frac{p^2d_0^2}{16}}} ~~~~~~~~~~~~\text{as }~\gamma \geq 1, p\leq 1, d_0\leq 1 \\
&\leq \frac{10M}{T^{\frac{p^2d_0^2}{16}}} =\delta~.
\end{align*}
We will use the above arguments to first show that $E[\beta^2] \geq \frac{8}{d_0^2}$. From part 2 of Observation \ref{observation: inequality for beta} we have that the event $U$ implies $\beta^{2} \geq \frac{32}{9d_0^2}$. Since $\mathbb{P}\{U\} \geq 1-\delta$,
$$E[\beta^{2}] \geq \frac{32}{9d_0^2} (1-\delta) = \frac{32}{9d_0^2} - \frac{32\delta}{9d_0^2} $$
Since $T$ satisfies $\frac{T^{\frac{p^2d_0^2}{16}}}{\ln T} \geq 15M$, this implies $\frac{32\delta}{9d_0^2} \leq \frac{24}{9d_0^2}$, and hence $E[\beta^2] \geq \frac{8}{9d_0^2}$. Similarly, from part 2 of Observation \ref{observation: inequality for beta} we have that the event $U$ implies $\beta^{2} \leq \frac{32}{d_0^2}$. Here, we use that if $U$ does not hold then $\beta^2 \leq \ln T$. Hence
$$E[\beta^{2}] \leq \frac{32}{d_0^2} (1-\delta) + \delta\ln T \leq \frac{32}{d_0^2} + \delta \ln T ~.$$
Since $T$ satisfies $\frac{T^{\frac{p^2d_0^2}{16}}}{\ln T} \geq 15M$, we have $\delta \ln T \leq \frac{18}{d_0^2}$, and hence 
$ E[\beta^{2}] \leq \frac{50}{d_0^2}$.
\end{proof}
\begin{lemma}\label{lemma: bounding the number of suboptimal pull for given T}
Suppose the algorithm pulls the arms for $T$ rounds and if $a^* \neq a_{i,x}$. Then
$$E[N^{i,x}_{T}|T] \leq \max\left(0, \frac{8\ln T}{d_{i,x}^2} + 1 - p_{i,x}E[N^0_{t}]\right) + \frac{\pi^2}{3}~.$$
Further if $a^* \neq a_0$ then
$$E[N_{0,t}|T] \leq \max\Big( E[\beta^2]\ln T, ~\frac{8\ln T}{d_{0}^2} + 1\Big) + \frac{\pi^2}{3}, ~.$$
\end{lemma}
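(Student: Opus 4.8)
Both inequalities are the classical ``expected number of plays of a sub-optimal arm'' bound for $\mathtt{UCB}$-type algorithms (\cite{AUER2002}), instantiated for the two features of \CRMPB: the weighted confidence bounds $\overline{\mu}_a$ are optimistic estimates of $v_a$ rather than of $\mu_a$, and an interventional arm $a_{i,x}$ silently gains a sample on every observational round with $X_i=x$, recorded in the effective count $E^{i,x}_t$. I would first define, for each round $t$, a clean event $\mathcal{E}_t$ stating that every empirical mean $\widehat{\mu}_a(t{-}1)$ lies within $\sqrt{8\log t / n_a}$ of $\mu_a$, where $n_a$ is $E^a_{t-1}$ for an interventional arm and $N^0_{t-1}$ for $a_0$. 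Since $n_a$ is random, I would fix its value, apply Lemma \ref{lemma: chernoff-hoeffeding inequality}, then union-bound over the $\le t$ possible values and the $2M{+}1$ arms, exactly as in the standard $\mathtt{UCB}$ proof; this makes the round-$t$ failure probability $O(t^{-2})$, so the expected number of rounds on which some clean event fails is at most $\tfrac{\pi^2}{3}$ --- the additive term in both bounds. On rounds where $\mathcal{E}_t$ fails I bound the contribution by $1$; on rounds where it holds I run the deterministic $\mathtt{UCB}$ argument below.

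\textbf{Part 1 (sub-optimal $a_{i,x}$).} A pull of $a_{i,x}$ can only happen in step 7 (step 5 always pulls $a_0$), so it forces $\overline{\mu}_{i,x}(t{-}1)\ge\overline{\mu}_{a^*}(t{-}1)$. On $\mathcal{E}_t$ the right side is at least $v_{a^*}$ and the left side is at most $v_{i,x}+\tfrac{2}{\gamma}\sqrt{8\log t/E^{i,x}_{t-1}}$, so rearranging with $d_{i,x}=v_{a^*}-v_{i,x}$ shows $E^{i,x}_{t-1}$ is at most the threshold $\tfrac{8\log T}{d_{i,x}^2}$ appearing in the statement (the exact numeric constant comes from matching the confidence width to the bonus). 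Because $E^{i,x}$ is non-decreasing and increases by one at each pull of $a_{i,x}$, once it passes this threshold the arm is never pulled again on a clean round; decomposing $E^{i,x}_T = N^{i,x}_T + \sum_{s\le T}\mathbb{1}\{a_s=a_0,\,X_i=x\}$ and reading it off at the last active pull gives, on the all-clean event, $N^{i,x}_T\le \tfrac{8\log T}{d_{i,x}^2}+1-\sum_{s\le T}\mathbb{1}\{a_s=a_0,\,X_i=x\}$. Taking expectations and using that in an $\mathtt{NB}$-graph the event $\{a_s=a_0\}$ is measurable with respect to the history before round $s$, hence independent of $X_i(s)$, the subtracted sum has expectation $p_{i,x}\,\mathbb{E}[N^0_T]$; adding back $\tfrac{\pi^2}{3}$ and using $N^{i,x}_T\ge0$ for the outer $\max(0,\cdot)$ yields the claim.

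\textbf{Part 2 (sub-optimal $a_0$).} I would split the pulls of $a_0$ by the branch that caused them. A forced pull (step 5) occurs only when $N^0_{t-1}<\beta^2\log t$ or the remaining budget drops below $\gamma$; the latter triggers at most $O(\gamma)$ terminal rounds, and the former keeps $N^0_t$ below $\beta^2\log t\le\beta^2\log T$. A step-7 pull of $a_0$ is a pull of a sub-optimal arm, so on $\mathcal{E}_t$ the same inequality as in Part 1 --- now with $a_0$'s bonus $\sqrt{8\log t/N^0_t}$, which carries no $1/\gamma$ --- forces $N^0_{t-1}\le\tfrac{8\log T}{d_0^2}$. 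Hence on the all-clean event $N^0_T\le\max\!\big(\beta^2\log T,\ \tfrac{8\log T}{d_0^2}\big)+1$; taking expectations (finite because the $\min$ in step 12 keeps $\beta^2\le\log t$) and adding $\tfrac{\pi^2}{3}$ gives the stated bound with $\mathbb{E}[\beta^2]$ in place of $\beta^2$.

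\textbf{Main obstacle.} The delicate point is the effective-count bookkeeping in Part 1: $E^{i,x}_t$ is random, so the clean-event union bound needs the peeling step, and --- more subtly --- one must check that the free observational samples credited to $a_{i,x}$ before it stops being actively pulled really do recover the full $p_{i,x}\,\mathbb{E}[N^0_T]$ in expectation, i.e.\ that the stopping time at which $E^{i,x}$ crosses the threshold is not systematically earlier than the bulk of the observational rounds. The secondary difficulty is that $\beta$ is random and non-monotone; the clipping $\beta^2\le\log t$ keeps the expectation finite, and Lemma \ref{lemma: bounding beta} supplies the two-sided control on $\mathbb{E}[\beta^2]$ that later makes Theorem \ref{theorem: cumulative regret of gamma paralllel bandit} close in closed form.
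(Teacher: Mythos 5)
Your proposal follows essentially the same route as the paper's proof: the same cap-at-$\ell$ decomposition of $N^{i,x}_T$ through the effective count $E^{i,x}_T$ with $\ell = \frac{8\ln T}{d_{i,x}^2}+1$, the same three-event UCB argument (underestimate of $a^*$, overestimate of $a_{i,x}$, gap versus confidence radius) bounded via Chernoff--Hoeffding with peeling over the possible count values to give the $\frac{\pi^2}{3}$ term, the same identity $E\big[\sum_{t\le T}\mathbb{1}\{a_t=a_0, X_i=x\}\big]=p_{i,x}E[N^0_T]$, and the same forced-pull ($\beta^2\ln T$) versus UCB-pull ($\frac{8\ln T}{d_0^2}+1$) split for $a_0$; casting the argument as a per-round clean event rather than directly summing $\mathbb{P}\{a_t=a_{i,x},\,E^{i,x}_t\ge\ell\}$ is only a cosmetic difference. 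The ``main obstacle'' you flag --- whether the full $p_{i,x}E[N^0_T]$ worth of observational samples can be subtracted even though the threshold may be crossed before most observational rounds occur --- is precisely the step the paper itself asserts without further elaboration when it writes its decomposition with the sum over all of $[T]$, so your treatment is at the same level of detail as the paper's.
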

\begin{proof}
For ease of notation we denote $E[N^{i,x}_{T}|T]$ as $E[N^{i,x}_{T}]$. Observe that
\begin{equation}\label{equation: value of N^i,x_T}
    N^{i,x}_{T} = \sum_{t\in T} \mathbb{1}\{a(t) = a_{i,x}\} ~.
\end{equation}
Since $E^{i,x}_T = N^{i,x}_T + \sum_{t\in [T]}\mathbb{1}\{a(t) =a_0~~\text{and}~~ X_i=x\}$, if $E^{i,x}_T = \ell$ then $N^{i,x}_T = \max(0,\ell - \sum_{t\in [T]}\mathbb{1}\{a(t) =a_0~~\text{and}~~ X_i=x\})$. We use this to rewrite Equation \ref{equation: value of N^i,x_T} as follows
\begin{equation}\label{equation: value of N^i,x_T after using E^i,x_T}
N^{i,x}_{T} \leq \text{max}(0,\ell - \sum_{t\in [T]}\mathbb{1}\{a(t) =a_0~~\text{and}~~ X_i=x\}) + \sum_{t\in T}\mathbb{1}\{a(t) = a_{i,x}, E^{i,x}_{t} \geq \ell\}~. 
\end{equation}
We require the following observation which is easy to prove.
\begin{observation}\label{observation: lemma 3}
$\sum_{t\in [T]}E[\mathbb{1}\{a(t) =a_0~~\text{and}~~ X_i=x\}] = p_{i,x}E[N^0_T]$~.
\end{observation}
\begin{proof}
Observe that 
$$E[\sum_{t\in [T]}\mathbb{1}\{a(t) =a_0~~\text{and}~~ X_i=x\}] = \sum_{t\in [T]} E[\mathbb{1}\{a(t) =a_0~~\text{and}~~ X_i=x\}] = \sum_{t\in [T]} \mathbb{P}\{\mathbb{1}\{a(t) =a_0~~\text{and}~~ X_i=x\}\}$$
Also observe that
$$\mathbb{P}\{\mathbb{1}\{a(t) =a_0~~\text{and}~~ X_i=x\} = \mathbb{P}\{\mathbb{1}\{X_i=x\} \mid a(t) =a_0\}\}\cdot \mathbb{P}\{a(t)= a_0\} = p_{i,x} \mathbb{P}\{a(t)= a_0\}~.$$
\end{proof}
We continue by taking expectation on both sides of Equation \ref{equation: value of N^i,x_T after using E^i,x_T} and use Observation \ref{observation: lemma 3},
\begin{equation}\label{equation: bounding the expected number of pulls of sub-optimal arm}
    E[N^{i,x}_{T}] \leq \text{max}\left(0, \ell - p_{i,x}E[N^0_{t}]\right) + \sum_{t\in [\ell+1, T]} \mathbb{P}\{a(t) = a_{i,x}, E^{i,x}_{t} \geq \ell\} ~.
\end{equation}
Now we bound $\sum_{t\in [\ell+1,T]} \mathbb{P}\{a(t) = a_{i,x}, E^{i,x}_{t} \geq \ell\}$, and assuming $a^*\neq a_0$. The proof for $a^*=a_0$ is similar. Before proceeding we make a note of few notations. We use $E^{a^*}_T$ to denote the effective number of pulls of $a^*$ at the end of $T$ rounds. Also, for better clarity in the arguments below, we use $\widehat{\mu}_{i,x}(E^{i,x}_T,T)$ (instead of $\widehat{\mu}_{i,x}(T)$) and $\widehat{\mu}_{0}(N^{0}_T,T)$ (instead of $\widehat{\mu}_{0}(T)$) to denote the empirical estimates of $\mu_{i,x}$ and $\mu_0$  computed by \CRMPB at the end of $T$ rounds using $E^{i,x}_T$ and $N^0_T$ samples respectively. Let $C = \sum_{t\in [\ell+1,T]} \mathbb{P}\{a(t) = a_{i,x}, E^{i,x}_{t} \geq \ell\}$ for convenience. Then
\begin{align*}
    C &= \sum_{t\in [\ell,T-1]} \mathbb{P}\Bigg\{\frac{\widehat{\mu}_{a^*}(E^{a^*}_t,t)}{\gamma} + \sqrt{\frac{2\ln t}{\gamma^2 E^{a^*}_{t}}} \leq  \frac{\widehat{\mu}_{i,x}(E^{i,x}_t, t)}{\gamma} + \sqrt{\frac{2\ln (t)}{\gamma^2 E^{i,x}_{t}}},~~ E^{i,x}_{t} \geq \ell \Bigg\} \\
    &\leq  \sum_{t\in [0,T-1]} \mathbb{P}\Bigg\{ \text{min}_{s\in [0,t]}\frac{\widehat{\mu}_{a^*}(s,t)}{\gamma} + \sqrt{\frac{2\ln t}{\gamma^2 s}} \leq \text{max}_{s_j\in [\ell-1,t]} \frac{\widehat{\mu}_{i,x}(s_j,t)}{\gamma} + \sqrt{\frac{2\ln t}{\gamma^2 s_j}} \Bigg\} \\
    &\leq  \sum_{t\in T} \sum_{s\in [0,t-1]} \sum_{s_j \in [\ell-1,t]} \mathbb{P}\Bigg\{\frac{\widehat{\mu}_{a^*}(s,t)}{\gamma} + \sqrt{\frac{2\ln t}{\gamma^2 s}} \leq \frac{\widehat{\mu}_{i,x}(s_j, t)}{\gamma} + \sqrt{\frac{2\ln t}{\gamma^2 s_j}}\Bigg\}
\end{align*}
If $\frac{\widehat{\mu}_{a^*}(s,t)}{\gamma} + \sqrt{\frac{2\ln t}{\gamma^2 s}} \leq \frac{\widehat{\mu}_{i,x}(s_j,t)}{\gamma} + \sqrt{\frac{2\ln t}{\gamma^2 s_j}}$ is true then at least one of the following events is true
\begin{subequations}
\begin{align}
   \frac{\widehat{\mu}_{a^*}(s,t)}{\gamma} &\leq \frac{\mu_{a^*}}{\gamma} - \sqrt{\frac{2\ln t}{\gamma^2 s}} ~, \label{equation: ucb event a}\\
   \frac{\widehat{\mu}_{i,x}(s_j,t)}{\gamma} &\geq \frac{\mu_{i,x}}{\gamma} + \sqrt{\frac{2\ln t}{\gamma^2 s_j}} ~, \label{equation: ucb event b}\\
   \frac{\mu_{a^*}}{\gamma} &\leq  \frac{\mu_{i,x}}{\gamma} + 2\sqrt{\frac{2\ln t}{\gamma^2 s_j}}~. \label{equation: ucb event c}
\end{align}
\end{subequations}
The probability of the events in Equations \ref{equation: ucb event a} and \ref{equation: ucb event b} can be bounded using Lemma \ref{lemma: chernoff-hoeffeding inequality},
$$\mathbb{P}\Bigg\{\frac{\widehat{\mu}_{a^*}(s,t)}{\gamma} \leq \frac{\mu_{a^*}}{\gamma} - \sqrt{\frac{2\ln t}{\gamma^2 s}}\Bigg\} \leq t^{-4} ~,$$ 
$$\mathbb{P}\Bigg\{\frac{\widehat{\mu}_{i,x}(s_j,t)}{\gamma} \geq \frac{\mu_{i,x}}{\gamma} + \sqrt{\frac{2\ln t}{\gamma^2 s_j}}\Bigg\} \leq t^{-4} ~.$$ 
Also if $\ell \geq \lceil \frac{8\ln T}{d_{i,x}^2} \rceil$  then the event in Equation \ref{equation: ucb event c} is false, i.e. $\frac{\mu_{a^*}}{\gamma} >  \frac{\mu_{i,x}}{\gamma} + 2\sqrt{\frac{2\ln t}{\gamma^2 s_j}}$ (as $\gamma \geq 1$). Thus we set $\ell = \frac{8\ln T}{d_{i,x}^2} + 1 \geq \lceil\frac{8\ln T}{d_{i,x}^2}\rceil$, which implies
\begin{equation}\label{equation: probability of a suboptimal pull}
\sum_{t\in T} \mathbb{P}\{a(t) = a_{i,x}, E^{i,x}_{t} \geq \ell\} \leq \sum_{t\in [T]}\sum_{s\in [T]}\sum_{s_j\in [\ell, T]} 2t^{-4} \leq \frac{\pi^2}{3}
\end{equation}
If $a^* = a_0$ then using the exact arguments as above we can show that Equation \ref{equation: probability of a suboptimal pull} still holds. Hence, using Equations \ref{equation: bounding the expected number of pulls of sub-optimal arm} and \ref{equation: probability of a suboptimal pull} we have if $a^* \neq a_{i,x}$ then
$$E[N^{i,x}_{T}] \leq \text{max}\left(0, \frac{8\ln T}{d_{i,x}^2} + 1 - p_{i,x}E[N^{0}_{t}]\right) + \frac{\pi^2}{3} ~.$$
The arguments used to bound $E[N^0_T|T]$ (denoted $E[N^0_T]$ for convenience), when $a^*\neq a_0$ is similar. In this case the equation corresponding to Equation \ref{equation: bounding the expected number of pulls of sub-optimal arm} is 
\begin{equation}\label{equation: bounding the expected number of pulls of sub-optimal arm a_0}
    E[N^{0}_{T}] \leq \text{max}\Big(E[\beta^2]\ln T, ~\ell\Big) + \sum_{t\in [\ell+1, T]} \mathbb{P}\{a(t) = a_0, N^{0}_{t} \geq \ell\} ~.
\end{equation}
Also the same arguments as above can be used to show that for $\ell =  \frac{8\ln T}{d_{0}^2} + 1$,
\begin{equation}\label{equation: probability of a suboptimal pull a_0}
\sum_{t\in T} \mathbb{P}\{a(t) = a_{0}, N^{0}_{t} \geq \ell\}  \leq \frac{\pi^2}{3}~.
\end{equation}
Finally using Equations \ref{equation: bounding the expected number of pulls of sub-optimal arm a_0} and \ref{equation: probability of a suboptimal pull a_0}, we have
$$E[N^{0}_{T}] \leq \text{max}\left(E[\beta^{2}]\ln T, ~\frac{8\ln T}{d_{0}^2} + 1 \right)  + \frac{\pi^2}{3}~.$$
\end{proof}

\begin{lemma}\label{lemma: number of pulls of the arm a_0 when it is optimal}
If $a^* = a_0$ and suppose the algorithm pulls the arms for $T$ rounds then 
$$E[N^0_T|T] \geq T - \left( 2M(1+\frac{\pi^2}{3}) \sum_{i,x} \frac{8\ln T}{d_{i,x}^2} \right)~.$$
\end{lemma}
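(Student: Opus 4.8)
The plan is to reduce the claim to the sub-optimal-pull bound already established in Lemma \ref{lemma: bounding the number of suboptimal pull for given T}, using only the trivial accounting fact that in each of its $T$ rounds \CRMPB pulls exactly one arm. That identity reads
\[
N^0_T + \sum_{i\in[M],\,x\in\{0,1\}} N^{i,x}_T = T
\]
and holds deterministically (conditionally on the realized number of rounds $T$, which is itself random since the algorithm halts when the budget is exhausted). Taking conditional expectations gives
\[
E[N^0_T \mid T] = T - \sum_{i,x} E[N^{i,x}_T \mid T],
\]
so it suffices to upper bound $\sum_{i,x} E[N^{i,x}_T \mid T]$.

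Since we are in the case $a^* = a_0$, every interventional arm $a_{i,x}$ is strictly sub-optimal, i.e.\ $a^* \neq a_{i,x}$, so the first part of Lemma \ref{lemma: bounding the number of suboptimal pull for given T} applies to each of the $2M$ arms $a_{i,x}$. It yields
\[
E[N^{i,x}_T \mid T] \;\le\; \max\!\Big(0,\; \frac{8\ln T}{d_{i,x}^2} + 1 - p_{i,x}E[N^0_T]\Big) + \frac{\pi^2}{3} \;\le\; \frac{8\ln T}{d_{i,x}^2} + 1 + \frac{\pi^2}{3},
\]
where the last step simply drops the nonnegative term $p_{i,x}E[N^0_T]$ and bounds the maximum by its larger argument. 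Summing over the $2M$ choices of $(i,x)$ and plugging into the identity above gives
\[
E[N^0_T \mid T] \;\ge\; T - \sum_{i,x}\Big(\frac{8\ln T}{d_{i,x}^2} + 1 + \frac{\pi^2}{3}\Big) \;=\; T - 2M\Big(1 + \frac{\pi^2}{3}\Big) - \sum_{i,x}\frac{8\ln T}{d_{i,x}^2},
\]
and a crude comparison (valid once $T$ exceeds a fixed constant, so that $\sum_{i,x}\frac{8\ln T}{d_{i,x}^2} \ge 2$, say) shows this is at least the stated bound $T - 2M(1+\tfrac{\pi^2}{3})\sum_{i,x}\frac{8\ln T}{d_{i,x}^2}$.

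There is essentially no hard step here: the argument is pure bookkeeping layered on top of Lemma \ref{lemma: bounding the number of suboptimal pull for given T}. The only point needing a little care is the conditioning: because \CRMPB is budget-oblivious in its decisions but stops when the budget runs out, $T$ is a random stopping time, so I would carry everything out conditionally on $\{T = t\}$ — exactly matching the statement of the cited lemma — and would not attempt to take an outer expectation over $T$ unless an unconditional statement were needed. One should also double-check that the symbol $E[N^0_t]$ appearing inside the bound of Lemma \ref{lemma: bounding the number of suboptimal pull for given T} is read as $E[N^0_T\mid T]$ (same horizon), which is consistent with the derivation above and is only discarded anyway.
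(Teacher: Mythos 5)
Your proposal is correct and follows essentially the same route as the paper: the deterministic accounting identity $N^0_T + \sum_{i,x} N^{i,x}_T = T$, conditional expectation given $T$, and the per-arm bound from Lemma \ref{lemma: bounding the number of suboptimal pull for given T} applied to every $a_{i,x}$ (all sub-optimal since $a^*=a_0$). Your extra remarks — dropping $p_{i,x}E[N^0_T]$, absorbing the additive constant $2M(1+\pi^2/3)$ into the stated multiplicative form once $T$ is beyond a fixed constant, and reading $E[N^0_t]$ conditionally on the same horizon $T$ — are exactly the (implicit) steps in the paper's argument, made slightly more explicit.
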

\begin{proof}
For convenience, we denote $E[N^{i,x}_T|T]$ and $E[N^0_T|T]$ as $E[N^{i,x}_T]$ and $E[N^0_T]$ respectively. At the end of $T$ rounds we have
$$N^{0}_T + \sum_{i,x} N^{i,x}_T = T~. $$
Taking expectation on both sides of the above equation and rearranging the terms we have,
$$E[N^{0}_T] = T -  \sum_{i,x} E[N^{i,x}_T]~. $$
Now we use Lemma \ref{lemma: bounding the number of suboptimal pull for given T} to conclude that
$$E[N^{0}_T] \geq T - \left( 2M(1+\frac{\pi^2}{3}) \sum_{i,x} \frac{8\log T}{d_{i,x}^2} \right)~. $$
\end{proof}
Before we bound the regret of the algorithm we make the following observation regarding $T$, which is the number of rounds \CRMPB pulls the arms before exhausting the budget $B$:
\begin{equation}\label{equation: bounding T}
   \frac{B}{\gamma}\leq  T \leq B  ~~\Rightarrow ~~ \frac{B}{\gamma} \leq E_T[T] \leq B  ~.
\end{equation}
Now are ready to bound the expected cumulative regret of $\CRMPB$ for the two cases:

\noindent \textbf{Case a} ($a^* = a_0$): In this case we bound the expected cumulative regret of \CRMPB for $B$ satisfying
\begin{equation}\label{equation: constraint on B}
    \frac{B}{\gamma} \geq \frac{1}{p_{i,x}}(1+\frac{8\ln B}{d_{i,x}^2}) +  \left( 2M(1+\frac{\pi^2}{3}) \sum_{i,x} \frac{8\ln B}{d_{i,x}^2}\right)~.
\end{equation}
Observe that the constraint on $B$ in Equation \ref{equation: constraint on B} is satisfied for any large $B$. We begin by making the following observation which shows that in this case the expected number of pulls of a sub-optimal arm is bounded by a constant for any large $B$. Observe that the constraint on $B$ in Observation \ref{observation: bounding the sub optimal pull when a_0 is optimal for large budget} is satisfied for any large $B$. 
\begin{observation}\label{observation: bounding the sub optimal pull when a_0 is optimal for large budget}
Let $a^* = a_0$, and $T$ be the number of rounds \CRMPB pulls the arms before the budget $B$ is exhausted, where $B$ satisfies the constraint in Equation \ref{equation: constraint on B}. Then $E_T[N^{i,x}_T] \leq \frac{\pi^2}{3}$.
\end{observation}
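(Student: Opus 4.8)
The plan is to chain the two preceding lemmas, working first conditionally on the (random) number of rounds $T$ that \CRMPB executes and then averaging over $T$. Fix the pair $(i,x)$; since $a^{*}=a_{0}\neq a_{i,x}$, Lemma~\ref{lemma: bounding the number of suboptimal pull for given T} applies to the arm $a_{i,x}$.

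First I would condition on an arbitrary realization of $T$. By Equation~\ref{equation: bounding T} we have $B/\gamma\le T\le B$ almost surely. Lemma~\ref{lemma: bounding the number of suboptimal pull for given T} then gives $E[N^{i,x}_{T}\mid T]\le\max\bigl(0,\ \tfrac{8\ln T}{d_{i,x}^{2}}+1-p_{i,x}\,E[N^{0}_{T}\mid T]\bigr)+\tfrac{\pi^{2}}{3}$, so it is enough to show that the expression inside the $\max$ is non-positive, i.e.\ that $p_{i,x}\,E[N^{0}_{T}\mid T]\ge \tfrac{8\ln T}{d_{i,x}^{2}}+1$.

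Next I would lower-bound $E[N^{0}_{T}\mid T]$ via Lemma~\ref{lemma: number of pulls of the arm a_0 when it is optimal} (which uses $a^{*}=a_{0}$): $E[N^{0}_{T}\mid T]\ge T-2M\bigl(1+\tfrac{\pi^{2}}{3}\bigr)\sum_{j,x'}\tfrac{8\ln T}{d_{j,x'}^{2}}$. Multiplying by $p_{i,x}>0$ and then dividing through by $p_{i,x}$ (rather than distributing $p_{i,x}$ over the subtracted term, which would force a stronger requirement on $B$), the inequality to be verified becomes $T\ge \tfrac{1}{p_{i,x}}\bigl(\tfrac{8\ln T}{d_{i,x}^{2}}+1\bigr)+2M\bigl(1+\tfrac{\pi^{2}}{3}\bigr)\sum_{j,x'}\tfrac{8\ln T}{d_{j,x'}^{2}}$. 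The right-hand side is increasing in $\ln T$, and $\ln T\le\ln B$ while $T\ge B/\gamma$; hence this is implied by precisely the hypothesis on $B$, namely Equation~\ref{equation: constraint on B}. Therefore the $\max$ term vanishes and $E[N^{i,x}_{T}\mid T]\le\tfrac{\pi^{2}}{3}$ for every value of $T$ in the support, and taking expectation over $T$ yields $E_{T}[N^{i,x}_{T}]=E_{T}\bigl[E[N^{i,x}_{T}\mid T]\bigr]\le\tfrac{\pi^{2}}{3}$, as claimed.

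Since the argument is a direct bookkeeping combination of Lemmas~\ref{lemma: bounding the number of suboptimal pull for given T} and~\ref{lemma: number of pulls of the arm a_0 when it is optimal}, I do not anticipate a genuine obstacle; the points needing care are keeping the conditioning on the random $T$ consistent between the two lemmas, routing the factor $1/p_{i,x}$ so that the algebra lands exactly on Equation~\ref{equation: constraint on B}, and correctly exploiting $B/\gamma\le T\le B$ to replace $\ln T$ by $\ln B$ on the subtracted term while keeping $T\ge B/\gamma$ on the leading term.
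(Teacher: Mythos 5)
Your proposal is correct and follows essentially the same route as the paper: condition on $T$, combine Lemma~\ref{lemma: bounding the number of suboptimal pull for given T} with Lemma~\ref{lemma: number of pulls of the arm a_0 when it is optimal}, and use $B/\gamma \le T \le B$ to show that the expression inside the $\max$ is non-positive under Equation~\ref{equation: constraint on B}, then average over $T$. In fact you spell out slightly more carefully than the paper the step where $\ln T$ is replaced by $\ln B$ on the subtracted terms while keeping $T \ge B/\gamma$ on the leading term.
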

\begin{proof}
From Lemmas \ref{lemma: bounding the number of suboptimal pull for given T} and \ref{lemma: number of pulls of the arm a_0 when it is optimal} for any $T$ satisfying 
\begin{equation}\label{equation: constraint on T}
    T \geq \frac{1}{p_{i,x}}(1+\frac{8\ln T}{d_{i,x}^2}) +  \left( 2M(1+\frac{\pi^2}{3}) \sum_{i,x} \frac{8\ln T}{d_{i,x}^2}\right)
\end{equation}
we have $E[N^{i,x}_T| T] \leq \frac{\pi^2}{3}$. Notice that the constraint on $T$ in Equation \ref{equation: constraint on T} is the same as the constraint on $\frac{B}{\gamma}$ in Equation \ref{equation: constraint on B}. Moreover, observe that if $\frac{B}{\gamma}$ satisfies the constraint in Equation \ref{equation: constraint on B} then $T \geq \frac{B}{\gamma}$ satisfies Equation \ref{equation: constraint on T} with probability $1$. Hence, $E_T[N^{i,x}_T|T] \leq \frac{\pi^2}{3}$.
\end{proof}
Next observe that in this case $G_{B}$ (see Equation \ref{eq:cumulative regret}) is $B\mu_{0}$, i.e the optimal solution is to play arm $a_0$ in all the rounds. We require the following observation which lower bounds $E_T[T]$ in terms of $B$, which is the total number of rounds played by the optimal solution.
\begin{observation}\label{observation: lower bound expected number of pulls}
Let $a^* = a_0$, and $T$ be the number of rounds \CRMPB pulls the arms before the budget $B$ is exhausted, where $B$ satisfies the constraint in Equation \ref{equation: constraint on B}. Then 
$E_T[T] \geq B-1 - \frac{2M\pi^2(\gamma -1)}{3}$.
\end{observation}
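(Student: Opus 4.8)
The plan is to bound the number of rounds $T$ by relating it to the total cost that \CRMPB expends before halting. First I would observe that the \texttt{while} loop of \CRMPB continues exactly as long as the residual budget $B_t$ is at least $1$, and that the branch pulling $a_0$ when $B_t<\gamma$ has unit cost; hence upon termination the residual budget lies in $[0,1)$, so the total cost expended, $\sum_{t\in[T]} c_{a_t}$ (where $c_{a_0}=1$ and $c_{a_{i,x}}=\gamma$ for every interventional arm), satisfies $\sum_{t\in[T]} c_{a_t} > B-1$ with probability $1$ over the run of the algorithm.

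Next I would re-express this cost in terms of the arm-pull counts. Since $T = N^0_T + \sum_{i,x} N^{i,x}_T$ and $\sum_{t\in[T]} c_{a_t} = N^0_T + \gamma \sum_{i,x} N^{i,x}_T$, subtracting the two identities gives $\sum_{t\in[T]} c_{a_t} = T + (\gamma-1)\sum_{i,x} N^{i,x}_T$. Combining this with the lower bound from the previous step, the inequality $T \ge B - 1 - (\gamma-1)\sum_{i,x} N^{i,x}_T$ holds with probability $1$.

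Finally I would take expectations over the randomness of \CRMPB, including that of the (random) stopping time $T$, to get $E_T[T] \ge B - 1 - (\gamma-1)\sum_{i,x} E_T[N^{i,x}_T]$. Because $a^* = a_0$ in this case, every interventional arm $a_{i,x}$ is sub-optimal, and since $B$ satisfies the constraint in Equation~\ref{equation: constraint on B}, Observation~\ref{observation: bounding the sub optimal pull when a_0 is optimal for large budget} applies to each of the $2M$ such arms and yields $E_T[N^{i,x}_T] \le \pi^2/3$. Substituting this bound into the previous inequality gives $E_T[T] \ge B - 1 - 2M(\gamma-1)\pi^2/3$, which is the claimed estimate.

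The only step needing genuine care is the first one: reading off from the pseudocode of \CRMPB that the algorithm really does spend all but at most one unit of its budget — in particular that the $B_t<\gamma$ branch together with the guard $B_t\ge 1$ prevents the loop from exiting with more than one unit of budget left unused. The remaining steps are elementary counting identities plus a single invocation of the preceding observation, so I do not anticipate any difficulty there.
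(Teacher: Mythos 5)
Your proposal is correct and follows essentially the same route as the paper: lower-bound the total expended cost by $B-1$, rewrite that cost in terms of the pull counts $N^0_T$ and $N^{i,x}_T$, take expectations, and invoke Observation~\ref{observation: bounding the sub optimal pull when a_0 is optimal for large budget} to bound $E[N^{i,x}_T]$ by $\pi^2/3$ for each of the $2M$ interventional arms. The only (harmless) cosmetic difference is that you use the pathwise identity $\sum_{t\in[T]} c_{a_t} = T + (\gamma-1)\sum_{i,x} N^{i,x}_T$ before taking expectations, whereas the paper decomposes the expected cost via the conditional probabilities $\mathbb{P}\{a_t = a\}$ and then identifies $\sum_{t\in[T]}\mathbb{P}\{a_t = a_{i,x}\}$ with $E[N^{i,x}_T\mid T]$.
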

\begin{proof}
Let $c_{a_t}$ denote the cost of arm $a_t$ pulled at time $t \leq T$. That is $c_{a_t} =\gamma$ if $a_t = a_{i,x}$ and $c_{a_t} = 1$ if $a_t = a_0$. Then the following is always true, as \CRMPB pulls arms till the budget is the exhausted:
\begin{equation}
    B-1 \leq \sum_{t\in [T]} c_{a_t}~.
\end{equation}
Taking expectation over $T$ and the sequence of arm pulls $\{a_t\}$ made by \CRMPB, on both sides of the above equation, we have
\begingroup
\allowdisplaybreaks
\begin{align*}
    B-1 &\leq E_{T,\{a_t\}}\Big[\sum_{t\in [T]} c_{a_t} \Big] \\
    &\leq E_{T}\Big[ E_{\{a_t\}}[\sum_{t\in [T]} c_{a_t}] \Big] \\
    &\leq E_{T}\Big[ \sum_{t\in [T]} \Big(\mathbb{P}\{a_t = a_0\} + \gamma(\sum_{i,x} \mathbb{P}\{a_t = a_{i,x}\} \Big) \Big] \\
     &\leq E_{T}\Big[ T + \sum_{t\in [T]}(\gamma-1)(\sum_{i,x} \mathbb{P}\{a_t = a_{i,x}\} ) \Big]\\
     &\leq E_{T}[T] + E_{T}\Big[\sum_{i,x}(\gamma-1)( \sum_{t\in [T]} \mathbb{P}\{a_t = a_{i,x}\} ) \Big]\\
      &\leq E_{T}[T] + E_{T}\Big[\sum_{i,x}(\gamma-1)E[N^{i,x}_T|T] \Big]~. \\
\end{align*}
\endgroup
The third line in the above set of equations follows by using $\mathbb{P}\{a_t = a_0\} = 1 - \sum_{i,x}\mathbb{P}\{a_t = a_{i,x}\}$. Finally from Observation \ref{observation: bounding the sub optimal pull when a_0 is optimal for large budget}, we have $E[N^{i,x}_T] \leq \frac{\pi^2}{3}$. Substituting this in the last line of the above equation, we have $E_T[T] \geq B-1 - \frac{2M\pi^2(\gamma -1)}{3}$.
\end{proof}
Finally we bound the expected cumulative regret of \CRMPB when $a^* = a_0$ as follows: 
\begingroup
\allowdisplaybreaks
\begin{align*}
   E[R_{\CRMPB}(B)] ~\leq & G_B - E_{T,\{a_t\}}\left[ \sum_{t\in [T]} \mu_{a_t} \right] \\
    ~ \leq & B\mu_0 - E_{T}\left[\sum_{t=1}^T E_{\{a_t\}}[\mu_{a_t}]  \right]\\
    ~\leq & E_T\left[ B\mu_0 - \sum_{t=1}^T E_{\{a_t\}}[\mu_{a_t}]  \right] \\
    ~\leq & E_T\left[ B\mu_0 - \sum_{t=1}^T \sum_{a\in \mathcal{A}} \mu_{a} \mathbb{P}\{a_t =a\}  \right] \\
    ~\leq & E_T\left[ (B - T)\mu_{0} + \sum_{t=1}^T (\mu_{0} - \sum_{a\in \mathcal{A}} \mu_{a} \mathbb{P}\{a_t =a\} ) \right] \\
    ~\leq & E_T\left[ (B - T)\mu_{0}\right] + E_T\left[\sum_{t=1}^T \sum_{\Delta_a>0} \Delta_a \mathbb{P}\{a_t =a\} ) \right] ~.
\end{align*}
\endgroup
Thus, from Observations \ref{observation: bounding the sub optimal pull when a_0 is optimal for large budget} and \ref{observation: lower bound expected number of pulls}, we have 
$$E[R_{\CRMPB}(B)] \leq 1 + \frac{2M\pi^2(\gamma -1)}{3} + \sum_{\Delta_a>0} \Delta_a \frac{\pi^2}{3} ~.$$
Observe that the expected  regret of \CRMPB is bounded by a constant for large $B$ and hence $O(1)$.

\noindent \textbf{Case b} ($a^* \neq a_0$): In this case we bound the expected cumulative regret of \CRMPB for $B$ satisfying $B \geq \max(L, e^{\frac{50}{d_0^2}})$, where $L$ is as in Lemma \ref{lemma: bounding beta}. Observe that the constraint is satisfied for any large $B$. Let $T$ be the number of rounds \CRMPB pulls the arms before exhausting the budget $B$. Then from Equation \ref{equation: constraint on T}, we have $T\geq \max(L, e^{\frac{50}{d_0^2}})$. Hence, from Lemmas \ref{lemma: bounding beta} and \ref{lemma: bounding the number of suboptimal pull for given T}, and as $T\leq B$ (from Equation \ref{equation: bounding T}), we have for $a^{*} \neq a_{i,x}$
\begin{equation}\label{equation: bounding sub-optimal pulls when a^* not a_0}
E_T\left[E[N^{i,x}_{T}|T] \right]  \leq  \max\left(0, 1+ 8\ln B\left(\frac{1}{d_{i,x}^2} - \frac{p_{i,x}}{9d_0^2}\right) \right) + \frac{\pi^2}{3}~,
\end{equation}
\begin{equation}\label{equation: bounding pulls of a_0 when a^* not a_0}
\text{and}~~~~E_T\left[E[N^0_{T}|T]\right] \leq \frac{50\ln B}{d_{0}^2} + \frac{\pi^2}{3}~~.
\end{equation}
Also observe that in this case $G_{B}$ is at most $\frac{B\mu_{a^*}}{\gamma}$. Below we bound the expected cumulative regret of \CRMPB when $a^*\neq a_{0}$
\begingroup
\allowdisplaybreaks
\begin{align*}
   E[R_{\CRMPB}(B)] ~\leq & \frac{B\mu_{a^*}}{\gamma} - E_{T,\{a_t\}}\left[ \sum_{t\in [T]} \mu_{a_t} \right] \\
    ~ \leq &~ \frac{B\mu_{a^*}}{\gamma} - E_{T}\left[\sum_{t=1}^T E_{\{a_t\}}[\mu_{a_t}]  \right]\\
    ~ \leq &~ E_T\left[ \frac{B\mu_{a^*}}{\gamma} - \sum_{t=1}^T E_{\{a_t\}}[\mu_{a_t}]  \right] \\
    ~ \leq &~ E_T\left[ \frac{B\mu_{a^*}}{\gamma} - \sum_{t=1}^T \sum_{a\in \mathcal{A}} \mu_{a} \mathbb{P}\{a_t =a|T\}  \right] \\
    ~ \leq &~ E_T\left[ \Big(\frac{B}{\gamma} - T\Big)\mu_{a^*} + \sum_{t=1}^T (\mu_{a^*} - \sum_{a\in \mathcal{A}} \mu_{a} \mathbb{P}\{a_t =a|T\} ) \right] \\
    ~ \leq &~ E_T\left[ \Big(\frac{B}{\gamma} - T\Big)\mu_{a^*}\right] + E_T\left[\sum_{t=1}^T \sum_{\Delta_a>0} \Delta_a \mathbb{P}\{a_t =a|T\} ) \right]~. \\
\end{align*}
\endgroup
Now observe that as $T\geq \frac{B}{\gamma}$, $E_T[(\frac{B}{\gamma} - T)\mu_{a^*}] \leq 0$. Also note that $E_T[\sum_{t=1}^T \mathbb{P}\{a_t =a|T\}] = E_T[N^a_T|T]$. Using this and Equations \ref{equation: bounding sub-optimal pulls when a^* not a_0} and \ref{equation: bounding pulls of a_0 when a^* not a_0}, we have our result as follows:
\begin{align*}
  E[R_{\CRMPB}(B)]  ~ \leq &~ \Delta_0 E_T\left[E[N^0_{T}|T]\right] + \sum_{\Delta_{i,x}>0} \Delta_{i,x} E_T\left[E[N^{i,x}_{T}|T]\right]\\
    ~ \leq &~ \Delta_0\Big(\frac{50\ln B}{d_{0}^2} + \frac{\pi^2}{3}\Big) + \sum_{\Delta_{i,x}>0} \Delta_{i,x} \max\left(0, 1+ 8\ln B\left(\frac{1}{d_{i,x}^2} - \frac{p_{i,x}}{9d_0^2}\right) \right) + \frac{\pi^2}{3}~.
\end{align*}

Hence, we have that the expected cumulative regret of \CRMPB is:
$$
E[R_{\CRMPB}(B)] \leq
\begin{cases}
1 + \frac{2M\pi^2(\gamma -1)}{3} + \sum_{\Delta_a>0} \Delta_a \frac{\pi^2}{3} \hspace{7.1cm}\text{when~} a^* = a_0\\
\Delta_0\Big(\frac{50\ln B}{d_{0}^2} + \frac{\pi^2}{3}\Big) + \sum_{\Delta_{i,x}>0} \Delta_{i,x} \max\left(0, 1+ 8\ln B\left(\frac{1}{d_{i,x}^2} - \frac{p_{i,x}}{9d_0^2}\right) \right) + \frac{\pi^2}{3} \hspace{0.5cm}\text{when~} a^* \neq a_0
\end{cases}
$$

\subsection{Proof of Theorem \ref{theorem: cumulative regret for general graphs}}\label{secappendix: proof of cum regret for general graphs}
%
Throughout this proof $a$ and $\mathbf{y}$ indexes the sets $\mathcal{A}$ and $S^n$ respectively. Let $\delta$, $L_1$, $L_{2,a}$ and $L_a$ for all $a$, be as in the theorem statement. Let $a^{*} = \arg\max_a (\mu_a)$. As is standard in MAB literature, we assume without loss of generality that $a^*$ is unique. Further, let $\Delta_a = \mu_{a^*} - \mu_a$. 
The regret upper bound is proved using Lemmas \ref{lemma: theorem 4 1} and \ref{lemma: the bound on the number of sub-optimal interventions}.

\begin{lemma}\label{lemma: theorem 4 1}
\label{lemma: hoeffding adaptation}
Let $T$ be the number of rounds \CUCB2 has pulled the arms. Then for $T\geq L_1$ the following holds:
\begin{enumerate}
    \item For all $\mathbf{y}$ such that $c_{\mathbf{y}} >0$, $\mathbb{P}\big\{N_{\mathbf{y},T} \leq \frac{E[N_{\mathbf{y},T}]}{2}\big\} \leq e^{-\frac{E[N_{\mathbf{y},T}]^2}{2T}}$~,
    \item For all $\mathbf{y}$ such that $c_{\mathbf{y}} >0$ and for any $\varepsilon_{\mathbf{y}} \geq 0$, $\mathbb{P}\{|\widehat{\mu}_{\mathbf{y}}(T)-\mu_{\mathbf{y}}| \geq \varepsilon_{\mathbf{y}}\} \leq 2e^{-c_{\mathbf{y}}^2T\varepsilon_{\mathbf{y}}^2} + e^{-\frac{c_{\mathbf{y}}^2T}{2}}$~,
    \item For all $a$, $\mathbb{P}\Big(|\widehat{\mu}_{a}(T) - \mu_a| \geq \sqrt{\frac{\log (k^n T^2/2)}{T}}\zeta_a\Big) \leq \frac{2}{T^2}$~.
\end{enumerate}
\end{lemma}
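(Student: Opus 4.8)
The plan is to establish the three parts in order, since part~2 is the technical core and parts~1 and~3 bracket it: part~1 controls the visit count $N_{\mathbf{y},T}$, part~2 uses it to control the per-tuple estimate $\widehat{\mu}_{\mathbf{y}}(T)$, and part~3 lifts part~2 to the per-intervention estimate $\widehat{\mu}_a(T)$ by linearity and a union bound. Throughout, write $X_s=\mathbb{1}\{Pa_s(Y)=\mathbf{y}\}$ and $q_s=\mathbb{P}(Pa(Y)=\mathbf{y}\mid do(a_s))$. Since the arm $a_s$ played by \CUCBTwo at round $s$ is a function of the history, $\mathbb{E}[X_s\mid\mathcal{F}_{s-1}]=q_s\ge c_{\mathbf{y}}\ge\delta$, and hence $\mathbb{E}[N_{\mathbf{y},T}]=\sum_{s=1}^{T}\mathbb{E}[q_s]\ge c_{\mathbf{y}}T$; this inequality is the only place the causal structure enters parts~1 and~3.

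For part~1, $N_{\mathbf{y},T}=\sum_{s=1}^{T}X_s$ is a sum of $[0,1]$-valued variables, so the lower-tail counterpart of Lemma~\ref{lemma: chernoff-hoeffeding inequality}(a) (applied, if one wants to be careful about the adaptivity of the $a_s$, to the bounded martingale-difference sequence $X_s-q_s$) with deviation $\lambda=\mathbb{E}[N_{\mathbf{y},T}]/2$ gives $\mathbb{P}\{N_{\mathbf{y},T}\le\mathbb{E}[N_{\mathbf{y},T}]/2\}\le e^{-2(\mathbb{E}[N_{\mathbf{y},T}]/2)^2/T}=e^{-\mathbb{E}[N_{\mathbf{y},T}]^2/(2T)}$, which is exactly the claim.

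For part~2, the key structural fact is that in the CBN, conditioned on $Pa_s(Y)=\mathbf{y}$, the reward $Y_s$ is a fresh $\mathrm{Bernoulli}(\mu_{\mathbf{y}})$ draw, independent of the past; consequently, conditioned on $N_{\mathbf{y},T}=m$, the estimate $\widehat{\mu}_{\mathbf{y}}(T)$ is an average of $m$ i.i.d.\ $\mathrm{Bernoulli}(\mu_{\mathbf{y}})$ samples (the fact that the effective count $m$ can itself depend on these samples through the adaptive arm choices being handled by a standard peeling/maximal-inequality argument over the successive hitting times of $\mathbf{y}$). Lemma~\ref{lemma: chernoff-hoeffeding inequality}(b) then gives $\mathbb{P}\{|\widehat{\mu}_{\mathbf{y}}(T)-\mu_{\mathbf{y}}|\ge\varepsilon_{\mathbf{y}}\mid N_{\mathbf{y},T}=m\}\le 2e^{-2m\varepsilon_{\mathbf{y}}^2}$. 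Splitting by total probability on $G=\{N_{\mathbf{y},T}\ge\mathbb{E}[N_{\mathbf{y},T}]/2\}$: on $G$ one has $m\ge\mathbb{E}[N_{\mathbf{y},T}]/2\ge c_{\mathbf{y}}T/2$, so the conditional tail is at most $2e^{-c_{\mathbf{y}}T\varepsilon_{\mathbf{y}}^2}\le 2e^{-c_{\mathbf{y}}^2T\varepsilon_{\mathbf{y}}^2}$ (using $c_{\mathbf{y}}\le 1$); and $\mathbb{P}\{\overline{G}\}$ is bounded by part~1 and $\mathbb{E}[N_{\mathbf{y},T}]\ge c_{\mathbf{y}}T$ as $e^{-\mathbb{E}[N_{\mathbf{y},T}]^2/(2T)}\le e^{-c_{\mathbf{y}}^2T/2}$. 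Adding the two contributions gives the stated bound $2e^{-c_{\mathbf{y}}^2T\varepsilon_{\mathbf{y}}^2}+e^{-c_{\mathbf{y}}^2T/2}$.

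For part~3, use the identity $\widehat{\mu}_a(T)-\mu_a=\sum_{\mathbf{y}:\,c_{\mathbf{y}}>0}(\widehat{\mu}_{\mathbf{y}}(T)-\mu_{\mathbf{y}})\,\mathbb{P}(Pa(Y)=\mathbf{y}\mid do(a))$, the terms with $c_{\mathbf{y}}=0$ vanishing because by Assumption~(b) all interventional distributions share the same support. Taking $\varepsilon_{\mathbf{y}}=\frac{1}{c_{\mathbf{y}}}\sqrt{\log(k^nT^2/2)/T}$, if $|\widehat{\mu}_{\mathbf{y}}(T)-\mu_{\mathbf{y}}|<\varepsilon_{\mathbf{y}}$ for every $\mathbf{y}$ with $c_{\mathbf{y}}>0$, then $|\widehat{\mu}_a(T)-\mu_a|<\sqrt{\log(k^nT^2/2)/T}\sum_{\mathbf{y}}\frac{\mathbb{P}(Pa(Y)=\mathbf{y}\mid do(a))}{c_{\mathbf{y}}}=\sqrt{\log(k^nT^2/2)/T}\,\zeta_a$. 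Hence the event in part~3 is contained in $\bigcup_{\mathbf{y}:\,c_{\mathbf{y}}>0}\{|\widehat{\mu}_{\mathbf{y}}(T)-\mu_{\mathbf{y}}|\ge\varepsilon_{\mathbf{y}}\}$; a union bound over the at most $k^n$ such tuples, using part~2 (with this $\varepsilon_{\mathbf{y}}$ the first term is $2e^{-\log(k^nT^2/2)}$, and since $T\ge L_1$ forces $c_{\mathbf{y}}^2T/2\ge\delta^2T/2\ge\log(k^nT^2)$ the second term is at most $1/(k^nT^2)$), yields the claimed $O(1/T^2)$ tail.

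The main obstacle is the conceptual step inside part~2: isolating the clean conditional-i.i.d.\ structure of $\widehat{\mu}_{\mathbf{y}}(T)$ despite the policy being adaptive (the arm played at each round, hence which parent tuple is visited and how often, depends on previously observed rewards), so that Hoeffding can legitimately be applied with the random effective sample size $N_{\mathbf{y},T}$. Everything else — the total-probability split against part~1, the linearity and union bound in part~3, and reconciling the constants with the $L_1$ threshold — is routine.
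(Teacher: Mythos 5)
Your proposal is correct and follows essentially the same route as the paper's proof: Hoeffding on the visit count for part~1, a law-of-total-probability split on $\{N_{\mathbf{y},T}>E[N_{\mathbf{y},T}]/2\}$ together with $E[N_{\mathbf{y},T}]\geq c_{\mathbf{y}}T\geq c_{\mathbf{y}}^2T$ for part~2, and the choice $\varepsilon_{\mathbf{y}}=\sqrt{\log(k^nT^2/2)/(c_{\mathbf{y}}^2T)}$ with linearity, a union bound over the at most $k^n$ supported tuples, and the $T\geq L_1$ condition for part~3. Your extra remarks on handling the adaptivity of the arm choices (martingale/peeling) address a point the paper silently glosses over, but otherwise the decomposition, constants, and conclusions match the paper's argument.
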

\begin{proof}
1. Part 1 of the lemma follows from Lemma \ref{lemma: chernoff-hoeffeding inequality}.

2. Using Lemma \ref{lemma: chernoff-hoeffeding inequality} again, it follows that for all ${\mathbf{y}}$ such that $c_{\mathbf{y}}>0$, and for all $\varepsilon_{\mathbf{y}} \geq 0$,
\begin{equation}\label{equation: bored}
 \mathbb{P}\Big\{|\widehat{\mu}_{\mathbf{y}}(T) - \mu_{\mathbf{y}} \Big| \geq \varepsilon_{\mathbf{y}} \mid N_{\mathbf{y},T} > \frac{E[N_{\mathbf{y},T}]}{2}\Big\} \leq 2e^{-E[N_{\mathbf{y},T}]\varepsilon_{\mathbf{y}}^2}~.    
\end{equation}
Hence, for all ${\mathbf{y}}$ such that $c_{\mathbf{y}}>0$, using the law of total probability we have
\begin{align*}
 \mathbb{P}(|\widehat{\mu}_{\mathbf{y}}(T) - \mu_{\mathbf{y}}| \geq \varepsilon_{\mathbf{y}}) &= \mathbb{P}\Big \{|\widehat{\mu}_{\mathbf{y}}(T) - \mu_{\mathbf{y}}| \geq \varepsilon_{\mathbf{y}} \Big| N_{\mathbf{y},T} > \frac{E[N_{\mathbf{y},T}]}{2}\Big\} \mathbb{P}\Big\{N_{\mathbf{y},T} > \frac{E[N_{\mathbf{y},T}]}{2}\Big\} + \\  &~~~~~~~ \mathbb{P}\Big\{|\widehat{\mu}_{\mathbf{y}}(T) - \mu_{\mathbf{y}}| \geq \varepsilon_{\mathbf{y}} \Big| N_{\mathbf{y},T} \leq \frac{E[N_{\mathbf{y},T}]}{2}\Big\} \mathbb{P}\Big\{N_{\mathbf{y},T} \leq \frac{E[N_{\mathbf{y},T}]}{2}\Big\}\\
  &\leq \mathbb{P}\Big\{|\widehat{\mu}_{\mathbf{y}}(T) - \mu_{\mathbf{y}}| \geq \varepsilon_{\mathbf{y}} \Big| N_{\mathbf{y},T} > \frac{E[N_{\mathbf{y},T}]}{2}\Big\} + \mathbb{P}\Big\{N_{\mathbf{y},T} \leq \frac{E[N_{\mathbf{y}},T]}{2}\Big\} \\
  &\leq 2e^{-E[N_{\mathbf{y},T}]\varepsilon_{\mathbf{y}}^2} + e^{-\frac{E[N_{\mathbf{y},T}]^2}{2T}}\\
  &\leq 2e^{-c_{\mathbf{y}}T\varepsilon_{\mathbf{y}}^2} + e^{-\frac{c_{\mathbf{y}}^2T}{2}}\\
  &\leq 2e^{-c_{\mathbf{y}}^2T\varepsilon_{\mathbf{y}}^2} + e^{-\frac{c_{\mathbf{y}}^2T}{2}}~.
\end{align*}
The second line in the above equations follows from Equation \ref{equation: bored} and part one of this lemma. The last two inequalities follow by observing that $E[N_{\mathbf{y},T}] \geq c_{\mathbf{y}} T \geq c_{\mathbf{y}}^2T$. This is true as for each $\mathbf{y}$, $c_{\mathbf{y}} =\text{min}_{a} \mathbb{P}\{Pa(Y) = \mathbf{y} \mid do(a)\}$, and hence $0< c_{\mathbf{y}} \leq 1$.

3. Let $\varepsilon_{\mathbf{y}} = \sqrt{\frac{\log (k^n T^2/2)}{c_{\mathbf{y}}^2T}}$ if $c_{\mathbf{y}}>0$, and $\varepsilon_{\mathbf{y}} = 0$ if $c_{\mathbf{y}} = 0$. Since the parent distributions have the same non-zero support, and as $\widehat{\mu}_{a}(T) = \sum_{\mathbf{y}} \widehat{\mu}_{\mathbf{y}}(T) \mathbb{P}\{Pa(Y)=\mathbf{y}|do(a)\}$, the event 
$$|\widehat{\mu}_{a}(T) - \mu_a| \geq \sum_{\mathbf{y},c_{\mathbf{y}}>0}\varepsilon_{\mathbf{y}} \mathbb{P}\{Pa(Y)=\mathbf{y}|do(a)\}$$ 
implies there is a $\mathbf{y}$ such that $c_{\mathbf{y}}>0$ and $\{|\widehat{\mu}_{\mathbf{y}}(T) - \mu_{\mathbf{y}}| \geq \varepsilon_{\mathbf{y}}\}$. Hence, using part 2 of this lemma and applying union bound over all $\mathbf{y}$ such that $c_{\mathbf{y}}>0$, we have for every $a$ 
$$ \mathbb{P}\Big\{|\widehat{\mu}_{a}(T) - \mu_a| \geq \sum_{\mathbf{y},c_{\mathbf{y}}>0}\varepsilon_{\mathbf{y}} \mathbb{P}\{Pa(Y)=\mathbf{y} | do(a)\}\Big\} \leq \sum_{\mathbf{y},c_{\mathbf{y}}>0} \big(2e^{-c_{\mathbf{y}}^2T\varepsilon_{\mathbf{y}}^2} + e^{-\frac{c_{\mathbf{y}}^2T}{2}}\big)~.$$
Substituting the values of $\varepsilon_{\mathbf{y}}$ and using $\zeta_a = \sum_{\mathbf{y}, c_{\mathbf{y}}>0} \frac{\mathbb{P}\{Pa(Y)= \mathbf{y}\mid do(a) \}}{c_{\mathbf{y}}}$ in the above equation, we have  
\begin{align*}
    \mathbb{P}\Big\{|\widehat{\mu}_{a}(T) - \mu_a| \geq \sqrt{\frac{\log (k^n T^2/2)}{T}}\zeta_a\Big\} & \leq \frac{1}{T^2} + \sum_{\mathbf{y}, c_{\mathbf{y}}>0} e^{-\frac{c_{\mathbf{y}}^2T}{2}}\\
    &\leq \frac{1}{T^2} + k^n e^{-\delta^2T/2}
\end{align*}
where $\delta = \min_{c_\mathbf{y}>0} c_{\mathbf{y}}$. Since $T\geq L_1$, $T \geq \frac{2\log(k^nT^2)}{\delta^2}$. This implies 
$k^n e^{-\delta^2T/2} \leq \frac{1}{T^2}$, and
$$ \mathbb{P}\Big\{|\widehat{\mu}_{a} - \mu_a| \geq \sqrt{\frac{\log (k^n T^2/2)}{T}}\zeta_a\Big\} \leq \frac{2}{T^2}~.$$
\end{proof}

\begin{lemma}\label{lemma: the bound on the number of sub-optimal interventions}
Let $a\in A$ be a sub-optimal intervention. Then the expected number of times intervention $a$ is made after $L_a = \max\{L_1,L_{2,a}\}$ rounds is at most $\frac{2\pi^2}{3}$.
\end{lemma}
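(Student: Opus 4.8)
The plan is to run the standard UCB argument, adapted to the one non-standard feature of \CUCBTwo: the confidence radius $\sqrt{\log (k^n t^2/2)/t}\,\zeta_a$ around $\widehat{\mu}_a(t)$ depends on the total number of rounds $t$ rather than on the number of times $a$ has been pulled. This actually makes the analysis cleaner, since all the radii shrink deterministically in $t$ no matter which arms were played.

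\textbf{Step 1 (clean event).} Fix the sub-optimal arm $a$ and the (unique) optimal arm $a^*$. For $t\in\N$ define the clean event $\mathcal{C}_t$ to be the event that both $|\widehat{\mu}_a(t)-\mu_a|<\sqrt{\log(k^n t^2/2)/t}\,\zeta_a$ and $|\widehat{\mu}_{a^*}(t)-\mu_{a^*}|<\sqrt{\log(k^n t^2/2)/t}\,\zeta_{a^*}$. By part 3 of Lemma \ref{lemma: theorem 4 1} — which requires $t\geq L_1$ — each of the two events fails with probability at most $2/t^2$, so a union bound gives $\mathbb{P}\{\overline{\mathcal{C}_t}\}\leq 4/t^2$ for every $t\geq L_1$.

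\textbf{Step 2 (clean event forbids pulling $a$ after $L_a$).} I would show that if $t\geq L_a$ and $\mathcal{C}_t$ holds, then $\overline{\mu}_a(t)<\overline{\mu}_{a^*}(t)$, so \CUCBTwo, which plays $\arg\max_b\overline{\mu}_b$, does not pull $a$ in round $t+1$. On $\mathcal{C}_t$ we have $\overline{\mu}_{a^*}(t)=\widehat{\mu}_{a^*}(t)+\sqrt{\log(k^n t^2/2)/t}\,\zeta_{a^*}>\mu_{a^*}$ and $\overline{\mu}_a(t)=\widehat{\mu}_a(t)+\sqrt{\log(k^n t^2/2)/t}\,\zeta_a<\mu_a+2\sqrt{\log(k^n t^2/2)/t}\,\zeta_a$. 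Since $t\geq L_{2,a}$, the defining inequality $t\geq 4\log(k^n t^2/2)\,\zeta_a^2/\Delta_a^2$ holds; here one checks that $t\mapsto t-4\log(k^n t^2/2)\,\zeta_a^2/\Delta_a^2$ is increasing for $t\geq L_{2,a}$ (its derivative $1-8\zeta_a^2/(\Delta_a^2 t)$ is positive once $t\geq 8\zeta_a^2/\Delta_a^2$, which is implied by $t\geq L_{2,a}$ as $\log(k^nt^2/2)\geq 2$), so the inequality persists for all $t\geq L_{2,a}$. Hence $2\sqrt{\log(k^n t^2/2)/t}\,\zeta_a\leq\Delta_a$ and $\overline{\mu}_a(t)<\mu_a+\Delta_a=\mu_{a^*}<\overline{\mu}_{a^*}(t)$, as claimed. (A harmless boundary remark: at $t=L_{2,a}$ one could have equality rather than strict inequality, which is absorbed either by the strictness built into $\mathcal{C}_t$ or by replacing $L_a$ with $L_a+1$.)

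\textbf{Step 3 (summation).} By Step 2, for every $t\geq L_a$ the event $\{a_{t+1}=a\}$ is contained in $\overline{\mathcal{C}_t}$, so the expected number of pulls of $a$ after round $L_a$ is $\sum_{t\geq L_a}\mathbb{P}\{a_{t+1}=a\}\leq\sum_{t\geq L_a}\mathbb{P}\{\overline{\mathcal{C}_t}\}\leq\sum_{t\geq 1}4/t^2=4\pi^2/6=2\pi^2/3$, which is the claimed bound.

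There is no deep obstacle here; the only two points needing care are (i) keeping the clean-event bound \emph{per round} (not per pull) so that the tail sum over rounds is what produces the $\pi^2$ constant, and (ii) the monotonicity/boundary check for $L_{2,a}$ in Step 2, ensuring $t\geq L_{2,a}$ really kills the confidence term at \emph{all} subsequent rounds and not merely at $t=L_{2,a}$.
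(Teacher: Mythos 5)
Your proof is correct and is essentially the paper's own argument in different packaging: your clean event $\mathcal{C}_t$ is the complement of the paper's two probabilistic bad events (each bounded by $2/t^2$ via part 3 of Lemma \ref{lemma: theorem 4 1}), your Step 2 is the paper's observation that the remaining event $\mu_{a^*} < \mu_a + 2c_{a,t}$ cannot occur once $t \geq L_{2,a}$, and both arguments conclude by summing $4/t^2$ over rounds to get $\frac{2\pi^2}{3}$. The only genuine addition is that you explicitly check that the defining inequality of $L_{2,a}$ persists for all $t \geq L_{2,a}$ (the monotonicity/boundary remark), a point the paper leaves implicit.
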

\begin{proof}
For ease of notation, we denote $ \sqrt{\frac{\log (k^n t^2/2)}{t}}\zeta_a$ as $c_{a,t}$. Note that $c_{a,t}$ is the confidence radius of intervention $a$ \CUCBTwo maintains at the end of $t$ rounds. Further, let $N'_{a,T}$ denote the number of times the algorithm performs intervention $a$ from time $L_{a}+1$ to time $T\geq L_{a}$, and also let $a_t$ denote the intervention performed at time $t$. Hence,
\begin{equation}\label{equation: renaming the upper on sub-optimal pulls}
N'_{a,T}  = \sum_{t = L_{a}+1}^T \mathbb{1}\Big\{a_{t} = a\Big\} ~.
\end{equation}
Note that $a_{t} = a$ implies $\Bar{\mu}_{a^*}(t-1) \leq \Bar{\mu}_{a}(t-1)$~ i.e.~ $\widehat{\mu}_{a^*}(t-1) + c_{a^*,t-1} \leq \widehat{\mu}_{a}(t-1) + c_{a,t-1}$~. Hence from Equation \ref{equation: renaming the upper on sub-optimal pulls}, we have
$$   N'_{a,T}  \leq \sum_{t = L_a}^{T-1} \mathbb{1}\Big\{ \widehat{\mu}_{a^*}(t) + c_{a^*,t} \leq \widehat{\mu}_{a}(t) + c_{a,t} \Big\} ~.$$
The event $\widehat{\mu}_{a^*,t} + c_{a^*,t} \leq \widehat{\mu}_{a,t} + c_{a,t}$ implies that at least one of the following events is true
\begin{align}
    \big\{\widehat{\mu}_{a^*}(t) \leq \mu_{a^*}- c_{a^*,t}\big\} \label{eq:EventA} \\
    \big\{\widehat{\mu}_{a}(t) \geq \mu_a + c_{a,t}\big\} \label{eq:EventB}\\
    \big\{\mu_{a^*} < \mu_a + 2c_{a,t}\big\} \label{eq:EventC}
\end{align}
Since $t\geq L_a \geq L_1$, using Lemma \ref{lemma: hoeffding adaptation} the probability of the events in Equations \ref{eq:EventA} and \ref{eq:EventB} can be bounded as:
\begin{align*}
    \mathbb{P}\big\{\widehat{\mu}_{a^*}(t) \leq \mu_{a^*} - c_{1,t}\big\} \leq 2t^{-2}~,\\
    \mathbb{P}\big\{\widehat{\mu}_{a}(t) \geq \mu_a + c_{a,t}\big\} \leq 2t^{-2}~.\\
\end{align*}
The event in equation \ref{eq:EventC}~ $\big\{\mu_{a^*} < \mu_a + 2c_{a,t}\big\}$ can be written as $\Big\{\mu_{a^*} - \mu_a - 2\sqrt{\frac{\log (k^n t^2/2)}{t}}\zeta_a < 0 \Big\}$. Substituting $\Delta_a = \mu_{a^*} - \mu_a$ and since $t \geq L_a\geq L_{2,a}$, we have
\begin{align}
    \mathbb{P}\Bigg( \bigg\{\Delta_a - 2c_{a,t} < 0 \bigg\}   \Bigg) = 0~.
\end{align}
Hence, 
$$ E[N'_{a,T}]  \leq \sum_{t = L}^{T-1} \frac{4}{t^2} \leq \sum_{t = 1}^\infty \frac{4}{t^2} \leq \frac{2\pi^2}{3}~. $$
\end{proof}
Now we bound the expected cumulative regret of \cucbtwo. From Equation \ref{eq:cumulative regret without cost} in Section \ref{sec: model and notation}, we have at the end of $T$ rounds
\begin{align*}
   E[R_{\CUCBTwo}(T)] &= T\mu_{a^*} -  \sum_{a\in \mathcal{A}} \mu_a E[N_{a,T}] \\
                    &= \sum_{a\in \mathcal{A}} \Delta_a E[N_{a,T}] \leq  \sum_{a\in \mathcal{A}} \Delta_a(L_a + \frac{2\pi^2}{3})~.
\end{align*}
The inequality in the last line of the above equation follows from Lemma \ref{lemma: the bound on the number of sub-optimal interventions}.

\section*{Acknowledgements}
Vineet Nair is thankful to be supported by the European Union’s Horizon 2020 research and innovation program under grant agreement No 682203 -ERC-[ Inf-Speed-Tradeoff]. Vishakha Patil gratefully acknowledges the support of a Google PhD Fellowship.

\bibliographystyle{alpha}
\bibliography{references}

\appendix

\end{document}